%% file: v2.tex
\documentclass{article}

\usepackage{PRIMEarxiv}

\usepackage[utf8]{inputenc} 
\usepackage[T1]{fontenc}    
\usepackage{hyperref}       
\usepackage{url}            
\usepackage{booktabs}       
\usepackage{amsfonts}       
\usepackage{amsmath, amssymb, amsthm}
\usepackage{nicefrac}       
\usepackage{microtype}      
\usepackage{lipsum}
\usepackage{fancyhdr}       
\usepackage{graphicx}       
\usepackage{subcaption}
\usepackage{xcolor}
\usepackage{algorithm}
\usepackage{algorithmic}

\usepackage{listings}
\usepackage{tocloft}  
\usepackage{enumitem}
\setlist[itemize]{leftmargin=1.5em}
\setlist[enumerate]{leftmargin=1.5em}

\usepackage[backend=biber,style=numeric]{biblatex}
\graphicspath{{media/}}     

\newcommand{\ELEMENTARY}{%
  \includegraphics[height=1.0em,keepaspectratio,width=!]{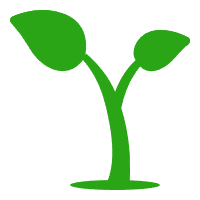}%
}
\newcommand{\TECHNICAL}{%
  \includegraphics[height=1.1em,keepaspectratio,width=!]{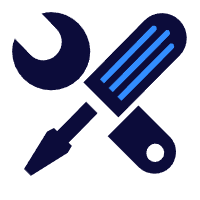}%
}
\newcommand{\CODE}{%
  \includegraphics[height=1.2em,keepaspectratio,width=!]{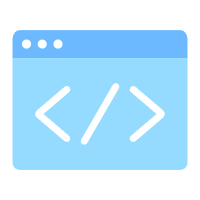}%
}
\newcommand{\THEORY}{%
  \includegraphics[height=1.0em,keepaspectratio,width=!]{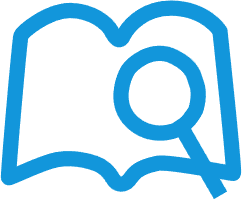}%
}
\newcommand{\EXPERIMENT}{%
  \includegraphics[height=1.0em,keepaspectratio,width=!]{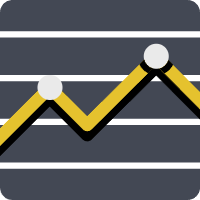}%
}
\newcommand{\SCI}{%
  \includegraphics[height=1.2em,keepaspectratio,width=!]{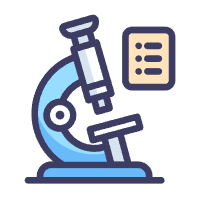}%
}

\theoremstyle{plain}
\newtheorem{theorem}{Theorem}[section]   

\theoremstyle{definition}
\newtheorem{definition}[theorem]{Definition}
\newtheorem{example}[theorem]{Example}

\theoremstyle{remark}

\title{Efficient and Principled Scientific Discovery through Bayesian Optimization: A Tutorial
}

\makeatletter
\def\blfootnote{\xdef\@thefnmark{}\@footnotetext}
\makeatother
\newcommand{\authornotes}{
\blfootnote{$\dagger$ Equal contribution. $\ddagger$ Corresponding authors.}
\blfootnote{Emails by order: 
\texttt{zhongwei-yu@outlook.com},
\texttt{\{rasul.tutunov, alexandre.maraval1\}@huawei.com},
\texttt{zikaix@ustc.edu.cn},
\texttt{tzz24@mails.tsinghua.edu.cn},
\texttt{wangjiankang@connect.hku.hk},
\texttt{bcao686@connect.hkust-gz.edu.cn},
\texttt{goldenli@connect.hku.hk},
\texttt{xuliang0826999@gmail.com},
\texttt{yangqi@hlsct.cn},
\texttt{jiangj1@ustc.edu.cn},
\texttt{luosz@tsinghua.edu.cn},
\texttt{zhenxiao.guo@hku.hk},
\texttt{mezhangt@hkust-gz.edu.cn},
\texttt{haitham.ammar@huawei.com},
\texttt{jun.wang@cs.ucl.ac.uk}
}
}

\author{
\textbf{Zhongwei Yu}\textsuperscript{1}, 
\textbf{Rasul Tutunov}\textsuperscript{2,$\dagger$}, 
\textbf{Alexandre Max Maraval}\textsuperscript{2,$\dagger$}, 
\textbf{Zikai Xie}\textsuperscript{3,$\dagger$}, 
\textbf{Zhenzhi Tan}\textsuperscript{4,$\dagger$},  \\
\textbf{Jiankang Wang}\textsuperscript{5,$\dagger$}, 
\textbf{Bin Cao}\textsuperscript{1,2, $\dagger$},
\textbf{Zijing Li}\textsuperscript{5}, 
\textbf{Liangliang Xu}\textsuperscript{5}, 
\textbf{Qi Yang}\textsuperscript{4,6,$\ddagger$}, 
\textbf{Jun Jiang}\textsuperscript{3,$\ddagger$},  \\
\textbf{Sanzhong Luo}\textsuperscript{4,$\ddagger$}, 
\textbf{Zhenxiao Guo}\textsuperscript{5,$\ddagger$}, 
\textbf{Tongyi Zhang}\textsuperscript{1,7,$\ddagger$}, 
\textbf{Haitham Bou-Ammar}\textsuperscript{2,$\ddagger$}, 
\textbf{Jun Wang}\textsuperscript{8,$\ddagger$}
\vspace{0.6em} \\
\small
\textsuperscript{1}The Hong Kong University of Science and Technology (Guangzhou) \quad
\textsuperscript{2}Huawei Noah's Ark Lab \\
\textsuperscript{3}University of Science and Technology of China \quad
\textsuperscript{4}Tsinghua University \\
\textsuperscript{5}The University of Hong Kong \quad
\textsuperscript{6}Haihe Laboratory of Sustainable Chemical Transformations \\
\textsuperscript{7}Shanghai University \quad
\textsuperscript{8}University College London
}

\begin{document}

\maketitle

\begin{figure}[!h]
    \centering
    \includegraphics[width=1\linewidth]{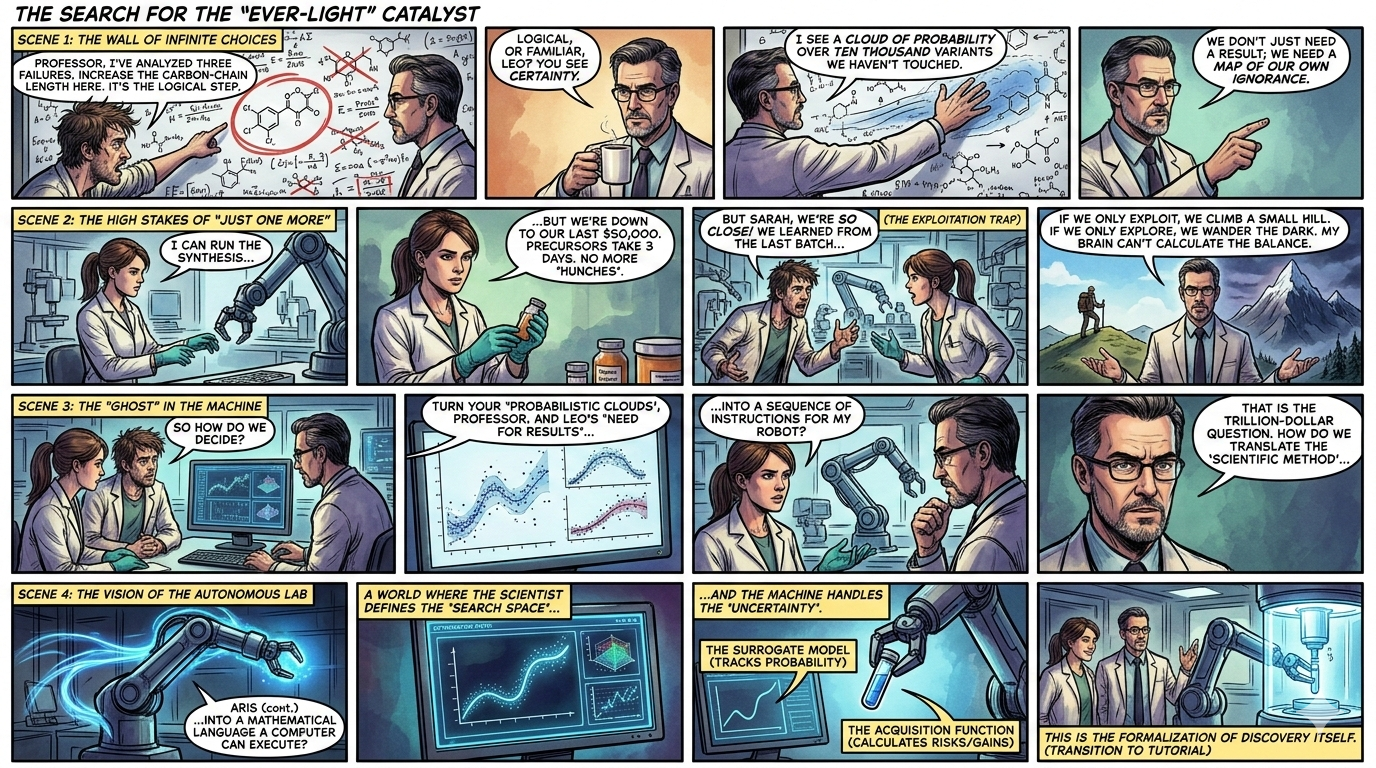}
    \caption{
        A four-panel comic illustrates the conceptual journey of translating human scientific intuition into a formalised, autonomous discovery framework for identifying a high-performance ``ever-light'' catalyst. \emph{All characters, dialogues, and events depicted in this narrative are entirely fictional and are used solely for illustrative purposes.}
        The main characters are \textbf{Professor Aris} (senior researcher), \textbf{Leo} (PhD student), and \textbf{Sarah} (research engineer).
        The four scenes:
        (1) \textit{The Wall of Infinite Choices} highlights the complexity of the chemical search space and the tension between logical certainty and probabilistic exploration;
        (2) \textit{The High Stakes of ``Just One More''} exposes the ``exploitation trap'' of settling for local optima under resource constraints versus unguided exploration;
        (3) \textit{The ``Ghost'' in the Machine} poses the question of translating uncertainty into actionable instructions for autonomous experimentation;
        (4) \textit{The Vision of the Autonomous Lab} resolves this tension via Bayesian optimization, where surrogate models quantify uncertainty, acquisition functions balance exploration/exploitation, and autonomous labs navigate the search space efficiently. 
    }
    \label{fig:comic_intro}
\end{figure}

\input{v2/abstract}

\authornotes

\newpage
\setcounter{tocdepth}{1}
\tableofcontents

\section*{Preliminary Note}
In order to help readers navigate the material and quickly identify which parts are most relevant to their background and interests, most sections are tagged by small visual markers (icons). Each marker corresponds to a particular style of content and an intended audience, as described below:

\begin{itemize}
\setlength{\itemsep}{0em}
\item \ELEMENTARY: This section covers fundamental principles, concepts, and summaries. It is designed to be accessible to all readers, featuring elementary mathematics and no complex technical details.
\item \TECHNICAL: This section provides technical methodological details, requiring some background in machine learning or statistics.
\item \CODE: This section includes coding instructions and hands-on samples for practical implementation. Readers are expected to be familiar with the basic use of Python.
\item \THEORY: This section provides theoretical derivations and analysis, prepared for readers with proper mathematical background and interests in the theoretical rationales.
\item \EXPERIMENT: This section details experimental results and performance benchmarks.
\item \SCI: This section presents case studies of BO applied to real-world scientific discovery.
\end{itemize}

\include{v2/intro}

\include{v2/sci_as_opt}

\include{v2/surrogate}

\include{v2/decision}

\include{v2/algorithm}

\include{v2/conclusion}

\printbibliography

\end{document}

%% file: v2/abstract.tex
\begin{abstract}

Traditional materials development follows a well-established paradigm: researchers formulate hypotheses, conduct experiments, and refine theories based on empirical evidence. This cycle has driven scientific progress for centuries; however, it often scales poorly with increasing problem complexity. A typical materials optimization problem involves 5--15 design variables spanning composition ratios, processing temperatures, annealing times, and processing atmospheres. Each experimental iteration can take days to weeks and cost hundreds to thousands of dollars. This combinatorial explosion restricts researchers to exploring less than 0.1\% of the feasible design space, leaving vast regions of potentially superior materials undiscovered.


This tutorial introduces \emph{Bayesian Optimization} (BO). By modeling unknown mechanisms within a Bayesian framework, BO formalizes key scientific processes: surrogate models (e.g., Gaussian processes) provide continuously updated representations of empirical observations, while acquisition functions guide the selection of the most promising candidate experiments for validation and refinement. In this way, BO balances exploitation and exploration, improving information gain and reducing the number of required evaluations relative to conventional trial-and-error approaches.

In this tutorial, we show that scientific discovery is usually associated with objectives and thus, can be framed as optimisation problems. Then, we unpack key building blocks of BO, including surrogate models, Gaussian processes, acquisition functions. Consequently, we present algorithmic workflows and validate BO’s efficacy through real-world case studies in catalysis, materials science, organic synthesis, and molecule discovery. In addition, we also address critical technical extensions for scientific applications (e.g., batched experimentation, heteroscedasticity, contextual optimisation, integrating human in the loop), ensuring robustness and adaptability to the constraints of real-world laboratory and computational science.

Tailored for a broad scientific audience, this tutorial bridges BO advancements in artificial intelligence with practical implementation across natural sciences. It offers tiered content: practical coding examples for experimentalists seeking to accelerate their work; mathematical foundations for researchers developing domain-specific BO methods; and principled insights into uncertainty-aware decision-making for general readers. Ultimately, this tutorial empowers cross-disciplinary researchers to design experiments more efficiently, leading to more principled, accelerated scientific discovery.

\end{abstract}

%% file: v2/intro.tex
\section{Introduction}
\label{sec:intro}

At its core, scientific discovery is the systematic acquisition of knowledge regarding the natural world. This process has evolved through a succession of rigorous methodological frameworks \cite{hepburnScientificMethod2026}, as illustrated in Figure~\ref{fig:scientific_discovery_paradigms}. In early practice, Aristotelian demonstrative science deduced knowledge from first principles, such as axioms and postulates. Conversely, Francis Bacon pioneered the inductive approach, advocating for the systematic accumulation of empirical observations to reveal nature's underlying patterns. Later, these deductive and empirical threads were integrated into the Hypothetico-Deductive (H-D) model, in which scientists propose tentative hypotheses and test their logical consequences against experimental evidence. Karl Popper further refined this model by introducing falsificationism, asserting that scientific progress occurs not by proving theories true, but by systematically attempting to prove them false---ensuring that only the most robust explanations survive.

These historical methodologies are not merely relics; they constitute the diverse toolkit of modern science. In contemporary research, different methods are applied at distinct stages of investigation. For instance, the inductive approach has been revitalised by the ``Big Data'' era, where machine learning identifies patterns in vast datasets. The H-D method remains the gold standard for experimental design, while deductive logic provides the backbone for theoretical derivations and mathematical modelling. Together, these frameworks create a multi-layered approach to understanding complex phenomena.

However, scientific discovery is more than a static set of rules; it is a \emph{meta-methodology} that evolves over time. \textcite{kuhnStructureScientificRevolutions1994} famously argued that science progresses through paradigm shifts rather than steady accumulation. Kuhn distinguished between two facets of discovery: \textit{discovery-that} (the initial observation of an anomaly) and \textit{discovery-what} (the subsequent conceptualisation of that anomaly within a coherent theoretical framework) \cite{kuhnHistoricalStructureScientific1962, schindlerScientificDiscoveryThatWhats2015}. When ``discovery-thats'' (observations that existing paradigms cannot reconcile) reach a critical mass, the prevailing framework enters a crisis, ultimately leading to a new paradigm that redefines the ``discovery-what.''

\begin{figure}[tbp]
    \centering
    \includegraphics[width=\textwidth]{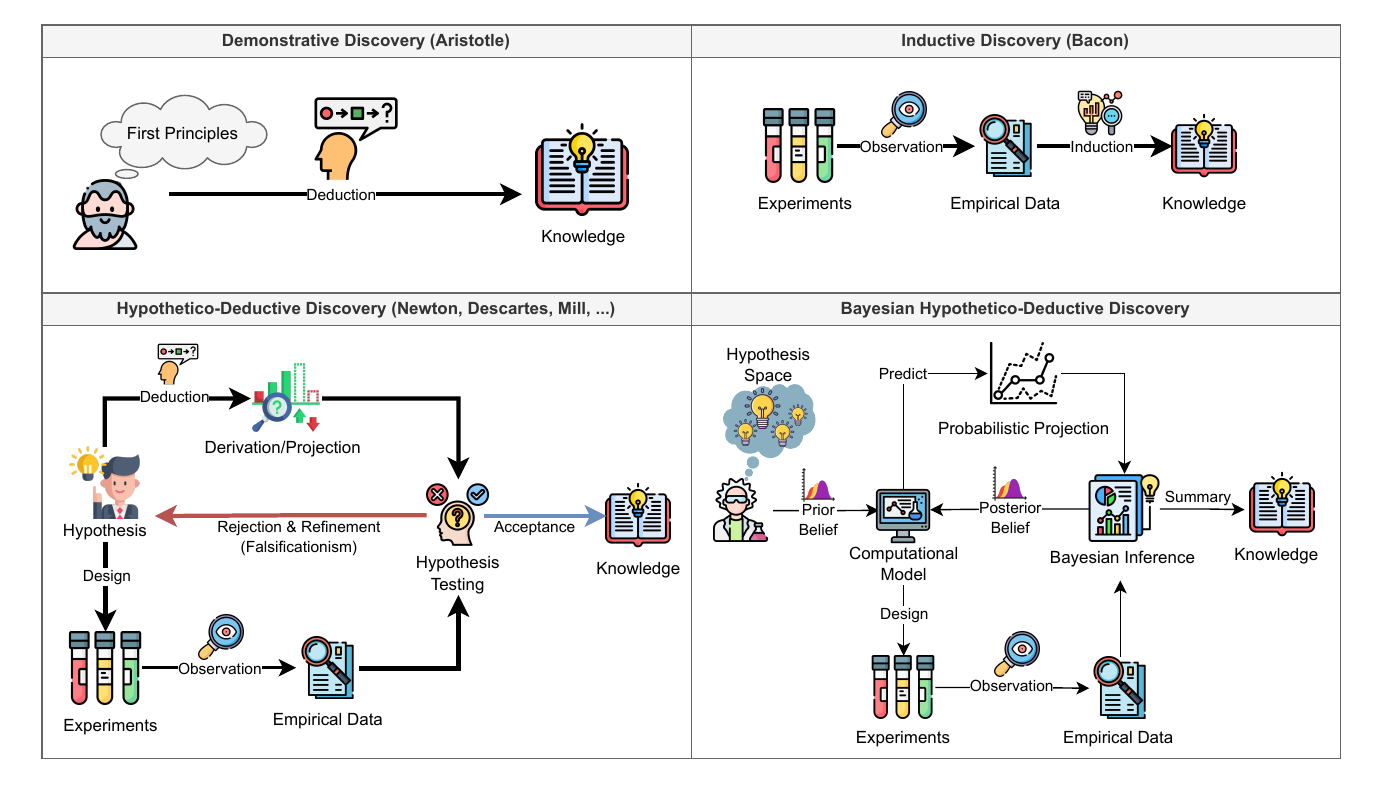}
    \caption{The illustration of core paradigms of scientific discovery with historical representatives. (i) \textbf{Demonstrative discovery} derives certain knowledge from self-evident first principles through formal logical deduction, , with the inferential process itself independent of empirical observation. (ii) \textbf{Inductive discovery} constructs general laws by systematically accumulating empirical data via controlled experiments, inducing patterns from an evidence base. (iii) \textbf{Hypothetico-deductive discovery} proposes tentative hypotheses, derives testable predictions, validates them against empirical data, and refines or rejects the hypotheses through consistency checks. (iv) \textbf{Bayesian hypothetico-deductive discovery} is an emergent modern methodology that quantifies degrees of belief across a hypothesis space and iteratively updates these beliefs via Bayesian inference; the resulting posterior provides both a summary of robust knowledge and guidance for decision-making.}
    \label{fig:scientific_discovery_paradigms}
\end{figure}

\subsection{The Bayesianism of Scientific Discovery}
\label{sec:intro:bayesian_discovery}

We are currently witnessing a new revolution in scientific methodology: the shift towards Bayesianism \cite{linBayesianEpistemology2024}, which extends the H-D paradigm by explicitly accounting for uncertainty over hypotheses (Figure~\ref{fig:scientific_discovery_paradigms}). In modern research, the hypothesis space (i.e., the set of all possible explanations) is often gargantuan or infinite. Because a hypothesis can only be absolutely certain if all competing alternatives are rejected (a rare feat given limited experimental resources), it is seldom feasible to declare a single hypothesis as definitively correct. To address this, the Bayesian H-D method acknowledges the inherent uncertainty that experimental evidence can rarely fully eliminate. It treats scientific discovery as the continuous updating of degrees of belief. This shifts the focus from the deterministic claims of the traditional H-D method to an adaptive belief distribution that is refined as new evidence emerges.

This distributional view revolutionises experimental design. Since the objective is no longer to validate a single deterministic claim, experimentation instead focuses on the belief dynamics across the entire hypothesis space. While integrating belief over such complex spaces was once computationally intractable, modern technology has made it feasible. Scientists can now embed hypothesis spaces into computational models, representing belief dynamics as computable functions of empirical data. Furthermore, rather than relying solely on human intuition, these models can serve as automated experiment designers. A notable application of this paradigm is \emph{Bayesian experimental design} (BED) \cite{rainforthModernBayesianExperimental2023}, where experiments are selected to maximise information gain.

Real-world science often extends beyond seeking pure explanations; we also seek knowledge to inform active decision-making. In many cases, scientific discovery is best represented as the optimisation of specific objectives, where the relationship between actions and outcomes must be mapped through active experimentation. In this tutorial, we introduce \emph{Bayesian optimisation} \cite{shahriariTakingHumanOut2016, garnettBayesianOptimization2023} as the primary methodology for this objective-oriented discovery. While BO shares the H-D and BED focus on exploring hypotheses, it goes further by coupling belief updates with the necessity of making decisions relative to specific goals.

\subsection{A General Workflow for Scientific Discovery with Bayesian Optimisation}

Bayesian Optimisation is an efficient optimisation algorithm originating from computer science. However, rather than being a mere black-box algorithm, it also serves as the modern computational crystallisation of the \textit{Bayesian hypothetico-deductive} philosophy discussed in Section~\ref{sec:intro:bayesian_discovery}. It provides a formal, iterative framework to navigate the gargantuan search spaces of modern science. To understand this workflow conceptually, it is helpful to view it as a cycle between two primary components that mirror the scientific method: a \textbf{surrogate model} and an \textbf{acquisition function}.

In the Bayesian framework, the \textbf{surrogate model}, commonly a \textbf{Gaussian Process} (GP), represents our current ``state of belief'' regarding the natural laws governing the system (e.g., the relationship between a protein sequence and its affinity). Much like the evolving paradigms in the history of science, the surrogate model acts as a probabilistic ``digital twin'' of the experiment. It does not simply provide a single best guess; it maintains a distribution over all possible outcomes, explicitly quantifying its own uncertainty. In regions where data is scarce, the model acknowledges its lack of knowledge through high variance, capturing the ``epistemic uncertainty'' that is central to Bayesian epistemology.

Guided by this internal model, we use an \textbf{acquisition function} to perform the ``deductive'' step of the discovery cycle. This component formalises the strategy for the next experiment, deducing which observation will be most informative based on our current beliefs. It balances two fundamental scientific drivers: \textit{exploitation} (testing hypotheses the model predicts are highly likely to succeed) and \textit{exploration} (testing hypotheses in unknown regions to challenge and refine the current paradigm). 

This iterative loop, proposing a hypothesis, testing it against the ``oracle'' (the real world or a high-fidelity simulation), and updating our beliefs, perfectly mirrors the evolution of scientific knowledge. By using this principled decision rule, BO ensures that every expensive experiment is chosen to provide the maximum possible value, accelerating the objective-directed scientific discovery in a data-efficient manner. Below is a practical and conceptual workflow to integrate BO into scientific discovery:

\begin{enumerate}
    \item \textbf{Frame the scientific challenge as an optimisation problem}: Start by defining three key components: the \textit{design space} (the experimental parameters you can adjust, e.g., reaction temperature, material composition, or solvent type), \textit{constraints} (practical limits like safety regulations, resource availability, or biophysical feasibility), and the \textit{objective} (what you aim to maximise or minimise, e.g., catalytic efficiency, molecular stability, or minimising energy use). This step translates vague scientific questions (``Can we identify a better solution?'') into concrete targets (``identify an optimal solution while minimizing cost and maximizing performance'').
    \item \textbf{Establish an objective evaluation process}: Determine how you will measure the level of success. If real-world experiments are costly or time-consuming, use existing datasets (e.g., from literature or lab records) to train an ``oracle'', which serves as a computational stand-in that mimics the outcome of actual experiments. For feasible hands-on work, design an evaluation pipeline compatible with lab equipment or simulations to generate reliable results.
    \item \textbf{Build an initial knowledge base and update mechanism}: Your initial ``belief'' about the problem comes from a \textit{surrogate model}---a flexible tool that learns from data to represent how design choices relate to outcomes. Kickstart this model with a small set of initial experiments (e.g., random samples across the design space or expert-guided tests) to capture baseline knowledge. The surrogate will then update its beliefs as new data emerges, mirroring how scientific understanding evolves.
    \item \textbf{Define a strategy for selecting next experiments}: To avoid haphazard testing, use an \textit{acquisition function}---a principle to balance two goals: exploring understudied regions (to avoid missing breakthroughs) and exploiting regions the surrogate deems promising (to refine top performers). For simplicity, this step can be streamlined by choosing user-friendly BO software \cite{cao2026bgolearn} that preconfigures these strategies for scientific use cases.
    \item \textbf{Execute the iterative BO loop}: This loop follows active learning stargate \cite{lookman2019active}: (i) use the surrogate to predict outcomes of potential experiments; (ii) select the most valuable experiment (via the acquisition function); (iii) run the experiment or query the oracle; (iv) feed the new result back to update the surrogate. Repeat until your objective is met or resources are exhausted.
    \item \textbf{Synthesise knowledge and actionable results}: Conclude by extracting two key outputs: \textit{model-based insights} (from the surrogate, e.g., how temperature influences yield) and \textit{optimal experimental conditions} (the best design from your history of tests). This dual outcome ensures you gain both practical solutions and a deeper understanding of the system.
\end{enumerate}


Crucially, BO is not a one-size-fits-all tool; its components (surrogate, acquisition function) must be tailored to the problem domain. For instance, catalyst or molecular design often leverages graph-based surrogates to encode structural information, while materials discovery prioritizes discrete, conditionally constrained composition spaces and trade-offs among competing objectives. By embedding BO into your existing scientific workflow, you replace guesswork with principled decision-making, accelerating discovery without sacrificing rigour.

\subsection{Illustrative Example I: Antibody Discovery with BO}

To illustrate how BO functions in practice, we consider its application to automated antibody design. The goal is to identify high-affinity protein sequences within an astronomically large combinatorial search space using as few costly evaluations as possible. Through this example, we establish how scientific discovery is framed as an optimisation problem and computerised through the BO framework, as illustrated in Figure~\ref{fig:ant-bo-workflow}.

We focus on the design of antibodies that bind strongly to a specific antigen. The ability of an antibody to recognise a target is largely determined by the Complementarity-Determining Region 3 of the heavy chain (CDRH3). From a design perspective, we treat an antibody candidate as a sequence of amino acids defining this CDRH3 region. The scientific objective is to \textit{discover sequences that maximise binding affinity to a target antigen while remaining biologically viable}.

The challenge is twofold. First, the space of possible sequences is combinatorially explosive; for a typical CDRH3 length, the number of combinations exceeds the number of atoms in the observable universe. Second, evaluating a candidate, whether through wet-lab experiments or high-fidelity simulations, is expensive and time-consuming. We treat this evaluation as a ``black-box'' oracle: we provide a sequence, and it returns a noisy estimate of binding affinity, without providing any clues about why that sequence worked or how to improve it.

\begin{figure}[tbh]
    \centering
    \includegraphics[width=\linewidth]{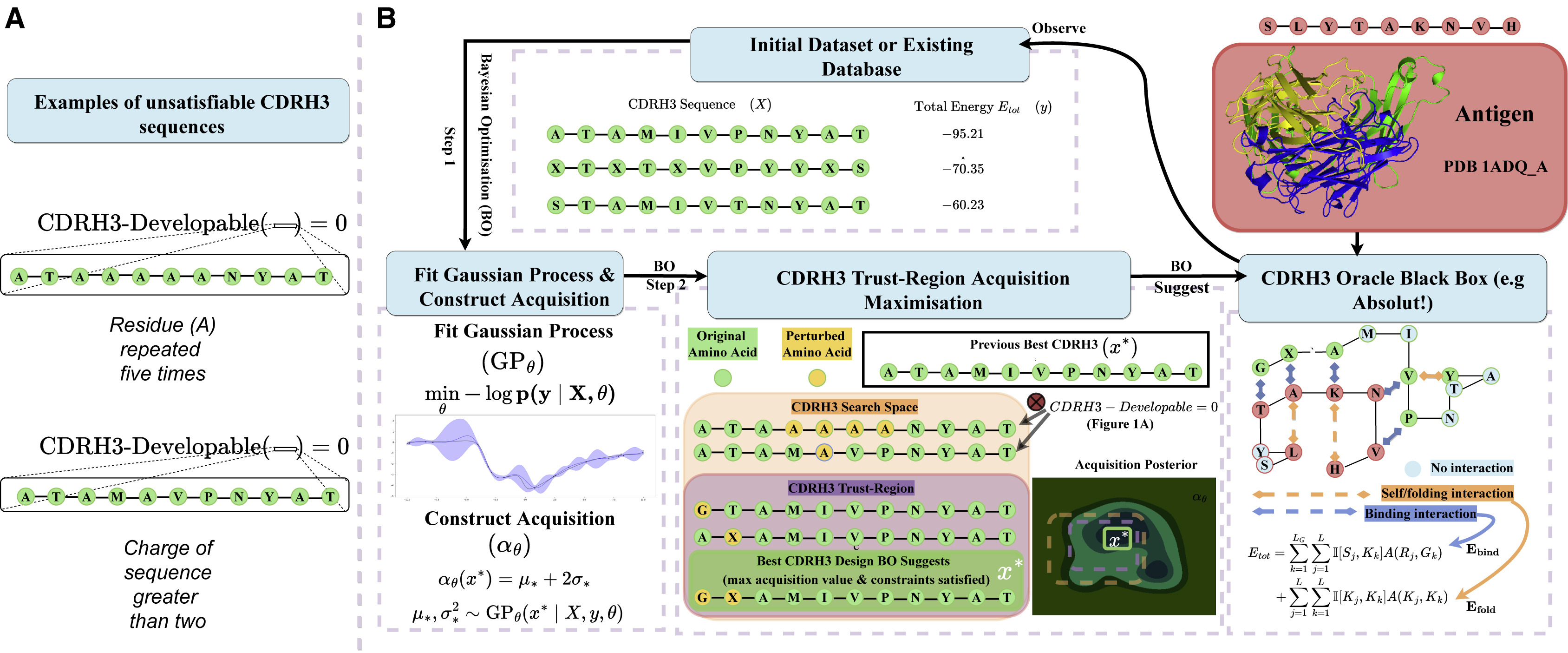}
    \caption{The AntBO workflow for automated antibody design (Figure 1 of \textcite{khan2022antborealworldautomatedantibody}). \textbf{(A) Constraints on the search space:} Examples of biologically unsatisfiable CDRH3 sequences (e.g., excessive repetition or invalid charge) that the optimiser must learn to avoid. \textbf{(B) The Bayesian Optimisation loop:} (Step 1) Starting from an initial dataset of sequences and their energy/affinity scores; (Step 2) Fitting a Gaussian Process ($\text{GP}_\theta$) to model the objective and its uncertainty; (Step 3) Maximising an acquisition function ($\alpha_\theta$) within a local \textit{trust region} to propose a candidate $x^*$ that balances performance and biological plausibility; (Step 4) Evaluating the candidate via a black-box oracle (e.g., the \textit{Absolut!} simulator) to generate new data and update the model.}
    \label{fig:ant-bo-workflow}
\end{figure}

This paradigm is instantiated in AntBO \cite{khan2022antborealworldautomatedantibody}, a framework designed for real-world antibody discovery. As shown in Figure~\ref{fig:ant-bo-workflow}B, AntBO formalises the discovery process into a four-step loop. It begins with an initial dataset (Step 1) and fits a Gaussian Process to represent the current understanding of the sequence-affinity relationship (Step 2). 

A critical refinement in AntBO is the use of a \textbf{trust region} during the acquisition phase (Step 3). As illustrated in Figure~\ref{fig:ant-bo-workflow}A, many mathematically ``optimal'' sequences might be biologically ``unsatisfiable''---for instance, they may have an impossible chemical charge or repetitive patterns that cannot be manufactured. To avoid wasting resources on these ``dead ends,'' the trust region constrains the search to sequences that are similar to known, developable antibodies. Finally, the suggested sequence is sent to an oracle (Step 4), such as the \textit{Absolut!} simulator, and the result is fed back into the database to refine the model for the next round.

In experimental trials, AntBO identified high-affinity candidates after evaluating fewer than fifty sequences---a tiny fraction of the billions possible. This demonstrates that by coupling a probabilistic ``digital twin'' with a strategic decision rule, researchers can bypass the trial-and-error of traditional discovery and move directly toward optimal scientific solutions.

\subsection{Illustrative Example II: Accelerating OER Catalyst Discovery with BO}

We consider the application of BO to the design of acidic oxygen evolution reaction (OER) catalysts \cite{cao2025spatial}. The goal is to identify synthesis conditions that jointly optimise catalytic activity and durability, while reducing the reliance on expensive experiments. This setting exemplifies how materials discovery can be cast as a data-efficient optimisation problem under realistic laboratory constraints.

We study RuO$_2$-based catalysts for acidic water electrolysis, where performance is governed by two competing objectives: \textit{low overpotential} (indicating high activity) and \textit{high stability}. Each candidate catalyst is parameterised by synthesis variables, including the Ru precursor volume ($V_{\mathrm{Ru}}$), solvent volume ($V_{\mathrm{H_2O}}$), calcination temperature ($T$), and calcination time ($t$). The objective is therefore to identify synthesis configurations that minimise overpotential while maintaining robust durability in acidic environments.

This optimisation problem is inherently challenging. Overpotential can be measured relatively quickly, whereas stability evaluation requires prolonged experiments that may span hundreds of hours. In addition, the design space is combinatorially large; even coarse discretisation of synthesis parameters results in tens of thousands of candidates. Exhaustive evaluation is thus impractical, necessitating careful experimental selection.

We model the experimental workflow as a black-box function that maps synthesis conditions to observed properties such as overpotential and stability, without providing gradients or mechanistic insight. This formulation is well suited to BO, where a probabilistic surrogate guides sequential decision-making.

To account for the asymmetric evaluation cost across objectives, we employ a \textbf{two-stage spatially adaptive optimisation strategy} (Figure~\ref{fig:BGO-OER}). In the first stage, BO targets the inexpensive objective, namely overpotential. Starting from a limited initial dataset, a surrogate model (e.g., a Gaussian process or ensemble regressor) is trained to approximate the relationship between synthesis parameters and activity. An acquisition function, such as expected improvement, knowledge gradient or predictive entropy search, is then optimised to propose new candidates. Iterative updates quickly identify a subset of high-activity materials.

A key step connects the two stages. Rather than directly exploring stability over the entire design space, a conditional variational autoencoder (CVAE) is used to construct an \textit{adaptive subspace} composed of candidates with low predicted overpotential. This effectively restricts the search to regions that satisfy a predefined activity criterion, filtering out low-potential candidates prior to costly evaluations.

In the second stage, BO is applied within this reduced domain, now focusing on maximising stability. Given the high cost of stability testing, each iteration evaluates only a single candidate. By concentrating on a high-quality subspace, the method substantially improves data efficiency and reduces the number of required long-term experiments.

Experimental results show that this strategy successfully identifies a Cu-doped RuO$_2$ catalyst with both low overpotential and excellent durability, using only a small number of synthesis trials and stability tests. Compared with conventional trial-and-error approaches, the BO-based framework reduces experimental cost by orders of magnitude.

\begin{figure}[tbh]
    \centering
    \includegraphics[width=\linewidth]{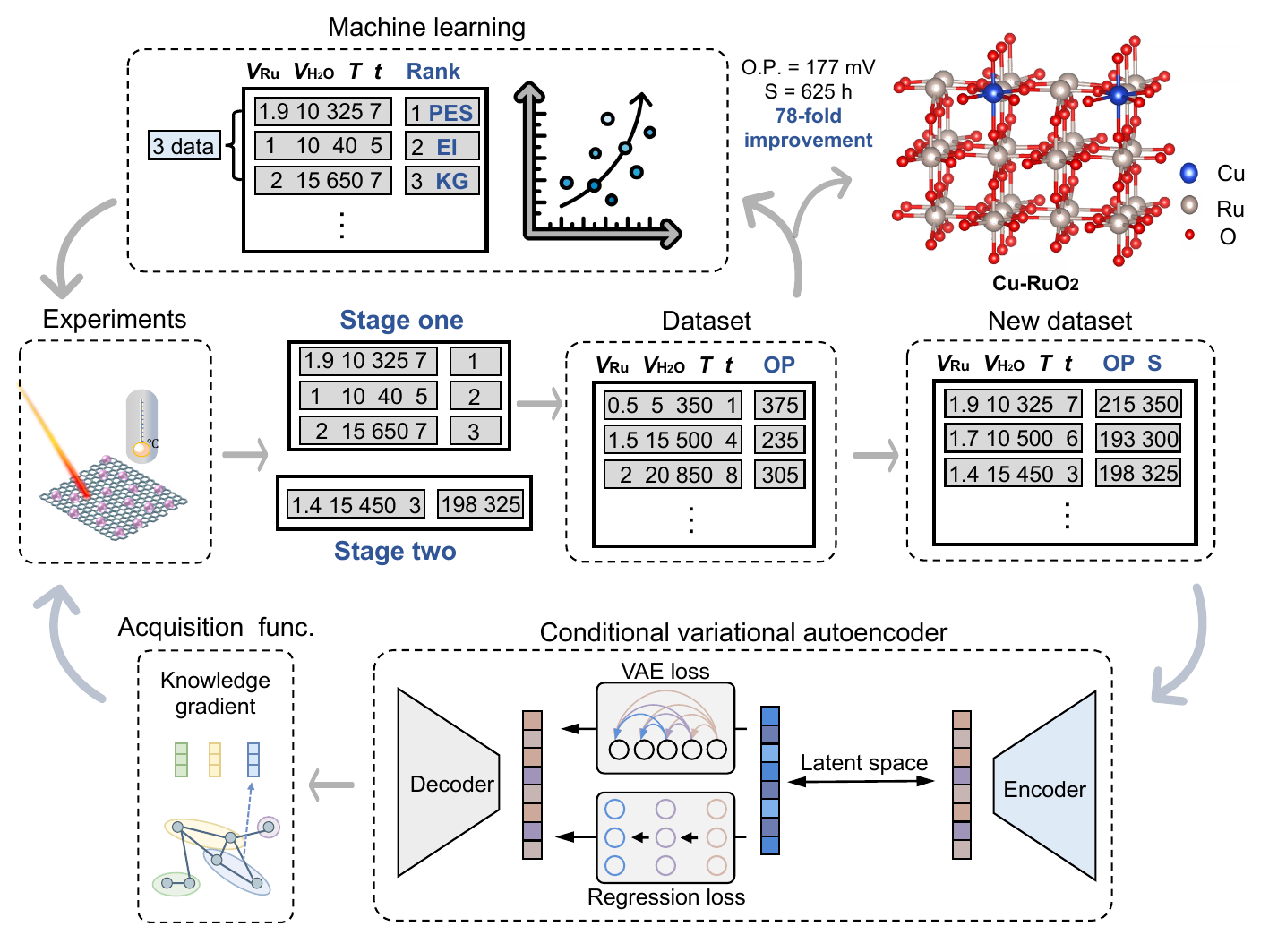}
    \caption{Workflow of the spatially adaptive active learning strategy. Stage I minimises overpotential via active learning, while a CVAE defines a low-overpotential subspace. Stage II then maximises stability within this subspace to identify highly active and durable catalysts. PES, EI, and KG denote predictive entropy search, expected improvement, and knowledge gradient, respectively. Func.: function.}
    \label{fig:BGO-OER}
\end{figure}

\subsection{Literature Review of Bayesian Optimisation}

Bayesian Optimisation \cite{garnettBayesianOptimization2023} is a principled approach for searching the global maximum of an expensive-to-evaluate black-box objective function. Its origins lie in early prototypes developed by \textcite{kushnerNewMethodLocating1964, mockusBayesMethodsSeeking1975, zhilinskasSinglestepBayesianSearch1976}, with \textcite{mockusBayesMethodsSeeking1975} systematically formalising the Bayesian framework for global optimisation. By the end of the 20th century, \textcite{jonesEfficientGlobalOptimization1998} popularised this methodology through an algorithm termed Efficient Global Optimisation (EGO). A further surge in BO research emerged around 2010, when Srinivas \textit{et al.} \cite{srinivasGaussianProcessOptimization2010, srinivasInformationTheoreticRegretBounds2012} integrated BO with bandit algorithm techniques \cite{auerUsingConfidenceBounds2002}, firmly establishing BO as a distinct subfield within machine learning and black-box optimisation.

Bayesian optimisation has been known by several historical names, reflecting the evolution of its theoretical foundations and practical applications. Central to its early implementations is Gaussian process regression \cite{snelsonSparseGaussianProcesses2005}, also referred to as Kriging \cite{krigeStatisticalApproachBasic1951}---a term coined in honour of the South African mining engineer D.~G.~Krige, who first applied the method in geostatistics. Consequently, BO was also commonly referred to as ``Kriging-Based Optimisation'' or ``Gaussian Process Bandit'' in its early stages. Modern BO now encompasses a diverse range of probabilistic surrogate models and stands as a core paradigm within the broader framework of ``\emph{Sequential Model-Based Optimisation}'' (SMBO) \cite{hutterSequentialModelBasedOptimization2011}.

In computer science and artificial intelligence (AI), BO has become a cornerstone of automated machine learning (AutoML) and hyperparameter optimisation (HPO) \cite{snoek2012practical, bergstraHyperoptPythonLibrary2013, cowen-riversHEBOPushingLimits2022}. It also plays a central role in sequential decision-making formulations (e.g., Gaussian process bandits), with a strong focus on principled algorithm design and theoretical regret analysis \cite{srinivasGaussianProcessOptimization2010, srinivasInformationTheoreticRegretBounds2012, vakiliInformationGainRegret2021, caiLowerBoundsStandard2021}. Recent advances in the field have emphasised algorithmic improvements \cite{wangRecentAdvancesBayesian2022}, including mitigating the curse of dimensionality \cite{djolongaHighDimensionalGaussianProcess2013, gonzalez-duqueSurveyBenchmarkHighdimensional2024}, designing more effective acquisition strategies \cite{scottCorrelatedKnowledgeGradient2011, wuParallelKnowledgeGradient2016}, enhancing robustness to noise \cite{daultonRobustMultiObjectiveBayesian2022, cowen-riversHEBOPushingLimits2022}, and integrating more expressive surrogate models \cite{springenbergBayesianOptimizationRobust2016, wistubaFewShotBayesianOptimization2021, mullerPFNs4BOInContextLearning2023, liuLargeLanguageModels2024}.

It was not until the mid-2010s that BO gained traction within the natural sciences. The need to optimise costly and time-consuming experiments made BO a natural choice for automating the search over experimental conditions. A key advantage of BO over heuristic search algorithms lies in its Bayesian modelling framework, which enables it to achieve meaningful results with significantly fewer experimental evaluations. In recent years, BO has been adopted across a diverse range of scientific domains, including chemistry \cite{hasePhoenicsBayesianOptimizer2018, burgerMobileRoboticChemist2020, shieldsBayesianReactionOptimization2021, wuRaceBottomBayesian2024, cao2025spatial} and materials science \cite{macleodSelfdrivingLaboratoryAccelerated2020, liangBenchmarkingPerformanceBayesian2021, leiBayesianOptimizationAdaptive2021, shoyebraihanAcceleratingMaterialDiscovery2024, cao2024active, li2025optimize}. Research in these fields typically focuses on tailoring BO to domain-specific challenges, such as structured or constrained design spaces, and on incorporating scientific prior knowledge (e.g., material descriptors) into the optimisation loop \cite{haseGryffinAlgorithmBayesian2021, xieDomainKnowledgeInjection2023}.

Notably, a critical gap has emerged between the AI and natural science communities in the application and understanding of BO. State-of-the-art BO algorithms, such as TuRBO \cite{erikssonScalableGlobalOptimization2019} and HEBO \cite{cowen-riversHEBOPushingLimits2022}, are often developed and evaluated primarily on AI-centric tasks like HPO, with limited demonstration of their utility across a wide range of real-world scientific problems. 
Conversely, practitioners in the natural sciences often depend on established, user-friendly software packages \cite{cao2026bgolearn, nogueiraBayesianOptimizationPython2014, hasePhoenicsBayesianOptimizer2018}, which can make it difficult to extend existing methods or implement new methodological advances in practice. This disconnect impedes the translation of cutting-edge Bayesian optimization research into real-world scientific progress, ultimately constraining the efficiency and impact of automated experimental discovery.

This tutorial aims to bridge the gap between AI advances and natural sciences, by presenting BO as a fundamental tool for practitioners with diverse backgrounds. Surprisingly, existing textbook and tutorial resources for BO are insufficiently tailored to this purpose. While several high-quality resources exist---including a systematic and detailed textbook \cite{garnettBayesianOptimization2023}, a comprehensive survey \cite{shahriariTakingHumanOut2016}, and a well-regarded technical tutorial \cite{frazierTutorialBayesianOptimization2018}---none are specifically targeted at scientific practitioners, who often have varied technical backgrounds and distinct needs. 

\paragraph{Motivation and Focus of this Tutorial}
Many scientific researchers do not require deep knowledge of BO’s technical intricacies and simply wish to use it as a software tool, while others seek a more thorough understanding of its theoretical foundations. A particular oversight in existing resources is the failure to explain why and how many scientific problems can, in essence, be formulated as optimisation problems; indeed, many scientific practitioners remain unaware that their day-to-day research implicitly involves optimisation. Furthermore, existing tutorials provide little to no code-level guidance on implementing BO, leaving practitioners, such as chemists or materials scientists without extensive programming experience, unable to apply BO to their own problems without developing software from scratch. To address these critical gaps, this tutorial provides structured case studies, hands-on coding instructions, and content stratified by technical complexity, ensuring that BO is accessible and applicable to all scientific practitioners engaged in automated discovery.

\subsection{Organisation of this Tutorial}

This tutorial is structured into five modular parts:

\begin{enumerate}
    \item Part~\ref{part:sci_as_opt} establishes foundational concepts, framing scientific discovery as a black-box optimisation problem, formalising key definitions, and illustrating how real-world scientific challenges map to BO frameworks. It requires only basic scientific literacy and elementary probability knowledge, making it accessible to all readers aiming to grasp why BO is suited to scientific discovery.
    \item Part~\ref{part:surrogate} delves into surrogate models, focusing on Gaussian Processes as the canonical choice for BO, covering kernel design, uncertainty quantification, and hyperparameter adaptation. It includes practical implementations with Scikit-Learn and GPyTorch, targeting readers with foundational machine learning or statistical knowledge to master the core predictive component of BO.
    \item Part~\ref{part:decision} explores acquisition functions that balance exploration and exploitation, alongside their theoretical grounding in Bayesian decision theory. It explains both myopic and non-myopic strategies, equipping readers to select and customise decision-making criteria for specific scientific constraints.
    \item Part~\ref{part:alg} delivers complete algorithmic BO implementations (standard and human-in-the-loop) and discusses advanced technical extensions for scientific settings. It also provides coding instruction of BO, using a high-level BO package such as HEBO \cite{cowen-riversHEBOPushingLimits2022} and Bgolearn \cite{cao2026bgolearn}, or basic modelling packages including Scikit-Learn \cite{scikit-learn} and Gpytorch \cite{gardnerGPyTorchBlackboxMatrixMatrix2018}. It also includes experimental validation against baseline methods to demonstrate BO’s sample efficiency. Additionally, we also present an example of how BO synergises with existing scientific tools such as molecule descriptors.
\end{enumerate}

Each part contains multiple sections targeted at different readers with diverse backgrounds, from experimental scientists seeking actionable tools to method developers pursuing theoretical rigour, with clear learning objectives and flexible navigation paths.

%% file: v2/sci_as_opt.tex
\part{Scientific Discovery as Optimisation Problems}
\label{part:sci_as_opt}

Bayesian optimisation, originally developed in computer science, has become a widely adopted methodology for optimising black-box functions. A natural question arises: how is optimisation fundamentally related to the process of scientific discovery? Somewhat surprisingly, this connection is both profound and pervasive, yet it is often underappreciated by practising scientists. In reality, scientific researchers routinely engage in implicit optimisation---manually adjusting experimental conditions, exploring parameter spaces, and seeking optimal settings---without necessarily recognising that these activities are instances of solving optimisation problems over unknown functions. The cognitive gap lies in the fact that, during the iterative design and execution of experiments, scientists may not explicitly frame their process within the formalism of optimisation theory.

In this part, we systematically elucidate how scientific discovery can be cast as an optimisation problem, specifically in the context of experimental design. By precisely identifying the core optimisation procedures underpinning scientific research, we open the door to leveraging computational tools for automating and accelerating scientific progress. We begin by introducing the foundational elements of optimisation in scientific contexts, catering to readers without prior exposure to optimisation theory. For those interested in a deeper theoretical understanding, we provide a conceptual roadmap for further study.

\section{Optimisation Problems and Algorithms \ELEMENTARY}

Optimisation lies at the heart of both scientific inquiry and technological advancement, providing a rigorous mathematical framework for systematically identifying the best possible solutions under a given set of circumstances. In the context of scientific discovery, optimisation formalises the process of selecting experimental designs, adjusting parameters, and making decisions that maximise the attainment of desired objectives. This section introduces the formal definition of optimisation problems and discusses how scientific experimentation can be reframed as a process of sequential optimisation, ultimately enabling the automation and acceleration of discovery through computational algorithms.

\subsection{The Definition of Optimisation Problems}
\label{sec:sci_as_opt:definition}

Let $x$ be a variable defined on a domain $\mathcal{X}$. An optimisation problem with respect to $x$ seeks to identify a value $x^* \in \mathcal{X}$ that maximises (or minimises) a given objective function. The canonical form of an unconstrained optimisation problem is:

\begin{equation} \label{eq:unconstrained_opt}
\operatorname{maximise}_{x \in \mathcal{X}} f(x)
\end{equation}
where $f: \mathcal{X} \to \mathbb{R}$ is a real-valued objective function defined on the decision (or parameter) space $\mathcal{X}$.

In more general scenarios, additional requirements may be imposed on $x$ in the form of constraints, leading to a constrained optimisation problem:
\begin{equation} \label{eq:constrained_opt}
\begin{aligned}
    & \operatorname{maximise}_{x \in \mathcal{X}} f(x) \\ 
    &\text{subject to} \begin{cases}
        g_i(x) = 0, & i=1,\ldots,m_{e} \\
        g_i(x) \ge 0, & i=m_e+1,\ldots,m \\
    \end{cases}
\end{aligned}
\end{equation}
where $f: \mathcal{X} \to \mathbb{R}$ and each $g_i: \mathcal{X} \to \mathbb{R}$ are real-valued functions defined on $\mathcal{X}$. The set $\{g_i\}_{i=1}^{m_e}$ represents the \emph{equality constraints}, which must be exactly satisfied; $\{g_i\}_{i=m_e + 1}^{m}$ denotes the \emph{inequality constraints}, indicating that some criteria should meet or exceed certain thresholds. The \emph{feasible set} under these constraints is defined as:
\begin{equation}
\mathcal{X}_{g} = \left\{x \in \mathcal{X}\ \middle|\ g_i(x) = 0 \text{ for } i=1,\ldots,m_e;\ g_i(x) \ge 0 \text{ for } i=m_e +1,\ldots, m\right\}.
\end{equation}
The optimisation problem thus seeks $x^* \in \mathcal{X}_g$ maximising $f(x)$.

\subsection{The Latent Optimisation Problem of Scientific Discovery}

From the era of Baconian induction to the hypothetico-deductive (H-D) paradigm, empirical observation has served as the cornerstone of scientific knowledge acquisition. At its core, scientific experimentation involves selecting an action or design $x \in \mathcal{X}$ and observing the resultant outcome $o \in \mathcal{O}$. The underlying mechanism by which nature produces this observation can be formalised as the (unknown) true model $\mathcal{M}^*$:
\begin{equation}
o = \mathcal{M}^*(x).
\end{equation}

Scientific inquiry is motivated not only by the desire to understand the world (\textit{explaining}), but also by the need to utilise this understanding to guide action (\textit{doing}). Broadly, there are three hierarchical types of knowledge that scientific discovery aims to uncover:
\begin{enumerate}
    \item \textbf{Hypothesis-level knowledge} (\textit{what}): Identifying the class of plausible hypotheses of $\mathcal{M}^*$ governing the system under study.
    \item \textbf{Explanatory knowledge} (\textit{why}): Pinpointing which hypothesis of $\mathcal{M}^*$ best explains our experience, i.e., why actions $x$ have produced the observed outcomes $o$.
    \item \textbf{Actionable knowledge} (\textit{how}): Determining how to choose actions $x$ under $\mathcal{M}^*$ to optimise a reward (or utility) function $r(x, o): \mathcal{X} \times \mathcal{O} \to \mathbb{R}$, subject to constraints $\{h_i(x, o)\}_{i=1}^m$ (if any).
\end{enumerate}

Focusing on the ultimate goal of actionable discovery, the scientific process can be formalised as the following optimisation problem:
\begin{equation} \label{eq:sci_discovery_opt}
\begin{aligned}
    & \operatorname{maximise}_{x \in \mathcal{X}} r(x,\mathcal{M}^*(x)) \\ 
    &\text{subject to} \begin{cases}
        h_i(x, \mathcal{M}^*(x)) = 0, & i=1,\ldots,m_{e} \\
        h_i(x, \mathcal{M}^*(x)) \ge 0, & i=m_e+1,\ldots,m \\
    \end{cases}
\end{aligned}
\end{equation}
By defining $f(x) = r(x,\mathcal{M}^*(x))$ and $g_i(x) = h_i(x, \mathcal{M}^*(x))$, we recover the standard constrained optimisation framework~\eqref{eq:constrained_opt}.

Here, the reward function $r$ encodes preferences over design--outcome pairs, such that $r(x_1, o_1) < r(x_2, o_2)$ if $(x_2, o_2)$ is preferred to $(x_1, o_1)$. In the special case where $r$ is constant, the problem reduces to pure explanation-driven discovery---the focus of Bayesian experimental design (BED). This distinction highlights how Bayesian optimisation (BO) extends beyond BED, not only addressing \textit{what} and \textit{why}, but also guiding \textit{how} to act.

It is worth noting that, in some cases, one may define the decision space $\mathcal{X}$ to coincide with the feasible set $\mathcal{X}_g$ induced by the constraints. While this unifies all problems into an unconstrained form, it obscures important distinctions at the hypothesis and explanation levels. In practice, exploring actions beyond the current feasible set can be instrumental in revealing the true nature of $\mathcal{M}^*$. Moreover, constraints $g_i$ may themselves depend on the unknown mechanism, so that the feasibility of an action $x$ is not always known \textit{a priori}.

\subsection{Optimisation Challenges for Scientific Discovery}

Scientific discovery, when formulated as an optimisation problem, presents several unique challenges that distinguish it from classical numerical optimisation~\cite{nocedalNumericalOptimization2006}, such as gradient descent or Newton's method. These challenges include:

\paragraph{Black-box Objectives and Constraints}
In scientific discovery, the objective function $f(x) = r(x, \mathcal{M}^*(x))$ is a \emph{black-box}: the mechanism $\mathcal{M}^*$ is unknown and must be inferred from data. Consequently, optimisation must proceed without analytic expressions for the objective or its derivatives. Furthermore, constraints $g_i(x)$ may also depend on unknown aspects of $\mathcal{M}^*$, necessitating real-world experimentation to determine feasibility~\cite{gramacyOptimizationUnknownConstraints2011}. Scientific discovery thus inherently demands optimisation methods capable of reasoning under epistemic uncertainty and integrating model learning with decision-making.

\paragraph{Partial Observability and Noisy Evaluation}
Experimental outcomes are often subject to measurement noise and influenced by latent, unobserved variables. Thus, the evaluation of $f(x)$ is typically noisy. For instance, additive noise may be included when evaluating the objective:
\begin{equation}
y = f(x) + \varepsilon,
\end{equation}
where $\varepsilon$ denotes stochastic noise that summarizes the influence of variability over experimental outcomes. Effective optimisation algorithms must therefore account for uncertainty in observations and appropriately model the noise structure.

\paragraph{Expensive Experiments}
Scientific experiments are frequently costly in terms of time, resources, or materials. For instance, synthesising a new chemical compound, running a clinical trial, or conducting a high-energy physics experiment may require substantial investment. As a result, sample efficiency---the ability to identify optimal or near-optimal solutions using as few experiments as possible---is paramount.

\subsection{Optimisation Algorithms for Scientific Discovery}

In this tutorial, we focus on Bayesian optimisation, a representative approach that exemplifies sequential decision-making and serves as a direct analogue to Bayesian H-D scientific discovery (Section~\ref{sec:intro:bayesian_discovery}). Here, we conceptualise the core ideas underlying Bayesian optimisation; further details are provided in subsequent sections of this tutorial.

\subsubsection{Optimisation through Sequential Decision Making}

Given the black-box and noisy nature of scientific experimentation, an analytic optimum is typically intractable. Therefore, a practical optimisation strategy operates in a sequential manner. A \emph{sequential optimisation algorithm} is defined by a policy $\pi(\hat{x}|\mathcal{D})$ that, at each iteration, selects the next design $\hat{x}$ based on the accumulated data $\mathcal{D}$ from previous experiments. Typically, $\mathcal{D} = \{(x_{i}, y_i)\}_{i=1}^n$ records $n$ pairs of experimental designs and observed (possibly noisy) rewards. Additional intermediate observations may also be incorporated into the data if available.

Conventional sequential optimisation algorithms, such as those relying on gradient information, are ineffective in the optimisation of scientific discovery due to the absence of analytic derivatives. This motivates the use of \emph{gradient-free} or \emph{derivative-free} methods. Notably, multi-armed bandit algorithms fall within this class, although they often bypass explicit modelling of the underlying mechanism and instead focus on empirical performance. 

\subsubsection{Modelling the Objective}

Conventional bandit algorithms suffer from a key drawback: their exploration is often low-efficiency and somewhat haphazard, which makes them poorly suited for expensive experiments where each evaluation of the objective is costly. To address this, we need methods that explicitly store and update our knowledge about the objective, and then choose actions based on this accumulated knowledge rather than blind or naive exploration. 
With this idea, \emph{sequential model-based optimisation} (SMBO) embodies the scientific method by explicitly constructing and updating a model of the unknown mechanism, using this model to guide experimental design and action selection.

The SMBO procedure typically comprises three steps:
\begin{enumerate}
    \item \textbf{Model selection}: Choosing a class of models (e.g., linear models, neural networks) to represent plausible hypotheses for $\mathcal{M}^*$.
    \item \textbf{Model fitting}: Updating the model parameters based on the data $\mathcal{D}$ collected thus far.
    \item \textbf{Decision-making}: Selecting the next experimental design based on the current model.
\end{enumerate}
This iterative process mirrors the scientific cycle of hypothesis generation (\textit{what}), explanation (\textit{why}), and action (\textit{how}). Bayesian optimisation is the principled Bayesian realisation of SMBO, systematically integrating uncertainty quantification and decision-theoretic principles.

\begin{figure}[tbh]
    \centering
    \includegraphics[width=0.6\linewidth]{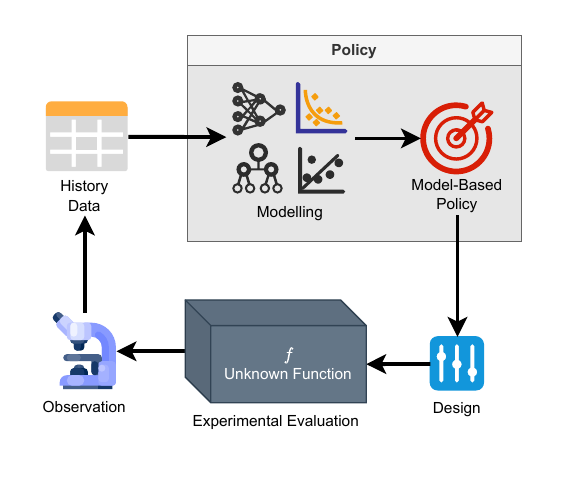}
    \caption{Scientific discovery as sequential model-based optimisation: iteratively updating a model of the unknown mechanism based on experimental data, and using this model to guide the selection of future experiments.}
    \label{fig:smbo_diagram}
\end{figure}

\subsubsection{Incorporating Model Uncertainty: Bayesianism for Optimisation}

Here, we provide the conceptual idea of Bayesian optimisation (BO), as a pivotal variant of SMBO that aligns with the Bayesianism of scientific discovery. What sets BO apart is its combination of Bayesian inference with the modelling of the unknown objective. Instead of relying on a fixed or deterministic view of the problem, BO builds a flexible, probabilistic model of the relationship between choices and outcomes, which aligns with the core principles of Bayesian scientific discovery (Section~\ref{sec:intro:bayesian_discovery}). Every time we try a new option and observe the result, we use Bayes’ rule to update our beliefs about the entire decision space. This approach offers two key advantages:

\begin{itemize}
    \item First, it allows us to use prior knowledge about how different choices might be related: if we learn that one option works well, we can infer that similar options are also likely to be good, even before testing them. This is because BO’s model captures the correlation among different choices of experiments, so each new observation not only teaches us about the chosen setting but also influences our beliefs about related alternatives.
    \item Second, BO actively manages uncertainty. When deciding where to search next, it balances between focusing on areas that already look promising and exploring regions where our knowledge is limited but the potential reward could be high. This active learning and exploration strategy helps BO to efficiently navigate even very large or complex decision spaces, often finding the global best solution with far fewer experiments than random guessing or trial-and-error approaches.
\end{itemize}

In summary, Bayesian optimisation enables us to make the most of every experiment by combining probabilistic reasoning with thoughtful, step-by-step decision-making. By leveraging both what we know and what we have yet to learn, BO offers a practical and effective way to tackle challenging optimisation problems where every trial counts.

\section{Theoretical Concepts of Optimisation \THEORY}
\label{sec:sci_as_opt:theory}

Optimisation theory provides the mathematical underpinnings for systematically identifying the best solution to a problem within a prescribed set of possibilities. In this section, we introduce the fundamental concepts and theoretical results that form the basis of optimisation, with a particular focus on sequential optimisation. We begin by revisiting the definition of a general optimisation problem in Section~\ref{sec:sci_as_opt:definition}.

\begin{definition}[Optimisation Problem]
Let $\mathcal{X} \subseteq \mathbb{R}^d$ denote the \emph{decision space} and $f: \mathcal{X} \to \mathbb{R}$ a real-valued \emph{objective function}. The (unconstrained) optimisation problem seeks
\begin{equation}
    x^* \in \arg\max_{x \in \mathcal{X}} f(x).
\end{equation}
If constraints are present, the problem becomes
\begin{equation}
    \begin{aligned}
        & \text{find } x^* \in \arg\max_{x \in \mathcal{X}_g} f(x) \\
        & \text{where } \mathcal{X}_g = \left\{ x \in \mathcal{X} \mid g_i(x) = 0,~ i=1,\ldots,m_e;~ g_i(x) \ge 0,~ i=m_e+1,\ldots,m \right\},
    \end{aligned}
\end{equation}
with $g_i: \mathcal{X} \to \mathbb{R}$ representing constraint functions.
\end{definition}

A central distinction in optimisation theory is that between local and global optimality. In scientific discovery, the objective is produced from an unknown and sophisticated mechanism. Therefore, it may possess multiple local maxima, and identifying the global optimum is substantially more challenging. To formalise these concepts, we introduce the following definitions.

\begin{definition}[Global Optimum]
A point $x^* \in \mathcal{X}_g$ is a \emph{global maximiser} of $f$ if
\begin{equation}
    f(x^*) \geq f(x), \quad \forall x \in \mathcal{X}_g.
\end{equation}
\end{definition}

\begin{definition}[Local Optimum]
A point $x^\dagger \in \mathcal{X}_g$ is a \emph{local maximiser} of $f$ if there exists $\epsilon > 0$ such that
\begin{equation}
    f(x^\dagger) \geq f(x), \quad \forall x \in \mathcal{X}_g \cap B(x^\dagger, \epsilon),
\end{equation}
where $B(x^\dagger, \epsilon)$ denotes the open ball of radius $\epsilon$ centred at $x^\dagger$.
\end{definition}

For scientific discovery, the objective function $f$ is not known in closed form and can only be evaluated sequentially, often at significant cost. Sequential optimisation algorithms address this by selecting a sequence of query points $(x_1, x_2, \ldots, x_T)$, observing (possibly noisy) outcomes $y_t = f(x_t) + \varepsilon_t$, and updating their strategy based on accumulated data. To rigorously assess the performance of sequential optimisation algorithms, two key metrics are defined: convergence and regret.

\begin{definition}[Convergence]
A sequential optimisation algorithm is said to \emph{converge} to the global maximiser $x^*$ if, as the number of iterations $T \to \infty$, the sequence of iterates $(x_t)$ satisfies
\begin{equation}
    \lim_{T \to \infty} f(x_T) = f(x^*).
\end{equation}
\end{definition}

\begin{definition}[Regret]
Let $x^*$ denote the global maximiser of $f$. The \emph{instantaneous regret} at iteration $t$ is defined as
\begin{equation}
    r_t := f(x^*) - f(x_t).
\end{equation}
The \emph{cumulative regret} after $T$ iterations is
\begin{equation}
    R_T := \sum_{t=1}^{T} r_t = \sum_{t=1}^{T} \left[ f(x^*) - f(x_t) \right].
\end{equation}
\end{definition}

Low cumulative regret indicates that the algorithm rapidly identifies high-performing solutions, making regret a central criterion in the analysis of sequential optimisation methods. These two notions---convergence and regret---are deeply connected, and understanding their relationship is crucial for evaluating the effectiveness of optimisation algorithms.

The following theorem establishes a fundamental link between the convergence of an optimisation algorithm and its cumulative regret.

\begin{theorem}[Convergence Implies Vanishing Average Regret]
Suppose a sequential optimisation algorithm produces a sequence $(x_t)_{t=1}^T$ such that $\lim_{T \to \infty} f(x_T) = f(x^*)$. Then, the average regret vanishes asymptotically:
\begin{equation}
    \lim_{T \to \infty} \frac{1}{T} R_T = 0.
\end{equation}
\end{theorem}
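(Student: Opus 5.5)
The plan is to reduce the statement to the classical Cesàro mean theorem: if a real sequence converges to a limit, then the sequence of its arithmetic means converges to the same limit.

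First I will interpret the hypothesis $\lim_{T\to\infty} f(x_T)=f(x^*)$ as convergence of the infinite sequence $(x_t)_{t\ge 1}$ produced by the algorithm, in the sense of the Definition of Convergence. In terms of the instantaneous regret $r_t = f(x^*)-f(x_t)$ of the Definition of Regret, this says exactly that $r_t \to 0$. Since $x^*$ is a global maximiser, we also have $r_t \ge 0$ for every $t$. Nonnegativity is not needed for the argument, but it shows that $\frac{1}{T}R_T$ is the mean of a nonnegative null sequence.

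The core step is an $\epsilon$-splitting argument.
\begin{enumerate}
\item Fix $\epsilon>0$ and choose $N$ such that $|r_t|<\epsilon/2$ for all $t>N$.
\item Write the average regret as
\begin{equation*}
\frac{1}{T}R_T=\frac{1}{T}\sum_{t=1}^{N} r_t+\frac{1}{T}\sum_{t=N+1}^{T} r_t .
\end{equation*}
\item The first sum is a fixed finite number $C_N=\sum_{t=1}^N |r_t|$, independent of $T$. It is finite because each $f(x_t)$ is a real value of $f$. Hence the first term is at most $C_N/T$, which is below $\epsilon/2$ once $T>2C_N/\epsilon$.
\item The second term is bounded in absolute value by $\frac{T-N}{T}\cdot\frac{\epsilon}{2}\le \frac{\epsilon}{2}$.
\item Combining the two bounds gives $\left|\frac{1}{T}R_T\right|<\epsilon$ for all sufficiently large $T$. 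This is exactly $\lim_{T\to\infty}\frac{1}{T}R_T=0$.
\end{enumerate}

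There is no real technical obstacle here. The only point needing care is the notational one: the statement writes $(x_t)_{t=1}^T$ with $T$ appearing both as the horizon and as the limit variable. I will make explicit that the iterates form a single infinite sequence whose initial segments define $R_T$, so that the early terms $r_1,\dots,r_N$ do not change as $T$ grows. This is what allows their contribution $C_N/T$ to be washed out.

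Finally, I will note that the converse fails. For example, $r_t=1$ on a sparse subsequence such as perfect squares and $0$ otherwise gives $R_T/T\to 0$ without $f(x_T)\to f(x^*)$. This clarifies that the theorem is a one-directional link between the two performance metrics.
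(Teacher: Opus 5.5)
Your proposal is correct and uses the same $\epsilon$-splitting (Cesàro) argument as the paper, which bounds $R_T \le N M + (T-N)\epsilon$ with $M=\max_{t\le N} r_t$ and then divides by $T$. Your version is marginally tidier: working with absolute values and an explicit threshold on $T$ gives the two-sided bound directly, whereas the paper writes $\lim_{T\to\infty}\frac{1}{T}R_T \le \epsilon$, which tacitly relies on $r_t \ge 0$ and presumes the limit exists.
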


\begin{proof}
By definition, $r_t = f(x^*) - f(x_t) \geq 0$ and, by hypothesis, $f(x_t) \to f(x^*)$ as $t \to \infty$. Thus, for any $\epsilon > 0$, there exists $N$ such that for all $t > N$, $r_t < \epsilon$. Therefore,
\[
R_T = \sum_{t=1}^{T} r_t \leq N \cdot M + (T-N)\epsilon,
\]
where $M = \max_{t \leq N} r_t$. Dividing both sides by $T$ and taking the limit $T \to \infty$, the contribution from the first $N$ terms vanishes, and we obtain $\lim_{T \to \infty} \frac{1}{T} R_T \leq \epsilon$. Since $\epsilon$ is arbitrary, the result follows.
\end{proof}

Conversely, bounded cumulative regret ensures that the algorithm spends most of its iterations near-optimal solutions. This is formalised in the following result.

\begin{theorem}[Sublinear Regret Implies Convergence in Average]
Suppose an algorithm achieves sublinear cumulative regret, i.e., $R_T = o(T)$ as $T \to \infty$. Then, the average performance converges to the optimum:
\begin{equation}
    \lim_{T \to \infty} \frac{1}{T} \sum_{t=1}^{T} f(x_t) = f(x^*).
\end{equation}
\end{theorem}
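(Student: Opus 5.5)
The claim follows by rewriting the average reward directly in terms of the cumulative regret. By the definition of instantaneous regret, $f(x_t) = f(x^*) - r_t$ for every $t$. Summing over $t=1,\ldots,T$ and dividing by $T$ gives the exact identity
\begin{equation*}
    \frac{1}{T}\sum_{t=1}^{T} f(x_t) = f(x^*) - \frac{1}{T}\sum_{t=1}^{T} r_t = f(x^*) - \frac{R_T}{T}.
\end{equation*}
This uses no property of the algorithm beyond the definitions of $r_t$ and $R_T$ given above.

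Next I will invoke the hypothesis. The statement $R_T = o(T)$ means precisely that $R_T/T \to 0$ as $T \to \infty$. The right-hand side of the identity therefore converges to $f(x^*)$, and so does the left-hand side, which is the claimed limit.

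No step here is a real obstacle. The only point needing care is that $f(x^*)$ must be finite, so that subtracting it is legitimate. This holds because $x^*$ is assumed to be a global maximiser of the real-valued function $f$. Nonnegativity of $r_t$ (which holds since $x^*$ is a global maximiser) is not needed for this direction, unlike in the proof of the preceding theorem. For the pointwise analogue, I would note that the result is strictly weaker: sublinear regret does not force $f(x_T) \to f(x^*)$ along the whole sequence. It only yields Cesàro convergence, which is why the statement is phrased ``in average''.
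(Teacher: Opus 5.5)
Your proposal is correct and follows the paper's proof essentially verbatim: both rewrite $\frac{1}{T}\sum_{t=1}^{T} f(x_t)$ as $f(x^*) - R_T/T$ using only the definitions of $r_t$ and $R_T$. Both then conclude from $R_T/T \to 0$. Your side remarks (finiteness of $f(x^*)$, nonnegativity of $r_t$ being unnecessary, and only Cesàro rather than pointwise convergence) are accurate.
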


\begin{proof}
By definition,
\[
\frac{1}{T} \sum_{t=1}^{T} f(x_t) = f(x^*) - \frac{1}{T} R_T.
\]
If $R_T = o(T)$, then $\frac{1}{T} R_T \to 0$ as $T \to \infty$, so the average converges to $f(x^*)$.
\end{proof}

These results provide a rigorous foundation for evaluating and designing sequential optimisation algorithms, particularly in the context of scientific discovery, where efficient identification of optimal experimental designs is paramount. Ideally, we expect a sequential optimisation algorithm to achieve sublinear regret. The interplay between regret minimisation and convergence properties guides both the theoretical analysis and practical implementation of modern optimisation strategies.

\section{Coding an Optimisation Problem \CODE}
\label{sec:sci_as_opt:coding}

In this section, we demonstrate how to implement the problem definition of \eqref{eq:unconstrained_opt} using Python code. We focus on unconstrained optimisation, meaning that the objective function is defined over the entire design space without explicit constraints. Constrained Bayesian optimisation (BO) is a more advanced topic \cite{gardnerBayesianOptimizationInequality2014, pichenyBayesianOptimizationMixed2016} and is beyond the scope of this tutorial.

\subsection{Specifying the Design Space}

Framing scientific discovery as an optimisation problem begins with accurately identifying and representing the design space $\mathcal{X}$---not only mathematically but also in a manner compatible with computational tools. Many BO libraries are designed primarily for hyper-rectangular spaces (i.e., continuous variables bounded within intervals). However, scientific applications often involve heterogeneous design spaces containing variables of various types, such as categorical choices, integers, and booleans, which require more flexible representations.

For this reason, we use the HEBO library \cite{cowen-riversHEBOPushingLimits2022} to illustrate how to define a design space that supports multiple variable types. This section outlines the common variable types encountered in scientific experiments, providing both their mathematical interpretation and realistic examples from domains such as chemistry and materials science. The following Python snippet demonstrates how to define a design space with eight different variable types using HEBO:

\begin{lstlisting}[language=Python]
import pandas as pd
from hebo.design_space.design_space import DesignSpace

# Define the design space with diverse variable types
space = DesignSpace().parse([
    # Continuous numerical interval (e.g., reaction temperature in Celsius)
    {'name': 'x0', 'type': 'num', 'lb': 0, 'ub': 7},
    # Integer interval (e.g., number of catalyst layers)
    {'name': 'x1', 'type': 'int', 'lb': 0, 'ub': 7},       
    # Logarithmically scaled continuous variable (e.g., concentration of a reagent)
    {'name': 'x2', 'type': 'pow', 'lb': 1e-4, 'ub': 1e-2, 'base': 10},
    # Categorical variable (e.g., choice of solvent)
    {'name': 'x3', 'type': 'cat', 'categories': ['a', 'b', 'c']},
    # Boolean variable (e.g., whether to enable a catalyst)
    {'name': 'x4', 'type': 'bool'},
    # Logarithmically scaled integer variable (e.g., number of molecules)
    {'name': 'x5', 'type': 'pow_int', 'lb': 1, 'ub': 10000, 'base': 10},
    # Stepped integer variable (e.g., time intervals in fixed increments)
    {'name': 'x6', 'type': 'step_int', 'lb': 1, 'ub': 9, 'step': 2},
    # Integer exponent variable (e.g., memory size in powers of two)
    {'name': 'x7', 'type': 'int_exponent', 'lb': 1, 'ub': 1024, 'base': 2}, 
])

# Sample 10 points from the space
samples = space.sample(10)
# Sort columns alphabetically for readability
samples = samples.reindex(sorted(samples.columns), axis=1)

# Display information about the space
print("Numeric parameters:", space.numeric_names)
print("Categorical parameters:", space.enum_names)
print("Samples:\n", samples)
\end{lstlisting}

\subsubsection{Numeric Variables}

Numeric variables take values from a continuous real interval $[a, b]$. They are the most common type in BO frameworks and are suitable for parameters that can be controlled continuously.

\begin{example}
In a chemical experiment, the reaction temperature (measured in degrees Celsius) can be modelled as a numeric variable:
\begin{lstlisting}[language=Python]
{'name': 'temperature', 'type': 'num', 'lb': 20, 'ub': 100}
\end{lstlisting}
\end{example}

\paragraph{Logarithmically Scaled Variables}

When parameters span multiple orders of magnitude, such as concentrations or rate constants, it is often more effective to optimise on a logarithmic scale to ensure balanced exploration across scales. HEBO supports this via the \texttt{pow} type, which internally transforms the variable logarithmically.

\begin{example}
Consider the reaction rate constant $k$, which can vary between $10^{-5}$ and $10^{-1}$:
\begin{lstlisting}[language=Python]
{'name': 'reaction_rate', 'type': 'pow', 'lb': 1e-5, 'ub': 1e-1, 'base': 10}
\end{lstlisting}
\end{example}

\subsubsection{Integer Variables}

Integer variables represent discrete numerical values within specified bounds, suitable for parameters that must be integral.

\begin{example}
The number of catalyst layers in a reactor can be represented as:
\begin{lstlisting}[language=Python]
{'name': 'num_layers', 'type': 'int', 'lb': 1, 'ub': 10}
\end{lstlisting}
\end{example}

\paragraph{Logarithmic Integer Variables}

For integer variables spanning orders of magnitude, HEBO provides the \texttt{pow\_int} type, which samples integers on a logarithmic scale.

\begin{example}
The number of molecules added into a material might range from $10^{20}$ to $10^{24}$:
\begin{lstlisting}[language=Python]
{'name': 'num_molecules', 'type': 'pow_int', 'lb': 10**20, 'ub': 10**24, 'base': 10}
\end{lstlisting}
\end{example}

\paragraph{Stepped Integer Variables}

When integer parameters increment in fixed steps (e.g., time intervals of 2 seconds), the \texttt{step\_int} type allows modelling such discretisation.

\begin{example}
Time intervals measured in steps of 2 seconds between 1 and 9 seconds:
\begin{lstlisting}[language=Python]
{'name': 'time_intervals', 'type': 'step_int', 'lb': 1, 'ub': 9, 'step': 2}
\end{lstlisting}
\end{example}

\paragraph{Integer Exponents}

HEBO supports variables that are powers of an integer base, useful for parameters such as memory sizes or batch sizes that grow exponentially.

\begin{example}
Memory size in powers of two, ranging from $2^0=1$ to $2^{10}=1024$:
\begin{lstlisting}[language=Python]
{'name': 'memory_size', 'type': 'int_exponent', 'lb': 1, 'ub': 1024, 'base': 2}
\end{lstlisting}
\end{example}

\subsubsection{Binary and Categorical Variables}

\paragraph{Binary Variables}

Binary variables take values in $\{0,1\}$, representing boolean decisions such as enabling or disabling a catalyst.

\begin{example}
Whether to use a catalyst in a reaction:
\begin{lstlisting}[language=Python]
{'name': 'use_catalyst', 'type': 'bool'}
\end{lstlisting}
Here, \texttt{use\_catalyst = 0} means the catalyst is disabled, and \texttt{use\_catalyst = 1} means it is enabled.
\end{example}

\paragraph{Categorical Variables}

Categorical variables represent unordered discrete options, such as solvent choices or catalyst types. Unlike integers, categories have no inherent numeric ordering.

\begin{example}
Choice of solvent in a reaction:
\begin{lstlisting}[language=Python]
{'name': 'solvent', 'type': 'cat', 'categories': ['Water', 'Ethanol', 'Methanol']}
\end{lstlisting}
\end{example}

\paragraph{Distinction from Integers}

While integers imply an ordered magnitude (e.g., $2 > 1 > 0$), categorical variables treat each category as unique and unordered. For example, selecting the number of cooling stages is naturally an integer variable, whereas choosing between solvents is categorical.

\subsection{Defining and Evaluating the Objective Function}

The objective function $f: \mathcal{X} \to \mathbb{R}$ maps a design or experimental condition $x \in \mathcal{X}$ to a scalar performance metric. This objective function is typically unknown or expensive to evaluate analytically; instead, it is implicitly defined by interacting with an external system such as laboratory experiments, physical processes, or complex simulations via a software interface.

Because the objective function involves real-world evaluations, the implementation of the evaluation function depends on the specific experimental or simulation setup. In HEBO, the objective function is implemented as a Python function that accepts a batch of $m$ design points $\{x_i\}_{i=1}^m \subset \mathcal{X}$, typically represented as a pandas \texttt{DataFrame}, and returns a NumPy array of corresponding scalar values $\mathbf{y} = (y_1, \ldots, y_m) \in \mathbb{R}^m$. This batch interface enables efficient parallel evaluation, which is common in modern scientific workflows.

A general template for such an evaluation function is shown below:

\begin{lstlisting}[language=Python]
import numpy as np
import pandas as pd

def evaluate(xs: pd.DataFrame) -> np.ndarray:
    """
    Evaluate the objective function f on a batch of design points xs.

    Parameters:
        xs (pd.DataFrame): Batch of design points; each row corresponds to one sample.

    Returns:
        np.ndarray: Array of objective values, one per input design.
    """
    ys: np.ndarray 
    # Implementation details depend on the application context:
    # 1. Translate designs xs into commands for experimental equipment or simulators.
    # 2. Execute experiments or simulations, possibly in parallel.
    # 3. Collect and preprocess results to compute ys.
    ...
    return ys
\end{lstlisting}

\paragraph{Manual Evaluation with User Input}

In some cases, automatic data acquisition from experiments or simulations may be unavailable. A practical workaround is manual evaluation, where the user performs the experiment for each design point and inputs the observed results via the console. This approach maintains the batch interface required by HEBO while incorporating human-in-the-loop evaluation.

\begin{lstlisting}[language=Python]
import numpy as np
import pandas as pd

def evaluate(xs: pd.DataFrame) -> np.ndarray:
    ys = np.zeros(len(xs))

    for i in range(len(xs)):
        x = xs.iloc[i]
        print(f"Please perform the experiment with design x =\n{x}\n")

        y = None
        while y is None:
            try:
                y = float(input("Enter the observed result y: "))
            except ValueError:
                print("Invalid input, please enter a numeric value.")

        ys[i] = y
    return ys
\end{lstlisting}

\paragraph{Evaluation Using a Mock Oracle}

Because real-world evaluations can be costly or time-consuming, it is often beneficial to use a \emph{mock oracle} during development and testing. A mock oracle is a surrogate function that mimics the behaviour of a true black-box function, typically implemented as a simple analytic expression or a fitted model. This enables rapid iteration, debugging, and validation of the optimisation pipeline without incurring real evaluation costs.

For example, consider a design space with one continuous variable and one integer variable:

\begin{lstlisting}[language=Python]
import numpy as np
import pandas as pd
from hebo.design_space.design_space import DesignSpace

# Define the design space with a continuous variable x0 and an integer variable x1.
space = DesignSpace().parse([
    {'name': 'x0', 'type': 'num', 'lb': -3, 'ub': 3},
    {'name': 'x1', 'type': 'int', 'lb': 0, 'ub': 5},
])

def evaluate(xs: pd.DataFrame) -> np.ndarray:
    x0 = xs["x0"]
    x1 = xs["x1"]
    # Example oracle combining quadratic, sinusoidal, and linear terms
    ys = x0**2 + 0.5 * np.sin(2 * np.pi * (x1 - 0.2)) + 0.25 * x0
    return ys.values  # Return as numpy array
\end{lstlisting}

More sophisticated mock oracles can be built by fitting regression models to an empirical database, providing realistic proxies for expensive black-box functions (see Section~\ref{sec:sci_as_opt:examples}).

\section{Case Studies \SCI\,\CODE}
\label{sec:sci_as_opt:examples}

These case studies illustrate how diverse scientific discovery problems can be formulated as optimisation tasks, each presenting unique challenges such as high-dimensional continuous parameter spaces, simplex design spaces, mixed variable types, and high-dimensional feature representations. Since this tutorial is mainly for demonstrative purposes, we define the optimisation problem mainly with mock oracle evaluation fitted from existing databases, thereby avoiding the real-world evaluation cost. However, the principles demonstrated here can also be applied to problems that are evaluated on real-world experiments. Most importantly, through these examples, we see how objective-directed scientific discovery is formulated as an optimisation problem, not only mathematically but also through engineering of practical challenges. This formulation stage is critical for enabling effective application of optimisation techniques by translating complex scientific questions into well-posed mathematical problems amenable to systematic exploration and solution.

To provide a preliminary overview of the performance advantages of Bayesian optimisation (BO) for scientific discovery tasks, we summarise the final optimisation outcomes across all case studies in Table \ref{tab:case_study_preview}. These results serve as a preview of the efficacy of BO frameworks. We observe that BO consistently outperforms random trials across all tasks, achieving superior objective values with the same fixed experimental budget. A detailed description of the experimental setup, full optimisation trajectories, and in-depth performance analysis is provided in Section~\ref{sec:exp:results}.

\begin{table}[htbp]
\centering
\caption{Preview of final optimisation performance across scientific discovery case studies}
\label{tab:case_study_preview}
\resizebox{\textwidth}{!}{
\begin{tabular}{lccccc}
\toprule
Case Name & Section & Total Trials & Performance Measure & \multicolumn{2}{c}{Best Solution Performance} \\
\cmidrule(lr){5-6}
& & & (Maximise / Minimise) & Random Search & BO \\
\midrule
Photocatalytic HER Catalyst & \ref{sec:sci_as_opt:examples:HER} & 200 & Regret (minimise) & 21.8 & 9.7 \\
HEA Nanozyme Formulation & \ref{sec:sci_as_opt:examples:HEA} & 200 & Regret (minimise) & 244 & 155 \\
OER Electrocatalyst Design & \ref{sec:sci_as_opt:examples:OER} & 200 & Overpotential (mV, minimise) & 245 & 219 \\
Buchwald-Hartwig Reaction & \ref{sec:sci_as_opt:examples:BH} & 200 & Regret (minimise) & 47 & 9.9 \\
Molecular QED Optimisation & \ref{sec:sci_as_opt:examples:QED} & 100 & QED Score (maximise) & 0.881 & 0.918 \\
\bottomrule
\end{tabular}
}
\end{table}

\subsection{Catalyst Design for Photocatalytic Water Splitting}
\label{sec:sci_as_opt:examples:HER}


\begin{center}
\CODE\,\url{https://github.com/zwyu-ai/BO-Tutorial-for-Sci/blob/main/examples/HER}
\end{center}

We begin with a straightforward optimisation problem in chemistry that aligns well with the BO workflow described above. The challenge is to design efficient catalysts for photocatalytic water splitting \cite{nishiokaPhotocatalyticWaterSplitting2023a}, a process that produces hydrogen fuel from water using sunlight. The objective is to maximise the \emph{hydrogen evolution rate} (HER)---the amount of hydrogen generated per unit time. The efficiency of this process is highly sensitive to the catalyst’s composition, which typically consists of a mixture of several materials in varying proportions.

In this example, we aim to discover \textit{the proportions of catalyst materials for maximising HER}. To formalise this challenge, we need to identify the objective (HER), the design variables (the amounts of each catalyst material), and the space of the design variables. As our first case, we focus on illustrating the fundamental principle of BO, and the problem setting is much simplified compared to practical challenges. Specifically, the catalyst design problem can be expressed mathematically as:
\begin{equation}
\begin{aligned}
    \operatorname{maximise}_{x_1,\ldots,x_k}\quad & \mathrm{HER}(x_1, x_2, \ldots, x_k) \\
    \text{w.r.t. design space:}\quad & x_i \in [0, h_i], \quad i = 1, \ldots, k.
\end{aligned}
\end{equation}
Here, $x_i$ denotes the amount of the $i$-th material, and $h_i$ its maximum allowable value.

In this example, the design space comprises $k = 10$ parameters, each corresponding to the concentration or amount of a specific material in the catalyst: \textit{L-Cysteine, Acid Red 87, Methylene Blue, NaCl, NaOH, P10-MIX1, Polyvinylpyrrolidone (PVP), Rhodamine B, Sodium dodecyl sulfate (SDS), and Sodium silicate}. These materials are commonly used in photocatalytic systems and contribute to catalyst activity or stability. For instance, organic dyes such as Acid Red 87, Rhodamine B, and Methylene Blue serve as photosensitisers to extend light absorption. L-Cysteine acts as a sacrificial electron donor, promoting charge separation and hydrogen evolution. NaCl and NaOH regulate ionic strength and pH, which strongly influence performance. Additives like PVP and SDS function as surfactants affecting catalyst particle dispersion and morphology, while sodium silicate may stabilise or structure the catalyst. Proprietary mixtures (e.g., P10-MIX1) can provide additional or synergistic effects tailored to specific systems.

For this illustration, each parameter varies continuously within $[0, 5]$ (in appropriate units), resulting in a ten-dimensional optimisation space. Even if each parameter is discretised coarsely with step size $1$, the number of candidate compositions reaches $6^{10}$---an astronomically large search space for exhaustive search. To implement the problem as an optimisation problem, we use the HEBO interface to construct the design space. For this tutorial, we fit a \emph{random forest model} from an empirical dataset to construct a mock oracle evaluation.

\subsection{High-Entropy Alloy Formulation Optimisation as Nanozymes} 
\label{sec:sci_as_opt:examples:HEA}

\begin{center}
\CODE\,\url{https://github.com/zwyu-ai/BO-Tutorial-for-Sci/blob/main/examples/HEA}
\end{center}

As potent alternatives to natural horseradish peroxidase (HRP), peroxidase (POD)-mimicking nanozymes hold great promise in biomedicine. This example focuses on optimising high-entropy alloy (HEA) nanozymes. The catalytic performance metrics (e.g., $V_{\max}$ and $K_M$) of HEAs are nonlinearly influenced by various physicochemical parameters, including elemental composition, particle size, crystal structure (FCC/BCC/HCP), and surface morphology. Since these micro-features determine the electronic structure and active site exposure, and the parameter space is vast, traditional trial-and-error methods are inefficient. Here, we hope to efficiently explore this space and identify \emph{the composition--structure combinations that maximise catalytic efficiency}.

In this case study, we consider HEAs composed of five distinct metal elements: Co, V, Mn, Cu, and Fe. The goal is to optimise the elemental ratios to maximise $V_{\max}$ and minimise $K_M$, thereby maximising the overall catalytic efficiency ($E$) for the conversion of hydrogen peroxide into hydroxyl radicals. Because the decision variables represent compositional proportions, they are subject to the summation constraint that all ratios sum to 1. Additionally, to ensure the formation of a true five-component HEA, each element is constrained within the range $x_i \in [0.05, 0.35]$. Consequently, the optimisation problem is formulated as follows:

\begin{equation}
\begin{aligned}
    \operatorname{maximise}_{x_{Cu}, \ldots, x_{Co}} \quad & E(x_{Cu}, x_{Fe}, x_{Mn}, x_{V}, x_{Co}) \\
    \text{w.r.t. design space:} \quad & x_{Co} + x_{Cu} + x_{Mn} + x_{Fe} + x_{V} = 1, \\
    & 0.05 \leq x_i \leq 0.35, \quad i \in \{Co, Cu, Mn, Fe, V\}.
\end{aligned}
\label{eq:sci_as_opt:hea}
\end{equation}

\paragraph{Handling Bounded-Simplex Design Space} A challenge that impedes direct BO application to this problem is handling the simplex space, i.e., the sum of design variables must be $1$. As introduced in Section~\ref{sec:sci_as_opt:coding}, this simplex space is not directly supported by most software. To proceed, we seek to define an equivalent problem to the original problem \eqref{eq:sci_as_opt:hea}, such that the design spaces of the two problems can be invertibly transformed. More precisely, we can map the bounded simplex into a hyper-rectangle via an invertible reparameterisation transform $\phi: x \in \mathcal{X} \mapsto z \in [0,1]^4$, such that $\phi(\mathcal{X}) = [0,1]^{4}$. Then, the original problem is converted to
\begin{equation}
\operatorname{maximise}_{z \in [0,1]^4} \quad E(\phi^{-1}(z)),
\label{eq:sci_as_opt:hea_transformed}
\end{equation}
which has a rather simple design space as a 4-dimensional hypercube. We refer to Section~\ref{sec:sci_as_opt:simplex_transform} as one possible implementation of the mapping $\phi$. Given an empirical database, a random forest model or an AutoGluon model is fitted as the mock oracle.

\subsubsection{Reparameterisation Transform of the Bounded Simplex Space}
\label{sec:sci_as_opt:simplex_transform}

Consider the original feasible set
\[
\mathcal{X} = \left\{ x \in \mathbb{R}^n \mid \mathbf{1}^\top x = 1, \quad l \preceq x \preceq h \right\},
\]
where $n > 1$ and the element-wise bounds satisfy $\mathbf{0} \preceq l \prec h \preceq \mathbf{1}$. We note that the set $\mathcal{X}$ is nonempty if and only if
\[
\sum_{i=1}^n l_i \leq 1 \leq \sum_{i=1}^n h_i.
\]
Below, we derive the desired bijection $\phi$, such that $\phi(\mathcal{X}) = [0, 1]^{n - 1}$.

Define the tail sums for indices $k = 1, \ldots, n$ as
\[
L_k = \sum_{i=k}^n l_i, \quad H_k = \sum_{i=k}^n h_i,
\]
noting that $L_1 = \sum_{i=1}^n l_i$ and $H_1 = \sum_{i=1}^n h_i$.

To sequentially characterise feasible intervals for each component $x_k$, define the remaining sum before choosing $x_k$ as
\[
T_k = 1 - \sum_{i=1}^{k-1} x_i, \quad (T_1 = 1).
\]
For $k = 1, \ldots, n-1$, the feasible interval for $x_k$ that ensures feasibility of the remaining components is
\[
x_k \in [a_k, b_k], \quad \text{where} \quad
a_k = \max\left(l_k,\, T_k - H_{k+1}\right), \quad
b_k = \min\left(h_k,\, T_k - L_{k+1}\right).
\]
By construction, these intervals are nonempty under the global feasibility condition. The last component is uniquely determined as
\[
x_n = T_n \in [l_n, h_n].
\]

\paragraph{Forward map $\phi$} For any $x \in \mathcal{X}$, define $z = \phi(x) \in \mathbb{R}^{n-1}$ by
\[
z_k = \frac{x_k - a_k}{b_k - a_k}, \quad k = 1, \ldots, n-1.
\]
Since $x_k \in [a_k, b_k]$, each $z_k \in [0,1]$, so $\phi(\mathcal{X}) \subseteq [0,1]^{n-1}$.

\paragraph{Inverse map $\phi^{-1}$} Given $z \in [0,1]^{n-1}$, recover $x = \phi^{-1}(z)$ recursively. Starting with $T_1 = 1$, for $k = 1, \ldots, n-1$ compute:
\begin{gather*}
a_k = \max\left(l_k,\, T_k - H_{k+1}\right), \quad
b_k = \min\left(h_k,\, T_k - L_{k+1}\right), \\
x_k = a_k + z_k (b_k - a_k), \quad
T_{k+1} = T_k - x_k.
\end{gather*}
Finally, set $x_n = T_n$.

By construction, $x \in \mathcal{X}$, and the maps $\phi$ and $\phi^{-1}$ are inverses, establishing a bijection between $\mathcal{X}$ and the hypercube $[0,1]^{n-1}$. This reparameterisation enables efficient Bayesian Optimisation over the constrained simplex domain by transforming it into a standard hyper-rectangular space.

\subsection{Catalyst Design for Electrolytic Water Splitting}
\label{sec:sci_as_opt:examples:OER}

\begin{center}
\CODE\,\url{https://github.com/zwyu-ai/BO-Tutorial-for-Sci/blob/main/examples/OER}
\end{center}

The urgent need for carbon emission reduction has catalysed global interest in sustainable energy technologies. Among these, hydrogen generation via water electrolysis stands out for its potential to utilise surplus renewable electricity, offering a zero-emission and high-efficiency pathway. Despite its promise, the penetration of green hydrogen remains minimal. The principal bottleneck is economic: green hydrogen production is 2--3 times more costly than conventional methods, primarily due to the high expense and suboptimal efficiency of electrocatalysts employed in the \emph{oxygen evolution reaction} (OER). Traditional catalyst discovery relies on empirical intuition and extensive trial-and-error, with a search space exceeding 30 dimensions and combinatorial possibilities in the hundreds of millions. Thus, there is a critical need for efficient, data-driven approaches to accelerate electrocatalyst discovery.

The OER process, which governs water electrolysis efficiency, is typically assessed by the \textit{overpotential} $\eta$ at a fixed current density (10 mA cm$^{-2}$). The optimisation task is to minimise $\eta$ as a function of high-dimensional experimental parameters, formalised as:
\[
\text{minimise}\ \eta = f(x)
\]
where $x$ encompasses both categorical variables (e.g., metal types, support materials) and continuous variables (e.g., annealing temperature, catalyst loading, proton concentration), as well as process parameters such as stirring and washing conditions.

\paragraph{Multiple-objective Extension}
Although this is beyond the scope of this tutorial, we would also like to point out some practical concerns in scientific experiments. In practical catalyst development, multiple objectives are involved, such as activity, stability, and cost. The multi-objective optimisation (MOO) problem is formulated as:
\[
\max_{x \in X} \left[ -\eta(x),\, \mathrm{Stability}(x),\, -\mathrm{Cost}(x),\, \mathrm{TOF}(x) \right].
\]
A simplistic approach is to scalarise the objectives through a weighted sum, so that the problem reverts back to the single-objective case. However, this potentially causes issues because some objectives compete, and assigning proper weights over sub-objectives becomes rather difficult. Therefore, MOO usually seeks to approximate the Pareto frontier, representing solutions where no objective can be improved without compromising others. To approach this, multi-objective BO \cite{zhangExpensiveMultiobjectiveOptimization2010, belakariaMaxvalueEntropySearch2019, yangMultiObjectiveBayesianGlobal2019} is a popular research topic, providing more advanced BO tools.

\paragraph{Evaluation with a Literature-Based Mock Oracle}
For this case, we curated a comprehensive database of over 1600 OER catalyst records from published literature, capturing essential variables such as precursor types, modification sites, synthesis protocols, and process conditions. Initially, the dataset undergoes rigorous preprocessing, including target standardisation (overpotential at 10~mA~cm$^{-2}$), duplicate and outlier removal (IQR method), categorical value normalisation, and train-test partitioning ($80\% : 20\%$). Then, a random forest model is trained for the mock oracle evaluation. Feature importance analysis highlights catalyst loading (14.9\%), primary metal amount (11.3\%), and annealing temperature (9.0\%) as key determinants of OER performance.

\subsection{High-Dimensional Organic Synthesis}
\label{sec:sci_as_opt:examples:BH}

\begin{center}
\CODE\,\url{https://github.com/zwyu-ai/BO-Tutorial-for-Sci/blob/main/examples/BH}
\end{center}

Synthetic chemistry is foundational to advances in drug discovery, materials science, and energy research. The discovery of novel reactions has historically driven the field forward. Synthetic organic chemistry relies on the systematic discovery of \emph{reaction conditions to maximise key outcomes} such as yield, enantiomeric excess (\textit{ee}), catalytic efficiency (TON/TOF), or cost-effectiveness. However, the combinatorial explosion of possible reagents, solvents, and parameters makes manual trial-and-error approaches inefficient and often biased, especially as modern reaction spaces become increasingly high-dimensional.

Traditional paradigms---relying on mechanistic hypotheses, manual trial-and-error, and expert intuition---face critical limitations in efficiency, scalability, and innovation, particularly as reaction spaces become high-dimensional and combinatorially vast \cite{tanAIMolecularCatalysis2025, gomezDecisionMakingMedicinal2018, copelandSerendipityScienceDiscovery2019}. Manual approaches cover only a tiny fraction of possible conditions, leading to low sample efficiency and path dependence. This has contributed to a declining rate of novel reaction discovery despite an increasing number of reported reactions \cite{provostDataScienceIts2013, liCurrentComplexityTool2015, szymkucOrganicChemistryReally2021}. Overcoming these barriers requires systematic, data-driven methods for exploring complex chemical spaces.

In this section, we use the Buchwald--Hartwig (BH) high-throughput experimentation (HTE) dataset to demonstrate the optimisation problem underlying organic synthesis reactions. \textcite{torresMultiObjectiveActiveLearning2022} provides a dataset with 1,728 unique reaction conditions described by 530 DFT-derived features.
At the current stage, we consider the problem of \textit{directly finding the best-performing DFT features that lead to the highest yield}.\footnote{We note that this setting may not be valid in practice, since we cannot directly control the DFT features but only the reaction conditions. However, in this example we merely aim to illustrate a mathematical latent optimisation problem, without considering these practical constraints. We will leave the direct optimisation in a chemistry space to Section~\ref{sec:sci_as_opt:examples:QED}.}
Therefore, based on this dataset, we frame the optimisation problem as
\[
\text{maximise}_{\mathbf{x} \in \mathbb{R}^{530}} \text{Yield}(\mathbf{x})
\quad\text{subject to}\ \mathbf{a} \preceq \mathbf{x} \preceq \mathbf{b},
\]
where $\mathbf{a}, \mathbf{b} \in \mathbb{R}^{530}$ are the lower and upper bounds of the features derived from the dataset, respectively.

Above, we present an optimisation problem in a high-dimensional setting. However, jointly optimising over 530 features is impractical for a black-box objective without any prior knowledge of latent structure or gradient information. High dimensionality is a universal challenge for any black-box optimisation algorithm, as it prohibits effective exploration, especially when scientific experiments are expensive.

\paragraph{Dimensionality Reduction}
To overcome the curse of high dimensionality, in this case, we employ simple dimensionality reduction techniques to make this optimisation tractable for practical algorithms. In other words, we optimise only the features that have high importance and ignore other unimportant features (e.g., assuming that they are randomly set or fixed at average values). To implement this, we first fit a random forest model that naturally extracts feature importance, and then optimise over the top 20 important features. Then, we fit another random forest model using these selected features as the mock objective $\overline{\text{Yield}}$ to be optimised, where the rest of the features contribute to evaluation noise. That is, the optimisation problem becomes
\[
\text{maximise}_{\mathbf{x}_{selected}} \overline{\text{Yield}}(\mathbf{x}_{selected})
\quad\text{subject to}\ \mathbf{a}_{selected} \preceq \mathbf{x}_{selected} \preceq \mathbf{b}_{selected}.
\]
For this reduced problem, we also learn a random forest to behave as the mock oracle, where the ignored variables are incorporated as prediction noise of this oracle.

For more advanced solutions to high dimensionality, we refer readers to high-dimensional BO (HDBO) \cite{gonzalez-duqueSurveyBenchmarkHighdimensional2024}, which addresses the challenge by identifying a low-dimensional or sparse latent structure underlying the high-dimensional design space.

\subsection{Discovering the Best-Performing Molecule}
\label{sec:sci_as_opt:examples:QED}

In many scientific domains, optimisation must be performed over non-conventional spaces such as molecules, which are typically represented by discrete, structured objects like SMILES strings rather than standard numerical vectors. This setting poses unique challenges for Bayesian optimisation (BO), since classical surrogate models and acquisition functions are primarily designed for continuous vector spaces and struggle to handle large combinatorial discrete spaces with complex structural constraints.

Formally, let $\mathcal{X}$ denote the space of valid SMILES strings representing chemically feasible molecules. The optimisation problem can be expressed as
\begin{equation}
    \max_{x \in \mathcal{X}} f(x),
\end{equation}
where $f: \mathcal{X} \to \mathbb{R}$ is a black-box objective function that assigns a performance score to each molecule $x$. For example, $f$ could be the quantitative estimate of drug-likeness (QED) score \cite{bickertonQuantifyingChemicalBeauty2012}, which evaluates the drug-like properties of a molecule. Considering all strings satisfying the SMILES syntax is intractable. In practice, it is common to restrict $\mathcal{X}$ to a large but finite subset $\mathcal{X}_{\mathrm{sub}} \subset \mathcal{X}$ extracted from curated databases such as ZINC \cite{irwinZINC20AFreeUltralargeScale2020}.

The core challenges in this domain are:
\begin{itemize}
    \item \textbf{Discrete and Structured Decision Space}: Unlike continuous vector spaces, molecules are discrete objects with rich structural constraints (e.g., valence, ring closures) that must be respected to ensure chemical validity.
    \item \textbf{Huge Search Space}: The cardinality of $\mathcal{X}$ is enormous, making exhaustive search infeasible and requiring highly sample-efficient optimisation strategies.
    \item \textbf{Leveraging Domain Knowledge}: Effective optimisation must incorporate chemical domain knowledge, making use of the key properties inferred from the molecule SMILES strings.
\end{itemize}

%% file: v2/surrogate.tex
\part{Understanding the Black Box}
\label{part:surrogate}

As discussed in Part~\ref{part:sci_as_opt}, we view scientific discovery as a sequential process of optimising an objective function $f: \mathcal{X} \to \mathbb{R}$. This function is an unknown black box, requiring costly real-world evaluations in terms of both resources and time. Consequently, at each step $t$, the design $x_{t} \in \mathcal{X}$ must be judiciously selected based on current knowledge of $f$. To formalise this knowledge, the Sequential Model-Based Optimisation (SMBO) framework employs a surrogate model that approximates $f$ using data from previous trials, $\mathcal{D}_{t-1} := \{(x_i, y_i)\}_{i=1}^{t-1}$, where $y_{i} \approx f(x_i)$ is a noisy observation from the $i$-th experiment. In this part, we address the crucial challenge of how to properly establish such a surrogate model. 

\section{Surrogate Model \ELEMENTARY}

In the context of black-box optimisation, a surrogate model serves as a computationally efficient proxy for the true objective function $f$. Instead of directly querying $f$ (which is costly and noisy), we use the surrogate to make predictions, guide experimental design, and quantify uncertainty. Figure~\ref{fig:surrogate_illustration} illustrates how a surrogate model approximates the true objective function (one-dimensional) in black-box optimisation. Conceptually, the surrogate model predicts the objective utilising the noisy observation data, while also quantifying the uncertainty of its prediction. As more observations are incorporated, the surrogate’s predictions become more accurate and its uncertainty decreases, especially near observed data.

This section introduces the fundamental concepts behind surrogate modelling, starting from the definition of the hypothesis space, motivating the use of probabilistic models, and culminating in Bayesian inference techniques. These foundations are essential for understanding how sequential model-based optimisation leverages surrogate models to accelerate scientific discovery and decision making.

\begin{figure}[tbh]
    \centering
    \includegraphics[width=1.0\linewidth]{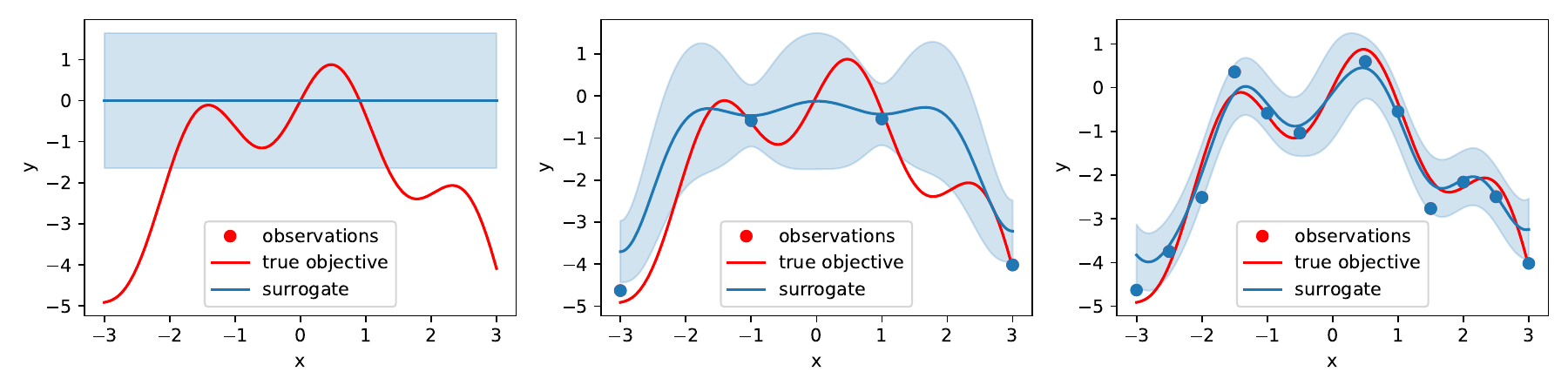}
    \caption{
        Illustration of surrogate modelling in black-box optimisation. The red curve shows the true objective function, and red dots indicate observed data points that are slightly perturbed from the true objectives. The blue curve represents the surrogate model's predicted mean, while the shaded region denotes the 95\% confidence interval, quantifying the model's uncertainty about the objective.
    }
    \label{fig:surrogate_illustration}
\end{figure}

\subsection{Hypothesis Space}

The first step is to \emph{define a space of hypothesized models $\mathcal{H}$} (i.e., the hypothesis space) that can approximate the true objective $f$. As discussed in Part~\ref{part:sci_as_opt}, experimental evaluations of $f$ are typically noisy, so our hypotheses must also incorporate assumptions about this noise. Mathematically, for a hypothesis $h \in \mathcal{H}$, we consider the joint distribution $p(f_{(x)}, y|x, h)$, where $f_{(x)}$ denotes the denoised objective value at $x$. We use the subscript notation $(x)$ to distinguish $f_{(x)}$ as a value, in contrast to $f(x)$ as a function.

\begin{example}[Linear-Function Hypothesis Space]
Suppose we use a linear model, $y = \omega^\top \phi(x) + \varepsilon$, where $\phi(x) \in \mathbb{R}^d$ is a feature representation of $x$ and $\varepsilon$ is i.i.d. Gaussian noise, $\mathcal{N}(0, \sigma^2)$. Assuming that $\sigma^2$ is known in advance, the hypothesis space is then the weight space $\mathcal{H} = \mathbb{R}^d$. The joint distribution is
\[
p(f_{(x)}, y|x, \omega) = \begin{cases}
    0, & f_{(x)} \ne \omega^\top \phi(x); \\
    \mathcal{N}(y - f_{(x)} \mid 0, \sigma^2), & f_{(x)} = \omega^\top \phi(x).
\end{cases}
\]
\end{example}

\begin{example}[Parametric Hypothesis Space]
Consider a general parametric function $F(x, \theta)$, where $\theta$ are model parameters belonging to a parameter space $\Theta_f$. The noise $\varepsilon$ follows a known distribution $p_\varepsilon(\varepsilon|x)$. Then, the hypothesis space is then given by $\mathcal{H} := \Theta$, and the joint distribution is
\[
p(f_{(x)}, y|x, \theta_f, \theta_{\varepsilon}) = \begin{cases}
    0, & f_{(x)} \ne F(x, \theta_f); \\
    p_\varepsilon(y - f_{(x)} | x), & f_{(x)} = F(x, \theta_f).
\end{cases}
\]
\end{example}

\begin{example}[Function Hypothesis Space]
Alternatively, we may hypothesize directly over the space of functions $\mathcal{X} \to \mathbb{R}$. Here, each hypothesis $h$ is a function $\tilde{f}$, with additive noise $\varepsilon$ whose distribution $p_\varepsilon(\varepsilon|x)$ is known. Consequently, the surrogate space is the function space $\mathcal{H} = \mathbb{R}^{\mathcal{X}}$ (the set of functions from the design space $\mathcal{X}$ to real numbers $\mathbb{R}$). The Bayesian surrogate model infers $\tilde{f}$ as a random function in the function space $\mathbb{R}^{\mathcal{X}}$. The joint distribution is:
\[
p(f_{(x)}, y|x, \tilde{f}, \theta_{\varepsilon}) = \begin{cases}
    0, & f_{(x)} \ne \tilde{f}(x); \\
    p_{\varepsilon}(y - f_{(x)} | x), & f_{(x)} = \tilde{f}(x).
\end{cases}
\]
\end{example}

The choice of hypothesis space determines both the representational power of the surrogate model and its computational efficiency. Two key considerations should guide the selection of $\mathcal{H}$:

\begin{itemize}
    \item \textbf{Practicality:} The hypothesized models should be computationally tractable with available resources. Fitting and evaluating the surrogate must be much cheaper than evaluating the true objective $f$. For instance, if each real experiment takes only 30 seconds, it would be impractical to spend several minutes training a deep neural network as the surrogate.
    \item \textbf{Prior Knowledge:} The hypothesis space should reflect any prior knowledge about the true underlying mechanism $\mathcal{M}^*$ of $f$. While we may not know the exact mechanism, domain knowledge often provides useful priors (e.g., smoothness). For example, if $x, x' \in \mathcal{X}$ are close, then $f(x)$ and $f(x')$ are likely to be similar. The hypothesis space should encode such properties.
\end{itemize}

\subsection{Motivation for Probabilistic Surrogate Models}

A straightforward approach is to select a model $h \in \mathcal{H}$ that best explains the observed data $\mathcal{D}_{t-1}$. Formally,
\[
\hat{h} = \arg \min_{h} \mathcal{L}(h;\mathcal{D}_{t-1}),
\]
where $\mathcal{L}$ is a loss function. A common choice is the least-squares loss:
\[
\mathcal{L}(h; \mathcal{D}_{t-1}) = \sum_{i=1}^{t-1} (y_i - \mathbb{E}_h[f_{(x_i)}])^2.
\]
This approach reflects the Hypothetico-Deductive paradigm (see Section~\ref{sec:intro}), where knowledge of the unknown objective is represented by a single hypothesis that best fits the empirical data.

However, this deterministic approach has significant limitations: it assumes the true objective $f$ lies within the representational capacity of $\mathcal{H}$, which is rarely the case. Nature's mechanisms may be arbitrarily complex, and our hypothesis space can never fully capture all possibilities. Thus, no single hypothesis can be perfectly accepted or rejected based on finite data. This motivates a probabilistic view, as discussed in Section~\ref{sec:intro:bayesian_discovery}: our knowledge about $f$ is better represented as a belief distribution over $\mathcal{H}$. We formalize this as follows:

\begin{definition}[Probabilistic Surrogate Model]
A probabilistic surrogate model, denoted $\mathcal{M}(\hat{\mathbf{f}}|\hat{\mathbf{x}}, \mathcal{D})$, maps a finite sequence of $m$ designs $\hat{\mathbf{x}} \in \mathcal{X}^m$---given the dataset $\mathcal{D} = \{(x_i, y_i)\}_{i=1}^n$ of $n$ design-observation pairs---to the joint distribution of their (denoised) objective values $\hat{\mathbf{f}} := (\hat{f}(\hat{x}_i))_{i=1}^m$.
\end{definition}

\subsection{Bayesian Inference of the Surrogate Model \THEORY}
\label{sec:surrogate:bayes_inference}

In this section, we present the statistical rationale underlying surrogate models, establishing the crucial link between Bayesian optimisation and the philosophy of scientific discovery. Some background in probability theory is assumed; see the preliminaries for details. While this section is foundational for understanding the principles of Bayesian Optimisation (BO), it is not strictly necessary for those who only wish to use BO as a software tool.

Let $\mathbf{x} = (x_i)_{i=1}^n \in \mathcal{X}^n$ and $\mathbf{y}=(y_i)_{i=1}^n$ denote the sequences of designs and corresponding observations. Applying Bayes' rule, we have
\begin{equation}
    p(h|\mathcal{D}) = \frac{p(h, \mathcal{D})}{p(\mathcal{D})} =  \frac{p(\mathbf{y}|h, \mathbf{x})p(h)}{p(\mathbf{y}|\mathbf{x})} \propto p(\mathbf{y}|h, \mathbf{x})p(h),
    \label{eq:surrogate:bayes_belief_hypothesis}
\end{equation}
where $p(\mathbf{y}|h, \mathbf{x})$ marginalises over the latent objective values:
\[
p(\mathbf{y}|h, \mathbf{x}) = \int_{\mathbb{R}^n} p(\mathbf{y}, \mathbf{f}|h, \mathbf{x})\, \mathrm{d}\mathbf{f}.
\]
Thus, the posterior distribution $p(h|\mathcal{D})$ can be computed from a prior $p(h)$. For a new set of designs $\hat{\mathbf{x}}$, predictions are made by marginalising over the hypothesis:
\begin{equation}
    p(\hat{\mathbf{f}}, \hat{\mathbf{y}}|\mathcal{D}, \hat{\mathbf{x}})
    = \sum_{h \in \mathcal{H}} p(\hat{\mathbf{f}}, \hat{\mathbf{y}}|h, \hat{\mathbf{x}}) p(h|\mathcal{D}),
\end{equation}
or, if $\mathcal{H}$ is continuous, by integrating over $h$. Marginalising out $\hat{\mathbf{y}}$ yields the surrogate model:
\[
\mathcal{M}(\hat{\mathbf{f}}|\mathcal{D}, \hat{\mathbf{x}}) = \int_{\mathbb{R}^m} p(\hat{\mathbf{f}}, \hat{\mathbf{y}}|\mathcal{D}, \hat{\mathbf{x}})\, \mathrm{d} \hat{\mathbf{y}}.
\]
In other words, starting from a prior $p(h)$, the probabilistic surrogate model uses the data $\mathcal{D}$ to update our belief to $p(h|\mathcal{D})$, and then uses this posterior to make predictions.

For most hypothesis spaces, it is natural to assume the following chain decomposition:
\[
p(\hat{\mathbf{f}}, \hat{\mathbf{y}}|h, \hat{\mathbf{x}}) = p(\hat{\mathbf{y}}|h, \hat{\mathbf{f}}, \hat{\mathbf{x}}) \, p(\hat{\mathbf{f}}|h, \hat{\mathbf{x}}),
\]
which reflects the intuition that observations $\hat{\mathbf{y}}$ are noisy versions of the true objective values $\hat{\mathbf{f}}$. Therefore, we do not need to explicitly marginalise over $\hat{\mathbf{y}}$, and the surrogate model can be written as
\begin{equation}
\mathcal{M}(\hat{\mathbf{f}}|\mathcal{D}, \hat{\mathbf{x}}) = \sum_{h \in \mathcal{H}} p(\hat{\mathbf{f}}|h, \hat{\mathbf{x}}) p(h|\mathcal{D}),
\label{eq:surrogate:prediction}
\end{equation}
where $p(h|\mathcal{D})$ is given by Eq.~\eqref{eq:surrogate:bayes_belief_hypothesis}.

\paragraph{Treatment of inference.} For certain hypothesis spaces, the surrogate model and all necessary distributions can be computed analytically. For example, Gaussian process regression (see Section~\ref{sec:surrogate:gp}) provides closed-form solutions for both the posterior and predictive distributions. However, for more general or complex hypothesis spaces, the required integrals and sums are often intractable, necessitating approximate inference methods such as Markov Chain Monte Carlo or variational inference. The details of such approximate inference are beyond the scope of this tutorial.

\subsection{Choices of Surrogate Models}

The choice of surrogate model is central to the success of sequential model-based optimisation. While the Gaussian process (GP) is the most widely adopted surrogate due to its flexibility and analytic tractability, a variety of alternative models have been developed to address specific challenges or leverage domain knowledge. In this section, we provide an overview of the principal surrogate model families, highlighting their key properties, advantages, and limitations.

\paragraph{Gaussian Processes (GPs).}  
GPs \cite{williams2006gaussian} are the canonical surrogate model in Bayesian optimisation, offering a non-parametric, probabilistic approach to function approximation. By placing a prior directly over the space of functions, GPs can flexibly model a wide range of objective functions, and crucially, provide closed-form expressions for the posterior and predictive distributions under Gaussian noise. This analytic tractability enables efficient uncertainty quantification and principled acquisition strategies. Section~\ref{sec:surrogate:gp} provides a detailed treatment of GPs, including their mathematical foundations, kernel design, and practical considerations.

\paragraph{Other Non-Parametric Surrogates.}  
Beyond GPs, other non-parametric models have been proposed. The Student-\emph{t} process extends the GP framework by replacing the Gaussian assumption with a heavier-tailed Student-\emph{t} distribution, offering increased robustness to outliers and model misspecification \cite{shahStudenttProcessesAlternatives2014}. The Tree-structured Parzen Estimator (TPE) is another popular alternative \cite{bergstraAlgorithmsHyperParameterOptimization2011}, which models the objective as a density estimation problem, using non-parametric kernel density estimators to model the likelihood of good and bad observations separately. These models can be advantageous when the objective exhibits non-smooth or multi-modal behaviour, or when the noise distribution is non-Gaussian.

\paragraph{Parametric Surrogates.}  
Parametric models, such as Bayesian linear regression, random forests, and Bayesian neural networks, are also widely used as surrogates \cite{nealBayesianLearningNeural1996, murphyMachineLearningProbabilistic2013, springenbergBayesianOptimizationRobust2016}. Linear models, possibly with engineered feature embeddings, are computationally efficient and interpretable, but may lack the expressiveness required for complex objectives. Bayesian neural networks offer greater representational power, capturing highly non-linear relationships, but typically require approximate inference (e.g., variational methods or Monte Carlo sampling) and are more computationally demanding. Ensembles of parametric models, such as random forests, while not strictly Bayesian, can provide empirical uncertainty estimates via ensemble variance. In addition, many researchers have also explored how to integrate deep learning with Bayesian inference \cite{liPreconditionedStochasticGradient2016, teyeBayesianUncertaintyEstimation2018}.

\paragraph{Pretrained and Data-Driven Surrogates.}  
Using pretrained foundation models as surrogates is an emergent direction in the machine learning community. Large language models \cite{chenLLMEnhancedBayesianOptimization2024, liuLargeLanguageModels2024, mahammadliSequentialLargeLanguage2025, ramosBayesianOptimizationCatalysis2025}, prior-data fitted networks \cite{mullerPFNs4BOInContextLearning2023}, and other foundation models can be adapted to serve as surrogates, especially in domains where large, relevant datasets are available. While these models may not strictly adhere to Bayesian principles, they can incorporate vast prior knowledge and provide strong inductive biases, often yielding impressive empirical performance.

\paragraph{Domain-Specific Surrogates.}  
In many scientific applications, domain knowledge can be encoded directly into the surrogate model. For example, Bayesian networks or physics-informed neural networks can be designed to respect known factors, symmetries, or mechanistic relationships inherent to the problem \cite{dalibardBOATBuildingAutoTuners2017, hoangDecentralizedHighDimensionalBayesian2018}. Such models can dramatically improve sample efficiency and interpretability, but require careful construction and validation to ensure consistency with both the data and the underlying scientific principles.

\section{Gaussian Process \TECHNICAL\,\THEORY}
\label{sec:surrogate:gp}

In this section, we introduce the Gaussian process (GP) \cite{williams2006gaussian}, the most widely used surrogate model in Bayesian optimisation. GPs provide a principled and flexible framework for modelling unknown functions, supporting expressive prior assumptions and enabling analytic Bayesian inference.

\subsection{Foundations and Intuition}

A GP adopts the function-space view of the hypothesis space, placing a prior directly over the set of possible functions $f: \mathcal{X} \to \mathbb{R}$. This prior encodes beliefs about properties such as smoothness, periodicity, and other structural aspects of the objective function.

\begin{definition}[Gaussian Process]
Consider a \emph{random function} $f$ from the domain $\mathcal{X}$ to $\mathbb{R}$. We say $f$ satisfies a \emph{Gaussian process} on $\mathcal{X}$, if for any finite subset $\{x_1, \ldots, x_n\} \subset \mathcal{X}$, the joint distribution of $(f(x_1), \ldots, f(x_n))$ is multivariate Gaussian:
\[
(f(x_1), \ldots, f(x_n)) \sim \mathcal{N}(\mathbf{m}, \mathbf{K}),
\]
where $\mathbf{m} = (m(x_1), \ldots, m(x_n))$ is the mean vector, and $\mathbf{K}$ is the $n \times n$ covariance matrix with entries $K_{ij} = k(x_i, x_j)$. The functions $m: \mathcal{X} \to \mathbb{R}$ and $k: \mathcal{X} \times \mathcal{X} \to \mathbb{R}$ are called the \emph{mean function} and \emph{kernel function} (or covariance function), respectively. We then denote the distribution of the random function $f$ as
\[
f \sim \mathcal{GP}(m, k)
\]
\end{definition}

Intuitively, a GP is a distribution over functions, where the kernel function $k(x, x')$ encodes the similarity between points $x$ and $x'$, and the mean function $m(x)$ encodes prior beliefs about the expected behaviour of $f(x)$. A GP generalises the multivariate Gaussian distribution from finite-dimensional vectors to arbitrary (possibly infinite) index sets $\mathcal{X}$.

\subsection{GP as Surrogate Model}
\label{sec:surrogate:gp:inference}

In this section, we describe how GPs are used as surrogate models in Bayesian optimisation. We first present the general result for conditioning a GP on arbitrary jointly Gaussian variables, and then specialise to the standard regression case with i.i.d. Gaussian observation noise.

\subsubsection{Gaussian Process Conditioned on Gaussian Evidence}

A fundamental property of GPs is that conditioning on any jointly Gaussian random variable yields another GP. More generally, suppose we have a GP prior for $f$ and observe a vector of variables $\mathbf{z}$, which are jointly Gaussian with $f$ but not necessarily direct noisy observations of $f$. Let $\mathbf{x} = (x_1, \ldots, x_n)$ and $\hat{\mathbf{x}} = (\hat{x}_1, \ldots, \hat{x}_m)$ be two sets of input locations. Define
\begin{align*}
    \mathbf{f} &= (f(x_1), \ldots, f(x_n))^\top, \\
    \hat{\mathbf{f}} &= (f(\hat{x}_1), \ldots, f(\hat{x}_m))^\top, \\
    \mathbf{z} &\sim \mathcal{N}(\boldsymbol{\mu}_z, \mathbf{K}_{zz}),
\end{align*}
where $\mathbf{z}$ can have arbitrary covariance $\mathbf{K}_{zz}$ and cross-covariance $\mathbf{K}_{\hat{\mathbf{f}}z}$ with $\hat{\mathbf{f}}$.

The joint distribution is
\[
\begin{bmatrix}
\mathbf{z} \\
\hat{\mathbf{f}}
\end{bmatrix}
\sim \mathcal{N}\left(
\begin{bmatrix}
\boldsymbol{\mu}_z \\
\boldsymbol{\mu}_{\hat{f}}
\end{bmatrix},
\begin{bmatrix}
\mathbf{K}_{zz} & \mathbf{K}_{z\hat{f}} \\
\mathbf{K}_{\hat{f}z} & \mathbf{K}_{\hat{f}\hat{f}}
\end{bmatrix}
\right)
\]
where $\mathbf{K}_{z\hat{f}}$ and $\mathbf{K}_{\hat{f}z}$ are the mutual covariance matrices. Conditioning on $\mathbf{z}$, the posterior for $\hat{\mathbf{f}}$ is
\[
\hat{\mathbf{f}} \mid \mathbf{z} \sim \mathcal{N}\left(
\boldsymbol{\mu}_{\hat{f}} + \mathbf{K}_{\hat{f}z} \mathbf{K}_{zz}^{-1} (\mathbf{z} - \boldsymbol{\mu}_z), \;
\mathbf{K}_{\hat{f}\hat{f}} - \mathbf{K}_{\hat{f}z} \mathbf{K}_{zz}^{-1} \mathbf{K}_{z\hat{f}}
\right)
\]
This general result applies to any jointly Gaussian variables. The classical GP regression is a special case, where $\mathbf{z}$ are noisy observations of $f$ with i.i.d. Gaussian noise, which is discussed next.

\subsubsection{GP Conditioned on Noisy Observations}

In the standard GP regression setting, we observe noisy measurements $\mathbf{y} = (y_1, \ldots, y_n)^\top$ at inputs $\mathbf{x} = (x_1, \ldots, x_n)$, where $y_i = f(x_i) + \varepsilon_i$ and $\varepsilon_i \sim \mathcal{N}(0, \sigma^2)$ i.i.d. The joint distribution of observed and predicted function values at new inputs $\hat{\mathbf{x}} = (\hat{x}_1, \ldots, \hat{x}_m)$ is:
\[
\begin{bmatrix}
\mathbf{y} \\
\hat{\mathbf{f}}
\end{bmatrix}
\sim \mathcal{N}\left(
\begin{bmatrix}
\mathbf{m} \\
\hat{\mathbf{m}}
\end{bmatrix},
\begin{bmatrix}
\mathbf{K}_{\mathbf{x}\mathbf{x}} + \sigma^2 \mathbf{I} & \mathbf{K}_{\mathbf{x}\hat{\mathbf{x}}} \\
\mathbf{K}_{\hat{\mathbf{x}}\mathbf{x}} & \mathbf{K}_{\hat{\mathbf{x}}\hat{\mathbf{x}}}
\end{bmatrix}
\right),
\]
where $\mathbf{m} = (m(x_1), \ldots, m(x_n))^\top$, $\hat{\mathbf{m}} = (m(\hat{x}_1), \ldots, m(\hat{x}_m))^\top$, and $\mathbf{K}_{\mathbf{x}\hat{\mathbf{x}}}$ is the $n \times m$ matrix with entries $k(x_i, \hat{x}_j)$.

Conditioning on the observed data, the posterior predictive distribution for $\hat{\mathbf{f}}$ is:
\[
\hat{\mathbf{f}} \mid \mathbf{x}, \mathbf{y}, \hat{\mathbf{x}} \sim \mathcal{N}(\boldsymbol{\mu}_{\hat{x}}, \boldsymbol{\Sigma}_{\hat{x}}),
\]
where
\begin{align*}
\boldsymbol{\mu}_{\hat{x}} &= \hat{\mathbf{m}} + \mathbf{K}_{\hat{\mathbf{x}}\mathbf{x}} [\mathbf{K}_{\mathbf{x}\mathbf{x}} + \sigma^2 \mathbf{I}]^{-1} (\mathbf{y} - \mathbf{m}), \\
\boldsymbol{\Sigma}_{\hat{x}} &= \mathbf{K}_{\hat{\mathbf{x}}\hat{\mathbf{x}}} - \mathbf{K}_{\hat{\mathbf{x}}\mathbf{x}} [\mathbf{K}_{\mathbf{x}\mathbf{x}} + \sigma^2 \mathbf{I}]^{-1} \mathbf{K}_{\mathbf{x}\hat{\mathbf{x}}}.
\end{align*}

The i.i.d. additive noise assumption is strong but often reasonable in practice. The posterior mean provides the best estimate of the function at new points, while the posterior covariance quantifies uncertainty---crucial for exploration in Bayesian optimisation.

\subsection{Mean Function}

The mean function $m(x)$ encodes prior beliefs about the average value of $f(x)$. Common choices include:

\begin{itemize}
    \item \textbf{Zero mean:} $m(x) = 0$. This is the default when no prior information is available.
    \item \textbf{Constant mean:} $m(x) = c$, where $c$ is a constant hyperparameter.
    \item \textbf{Linear mean:} $m(x) = \beta^\top x$, where $\beta$ is a vector of coefficients.
    \item \textbf{Linear combination of basis functions:} $m(x) = \sum_{j} \beta_j \phi_j(x)$, where $\{\phi_j\}$ are user-chosen basis functions and $\beta_j$ are coefficients.
\end{itemize}

The parameters of the mean function (e.g., $c$, $\beta$) are treated as hyperparameters. They can be chosen manually or automatically adapted from data (see Section~\ref{sec:surrogate:hyperparam_ada}).

\subsection{Kernel Function}

The kernel function $k(x, x')$ encodes assumptions about smoothness, periodicity, and other structural properties of $f$. It must be symmetric and positive semi-definite to ensure the resulting covariance matrix is valid.

\begin{definition}[Positive Semi-Definite Kernel]
A function $k: \mathcal{X} \times \mathcal{X} \to \mathbb{R}$ is a \emph{positive semi-definite (PSD) kernel} if, for any finite set $\{x_1, \ldots, x_n\} \subset \mathcal{X}$, the matrix $K$ with entries $K_{ij} = k(x_i, x_j)$ is symmetric and positive semi-definite, i.e., for any $\mathbf{a} \in \mathbb{R}^n$, $\mathbf{a}^\top K \mathbf{a} \geq 0$.
\end{definition}

\subsubsection{Basic Choices}

Common kernel functions include:

\begin{itemize}
    \item \textbf{Squared Exponential (RBF) Kernel:}
    \[
    k(x, x') = \sigma_f^2 \exp\left(-\frac{\|x - x'\|^2}{2\ell^2}\right),
    \]
    where $\sigma_f^2$ is the signal variance and $\ell$ is the length-scale. This kernel encodes the assumption that the function is infinitely differentiable (very smooth), and the length-scale controls how quickly correlation decays with distance.
    \item \textbf{Matern Kernel:}
    \[
    k(x, x') = \sigma_f^2 \frac{2^{1-\nu}}{\Gamma(\nu)} \left( \frac{\sqrt{2\nu} \|x - x'\|}{\ell} \right)^\nu K_\nu \left( \frac{\sqrt{2\nu} \|x - x'\|}{\ell} \right),
    \]
    where $K_\nu$ is the modified Bessel function, and $\nu > 0$ is a smoothness parameter. The Matern kernel allows control over the degree of smoothness; e.g., $\nu=1/2$ yields the exponential kernel, and as $\nu\to\infty$ it approaches the RBF kernel.
    \item \textbf{Linear Kernel:}
    \[
    k(x, x') = \sigma_b^2 + x^\top x',
    \]
    which encodes linear functions and is suitable when the underlying function is expected to be approximately linear.
    \item \textbf{Periodic Kernel:}
    \[
    k(x, x') = \sigma_f^2 \exp\left( -\frac{2\sin^2(\pi |x - x'| / p)}{\ell^2} \right),
    \]
    where $p$ is the period. This kernel is suitable for modelling periodic functions, such as seasonal effects.
\end{itemize}

Each kernel has associated hyperparameters (e.g., length-scale, variance) that control the behaviour of the GP and are typically learned from data (see Section~\ref{sec:surrogate:hyperparam_ada}).

\subsubsection{Non-Numeric Kernels}

GPs are not limited to numeric data; kernels can be defined for structured or discrete objects, enabling GPs to model complex domains.

\paragraph{String Kernel}  
String kernels measure the similarity between sequences (such as DNA, RNA, or protein sequences) by counting shared substrings or motifs. For example, the \emph{spectrum kernel} is defined as
\[
k(s, t) = \sum_{u \in \mathcal{A}^k} \#_u(s) \cdot \#_u(t),
\]
where $\#_u(s)$ is the number of occurrences of substring $u$ of length $k$ in string $s$, and $\mathcal{A}$ is the alphabet. Lodhi et al.~\cite{lodhiTextClassificationUsing2000} introduced efficient algorithms for computing such kernels, enabling GPs to handle high-dimensional sequence data for text classification and bioinformatics.  For application of the string kernel in Bayesian optimization, we refer the readers to \textcite{mossBOSSBayesianOptimization2020}.

\paragraph{Fisher Kernel}  
The Fisher kernel leverages a generative probabilistic model $p(x|\eta)$, with parameters $\eta$, to define similarity as
\[
k(x, x') = \nabla_\eta \log p(x|\eta_0)^\top F^{-1} \nabla_\eta \log p(x'|\eta_0),
\]
where $F$ is the Fisher information matrix and $\eta_0$ is a reference parameter value. This kernel is suitable when a generative model captures important structure in the data. By specifying the generative probabilistic model $p(x|\eta)$, the Fisher kernel can handle design spaces consisting of graphs, sequences, or other non-vectorial data, allowing the GP to exploit domain-specific similarity measures.

\subsubsection{Constructing New Kernels from Base Kernels}

Kernels are closed under addition, multiplication, and scaling, enabling the construction of complex kernels from simpler ones. 

\begin{itemize}
    \item \textbf{Sum:} If $k_1$ and $k_2$ are PSD kernels, then $k(x, x') = k_1(x, x') + k_2(x, x')$ is PSD. This allows the model to capture the additive effects of different features or groups of variables.
    \item \textbf{Product:} If $k_1$ and $k_2$ are PSD, then $k(x, x') = k_1(x, x') \cdot k_2(x, x')$ is also PSD. This enables modelling interactions between different aspects of the input.
    \item \textbf{Scaling:} $k(x, x') = c \cdot k_1(x, x')$ with $c \geq 0$ is PSD.
\end{itemize}

These properties allow practitioners to design kernels that capture multiple aspects of the data, such as periodicity and smoothness, or to encode domain-specific structure. They are crucial for modelling heterogeneous design spaces (e.g., combining categorical and continuous variables), capturing multiple properties (e.g., smoothness and periodicity), or handling high-dimensional spaces by factorizing the kernel across groups of variables. For heterogeneous domains, one can use a \emph{product kernel} $k(x, x') = k_\mathrm{num}(x_\mathrm{num}, x'_\mathrm{num}) \cdot k_\mathrm{cat}(x_\mathrm{cat}, x'_\mathrm{cat})$, where $x_\mathrm{num}$ and $x_\mathrm{cat}$ denote the numeric and categorical components, respectively. For high-dimensional spaces, kernels can be factorized or built as sums/products over groups of features, allowing practitioners to encode prior knowledge about relevant interactions. Some studies also explore the autonomous search of composite kernel structures based on these properties from a series of base kernels \cite{duvenaudStructureDiscoveryNonparametric2013}.

\subsection{Implementation and Discussion}

\subsubsection{Complexity Analysis and Approximate Inference}

The primary computational bottleneck is the inversion of the $n \times n$ covariance matrix, which scales as $\mathcal{O}(n^3)$ in time and $\mathcal{O}(n^2)$ in memory. For sequential data, incremental updates and matrix factorisation techniques (e.g., Cholesky decomposition) can be used to speed up computations. For large datasets, sparse GP approximations or inducing point methods are employed to reduce complexity.

Exact inference in GPs is computationally expensive for large $n$ (typically $\mathcal{O}(n^3)$ time and $\mathcal{O}(n^2)$ memory). Approximate inference techniques, such as variational inference \cite{hensmanMCMCVariationallySparse2015, tranVariationalGaussianProcess2016} and sparse GPs \cite{snelsonSparseGaussianProcesses2005, lazaro-gredillaSparseSpectrumGaussian2010}, typically reduce the computational cost to roughly $\mathcal{O}(nm^2)$ time and $\mathcal{O}(nm)$ memory, for some $m \ll n$. A detailed discussion of these techniques is beyond the scope of this tutorial.

\subsection{Equivalence to Bayesian Inference of Linear Regression}

Bayesian linear regression with a feature map $\phi(x)$ and Gaussian prior over weights is equivalent to a GP with kernel $k(x, x') = \phi(x)^\top \Sigma \phi(x')$. Specifically, let $y = \omega^\top \phi(x) + \varepsilon$, with $\omega \sim \mathcal{N}(0, \Sigma)$ and $\varepsilon \sim \mathcal{N}(0, \sigma^2)$. The predictive distribution for new inputs is Gaussian, with mean and covariance matching those of a GP with the specified kernel. This equivalence underpins the close relationship between kernel methods and Bayesian linear models, and extends to infinite-width neural networks via the Neural Tangent Kernel.

\paragraph{Mathematical Derivation}
Let $\mathbf{y} = \Phi \omega + \boldsymbol{\varepsilon}$, where $\Phi$ is the $n \times d$ design matrix with $\Phi_{ij} = \phi_j(x_i)$, $\omega \sim \mathcal{N}(0, \Sigma)$, and $\boldsymbol{\varepsilon} \sim \mathcal{N}(0, \sigma^2 \mathbf{I})$. The marginal distribution of $\mathbf{y}$ is
\[
\mathbf{y} \sim \mathcal{N}(0, \Phi \Sigma \Phi^\top + \sigma^2 \mathbf{I})
\]
For a new input $\hat{x}$, let $\hat{\phi} = \phi(\hat{x})$. The joint distribution of $(\mathbf{y}, \hat{y})$ is
\[
\begin{bmatrix}
\mathbf{y} \\
\hat{y}
\end{bmatrix}
\sim \mathcal{N}\left(
\mathbf{0},
\begin{bmatrix}
\Phi \Sigma \Phi^\top + \sigma^2 \mathbf{I} & \Phi \Sigma \hat{\phi} \\
\hat{\phi}^\top \Sigma \Phi^\top & \hat{\phi}^\top \Sigma \hat{\phi}
\end{bmatrix}
\right)
\]
By the formula for conditional Gaussians, the predictive distribution for $\hat{y}$ given $\mathbf{y}$ is
\[
\hat{y} \mid \mathbf{y} \sim \mathcal{N}\left(
\hat{\phi}^\top \Sigma \Phi^\top [\Phi \Sigma \Phi^\top + \sigma^2 \mathbf{I}]^{-1} \mathbf{y},\;
\hat{\phi}^\top \Sigma \hat{\phi} - \hat{\phi}^\top \Sigma \Phi^\top [\Phi \Sigma \Phi^\top + \sigma^2 \mathbf{I}]^{-1} \Phi \Sigma \hat{\phi}
\right)
\]
which matches the GP predictive equations with kernel $k(x, x') = \phi(x)^\top \Sigma \phi(x')$.

\section{Implementation \TECHNICAL}

\subsection{Feature Representation}

The effectiveness of surrogate models, particularly parametric ones, often hinges on the choice of feature representation. In scientific domains, domain-specific representations can dramatically improve model performance. For example, in chemistry, molecular graphs can be embedded using graph neural networks, descriptors, or fingerprints; in biology, protein sequences may be represented using sequence embeddings or structural features. The choice of representation should reflect the relevant properties of the domain and the nature of the optimisation task.

\subsection{Hyperparameter Adaptation}
\label{sec:surrogate:hyperparam_ada}

Surrogate models, especially GPs, involve several hyperparameters, including those governing the kernel (e.g., length-scale, variance), the mean function, and the noise model. The choice of hyperparameters significantly affects the surrogate's predictive accuracy and uncertainty quantification.

\subsubsection{Maximising the Marginal Likelihood}

A principled approach to hyperparameter selection is to maximise the marginal likelihood, which is the probability of the observed data under the model, integrating over the latent function values. For a surrogate model with hyperparameters $\lambda$, this means
\begin{equation}
\lambda^* = \arg\max_\lambda \; p(\mathbf{y} \mid \mathbf{x}, \lambda)
\label{eq:surrogate:max_marginal_likeli}
\end{equation}
where $p(\mathbf{y} \mid \mathbf{x}, \lambda)$ is the likelihood of the observed data given the hyperparameters. This approach applies to any surrogate model with hyperparameters. Taking the GP surrogate (see Section~\ref{sec:surrogate:gp:inference}) for example, given observations with i.i.d. additive Gaussian noise, Eq.~\eqref{eq:surrogate:max_marginal_likeli} becomes maximizing:
\[
\log p(\mathbf{y} \mid \mathbf{x}, \lambda)
= -\frac{1}{2}(\mathbf{y} - \mathbf{m})^\top \bigl(\mathbf{K}_{\mathbf{x}\mathbf{x}} + \sigma^2 \mathbf{I}\bigr)^{-1} (\mathbf{y} - \mathbf{m})
  - \frac{1}{2} \log \det\bigl(\mathbf{K}_{\mathbf{x}\mathbf{x}} + \sigma^2 \mathbf{I}\bigr)
  - \frac{n}{2} \log(2\pi),
\]
where $n$ is the dimension (length) of $\mathbf{y}$.

\subsubsection{Hierarchical Bayes}

Alternatively, a fully Bayesian treatment places priors over the hyperparameters and integrates them out, yielding a hierarchical model. In the Bayesian framework, the predictive distribution for a new input $\hat{x}$ is
\[
p(\hat{y} \mid \hat{x}, \mathbf{x}, \mathbf{y}) = \int p(\hat{y} \mid \hat{x}, \mathbf{x}, \mathbf{y}, \lambda) \, p(\lambda \mid \mathbf{x}, \mathbf{y}) \, d\lambda,
\]
where $p(\lambda \mid \mathbf{x}, \mathbf{y})$ is the posterior over hyperparameters. In practice, this integration is often intractable and is approximated using Markov Chain Monte Carlo (MCMC) or variational inference. Hierarchical Bayes provides more robust uncertainty estimates, particularly when data is scarce or hyperparameters are poorly identified.

\section{Coding the Surrogate Model \CODE}

This section provides a practical introduction to surrogate modelling within Bayesian Optimization (BO) software, focusing on Gaussian Process (GP) models. First, we show how surrogate models are typically implemented and configured in BO frameworks, exemplified by HEBO. Furthermore, we demonstrate how to create and customize Gaussian Process surrogates from scratch using two popular Python libraries. The coding patterns we present are invariant across applications and problem settings, so we do not provide case-by-case studies.

\subsection{Surrogate Modelling in BO Software}

Most BO software naturally employs Gaussian Processes (see Section~\ref{sec:surrogate:gp}) as surrogate models, where the hyperparameters are inferred from the available data. Through the software interface, users can typically configure the surrogate model by passing specific arguments; however, the degree of customization is often limited by the interface design. 

\subsection{Surrogate Modelling in HEBO}

In the following, we use HEBO \cite{cowen-riversHEBOPushingLimits2022} as an illustrative example for surrogate modelling: 

\begin{lstlisting}[language=Python]
from hebo.optimizers.hebo import HEBO
from hebo.design_space.design_space import DesignSpace

# Define the design space (example)
space = DesignSpace().parse({
    "name": "x1", "type": "num", "lb": 0, "ub": 1
})

# Instantiate HEBO with a specific surrogate model
opt = HEBO(
    space=space,
    model_name="gp",  # Surrogate model name (default: Gaussian Process)
    model_config={"lr": 0.01, "num_epochs": 100}  # Configuration for the surrogate.
)
\end{lstlisting}

For the Gaussian Process, which is the default surrogate, HEBO automatically constructs a kernel suitable for heterogeneous variables. Specifically, it employs a Matern kernel for numeric variables and combines it with appropriate kernels for categorical variables when necessary. Kernel hyperparameters, such as lengthscales, are initialized based on the dataset. This design is practical for most scenarios, offering a robust and flexible surrogate without requiring users to specify detailed configurations.

HEBO’s GP implementation is powered by GPyTorch \cite{gardnerGPyTorchBlackboxMatrixMatrix2018}, which fits kernel and mean function hyperparameters from data. Additionally, HEBO enables users to customize the GP further by passing a dictionary via \texttt{model\_config}. For example, one can specify the learning rate (\texttt{lr}) and the number of epochs (\texttt{num\_epochs}) for hyperparameter optimization. Detailed configuration of the GP in HEBO is beyond the scope of this tutorial\footnote{For readers interested in in-depth customization of GP surrogates and possessing relevant coding expertise, we refer to the HEBO source code at \url{https://github.com/huawei-noah/HEBO/tree/master/HEBO/hebo}.}, and the default HEBO settings are generally suitable for most use cases without additional configuration.

HEBO also supports a variety of non-GP surrogate models. For instance, \texttt{rf} (Random Forest) is suitable for non-smooth objective functions; \texttt{deep\_ensemble}, \texttt{psgld}, and \texttt{mcbn} are neural network-based models designed for probabilistic regression \cite{liPreconditionedStochasticGradient2016, teyeBayesianUncertaintyEstimation2018}. Moreover, HEBO offers specialized models such as \texttt{gumbel}, \texttt{masked\_deep\_ensemble}, and \texttt{fe\_deep\_ensemble}, which incorporate feature selection mechanisms. While this tutorial does not delve into the technical details of these surrogates---topics that require extensive machine learning background---it is useful to acknowledge their existence. This awareness enables users to tackle a broader range of optimization problems beyond the scope of traditional Gaussian Processes.

\subsubsection{Surrogate Modelling in Bgolearn}
\label{sec:surrogate:coding:gp_impl:bgolearn}

\begin{center}
\CODE\,\url{https://github.com/zwyu-ai/BO-Tutorial-for-Sci/blob/main/coding_illustrations/gp_bo_bgolearn.ipynb}
\end{center}

The \texttt{Bgolearn} \cite{cao2026bgolearn} package provides an ensemble surrogate modelling framework for Bayesian optimization. In this example, we demonstrate how to construct a GP surrogate model using history data files.

\begin{lstlisting}[language=Python]
import pandas as pd
data = pd.read_excel('./data/train.xlsx')
vs = pd.read_excel('./data/visual_samples.xlsx')

import Bgolearn.BGOsampling as BGOS
Bgolearn = BGOS.Bgolearn() 

Mymodel = Bgolearn.fit(
    data_matrix=data.iloc[:,:-1],
    Measured_response=data.iloc[:,-1],
    virtual_samples=vs,
    Classifier='GaussianProcess'  # Explicit specification
)
\end{lstlisting}

The \texttt{Bgolearn} package supports several surrogate models, each with different strengths and typical use cases (see \url{https://bgolearn.netlify.app/surrogate_models}):

\begin{itemize}
    \item \textbf{Gaussian Process} -- Default and most commonly used model.
    \item \textbf{Random Forest} -- Effective for discrete or categorical features.
    \item \textbf{Support Vector Regression} -- Robust to noisy data.
    \item \textbf{Multi-Layer Perceptron} -- Neural network-based approach.
    \item \textbf{AdaBoost} -- Ensemble learning method.
\end{itemize}

\subsection{Creating Your Own Gaussian Processes}
\label{sec:surrogate:coding:gp_impl}

The flexibility of surrogate models within BO software is often limited by the interface and typically encapsulated deep within the implementation, making it difficult to inspect or customize. However, it is possible to fully examine, control, and even tailor surrogate modelling. In this section, we demonstrate how to achieve this in Python.

Specifically, we introduce two libraries---Scikit-Learn \cite{scikit-learn} and GPyTorch \cite{gardnerGPyTorchBlackboxMatrixMatrix2018}---to create Gaussian Process surrogates (see Section~\ref{sec:surrogate:gp}) for Bayesian optimization. Scikit-Learn is well-suited for small to moderate-sized datasets and rapid prototyping due to its unified interface and ease of use. It abstracts many technical details, allowing researchers to focus on modelling concepts rather than implementation intricacies. Conversely, GPyTorch is designed for scalability and customizability. It supports GPU-accelerated inference and training, making it ideal for large-scale or high-dimensional problems. Furthermore, GPyTorch offers fine-grained control over GP architectures, kernel composition, and training procedures, which is essential for advanced research and engineering applications. This flexibility, however, comes with increased complexity and a steeper learning curve.

This tutorial serves as an introductory guide, aiming to provide conceptual clarity and practical orientation rather than exhaustive technical coverage. The code examples presented are deliberately minimal and intended for illustration only. Readers with a strong interest in Gaussian Process modelling or those seeking to deploy GPs in production or at scale are encouraged to consult the official documentation and community resources for Scikit-Learn (\url{https://scikit-learn.org/stable/modules/gaussian_process.html}) and GPyTorch (\url{https://gpytorch.ai/}). Both libraries are actively maintained and offer extensive examples, API references, and tutorials. Replicating such comprehensive material here is neither practical nor desirable; instead, our focus is to introduce fundamental concepts and direct readers to authoritative sources for further study.

\subsubsection{GP Modelling using Scikit-Learn}
\label{sec:surrogate:coding:gp_impl:sklearn}

\begin{center}
\CODE\,\url{https://github.com/zwyu-ai/BO-Tutorial-for-Sci/blob/main/coding_illustrations/gp_bo_sklearn.ipynb}
\end{center}

Scikit-Learn provides a user-friendly interface for Gaussian Process regression through the \texttt{GaussianProcessRegressor} class. The workflow is streamlined and accessible: users first specify a kernel function encoding prior assumptions about the smoothness and structure of the underlying function. The \texttt{GaussianProcessRegressor} is then instantiated with the kernel and other relevant parameters, such as noise level. Fitting the model to data is performed via the \texttt{fit} method, which automatically optimizes kernel hyperparameters by maximizing the marginal likelihood. Predictions on new data are made using the \texttt{predict} method, which returns both mean predictions and uncertainty estimates. This design abstracts much of the complexity of GP inference, allowing users to focus on model selection and interpretation rather than implementation details. The interface is consistent with other Scikit-Learn estimators, facilitating integration into broader machine learning workflows. The following example demonstrates fitting a GP to noisy data and making predictions:

\begin{lstlisting}[language=Python]
import numpy as np
from sklearn.gaussian_process import GaussianProcessRegressor
from sklearn.gaussian_process.kernels import Matern, RBF, ConstantKernel

# Generate synthetic training data
X_train = np.array([0.1, 0.4, 0.6, 0.7, 0.8]).reshape(-1, 1)
y_train = np.sin(2 * np.pi * X_train).ravel() + np.random.normal(0, 0.1, X_train.shape[0])

# Define kernel: constant * RBF + Matern (as an example)
kernel = ConstantKernel(1.0) * RBF(length_scale=0.2) + Matern(length_scale=0.2, nu=1.5)

# Instantiate GP regressor
gp = GaussianProcessRegressor(kernel=kernel, alpha=0.1**2, n_restarts_optimizer=10)

# Fit to data
gp.fit(X_train, y_train)

# Predict at new points
X_test = np.linspace(0, 1, 100).reshape(-1, 1)
y_pred, y_std = gp.predict(X_test, return_std=True)

# Plot results (optional)
import matplotlib.pyplot as plt
plt.figure()
plt.plot(X_test, np.sin(2 * np.pi * X_test), 'k--', label='True Function')
plt.plot(X_train, y_train, 'ro', label='Observed')
plt.plot(X_test, y_pred, 'b-', label='GP Mean')
plt.fill_between(X_test.ravel(), y_pred - 2*y_std, y_pred + 2*y_std, color='blue', alpha=0.2, label='95% CI')
plt.legend()
plt.show()
\end{lstlisting}

Scikit-Learn’s GP module emphasizes simplicity and consistency. It is highly accessible for beginners, with automatic kernel hyperparameter optimization during fitting. The \texttt{predict} method provides both mean predictions and uncertainty estimates. For further details and advanced usage, readers should consult the official documentation at \url{https://scikit-learn.org/stable/modules/gaussian_process.html}.

\subsubsection{GP Modelling using GPyTorch}
\label{sec:surrogate:coding:gp_impl:gpytorch}

\begin{center}
\CODE\,\url{https://github.com/zwyu-ai/BO-Tutorial-for-Sci/blob/main/coding_illustrations/gp_bo_gpytorch.ipynb}
\end{center}

In contrast, GPyTorch offers a more modular and extensible framework for Gaussian Process modelling, tailored to advanced research and large-scale applications. Built on PyTorch, GPyTorch is particularly well-suited for scalable Gaussian Process modelling, enabling extensive customization of GP architectures.

The typical workflow begins by subclassing \texttt{gpytorch.models.ExactGP} to define a custom GP model, specifying mean and covariance (kernel) modules within the constructor. A likelihood object---often a Gaussian likelihood that models observation noise---is instantiated separately and linked to the model. Training proceeds via a manual optimization loop: the model and likelihood are set to training mode, and the marginal log likelihood is maximized using an optimizer such as Adam. This loop requires explicit computation of gradients and parameter updates, offering fine-grained control over the learning process. For prediction, the model and likelihood switch to evaluation mode, and the predictive distribution is computed for new inputs. The following example illustrates this basic workflow:

\begin{lstlisting}[language=Python]
import torch
import gpytorch
import numpy as np

# Generate synthetic data
X_train = torch.tensor([0.1, 0.4, 0.6, 0.7, 0.8]).unsqueeze(-1)
y_train = torch.sin(2 * np.pi * X_train).squeeze() + 0.1 * torch.randn(X_train.size(0))

# Define GP model
class GPModel(gpytorch.models.ExactGP):
    def __init__(self, train_x, train_y, likelihood):
        super().__init__(train_x, train_y, likelihood)
        self.mean_module = gpytorch.means.ConstantMean()
        self.covar_module = gpytorch.kernels.ScaleKernel(
            gpytorch.kernels.RBFKernel()
        )
    def forward(self, x):
        mean_x = self.mean_module(x)
        covar_x = self.covar_module(x)
        return gpytorch.distributions.MultivariateNormal(mean_x, covar_x)

# Instantiate likelihood and model
likelihood = gpytorch.likelihoods.GaussianLikelihood()
model = GPModel(X_train, y_train, likelihood)

# Training loop
model.train()
likelihood.train()
optimizer = torch.optim.Adam(model.parameters(), lr=0.1)
mll = gpytorch.mlls.ExactMarginalLogLikelihood(likelihood, model)

for i in range(100):
    optimizer.zero_grad()
    output = model(X_train)
    loss = -mll(output, y_train)
    loss.backward()
    optimizer.step()

# Switch to eval mode for prediction
model.eval()
likelihood.eval()
X_test = torch.linspace(0, 1, 100).unsqueeze(-1)
with torch.no_grad(), gpytorch.settings.fast_pred_var():
    pred = model(X_test)
    y_pred = pred.mean
    y_std = pred.stddev

# Plot results (optional)
import matplotlib.pyplot as plt
plt.figure()
plt.plot(X_train.numpy(), y_train.numpy(), 'ro', label='Observed')
plt.plot(X_test.numpy(), y_pred.numpy(), 'b-', label='GP Mean')
plt.plot(X_test.numpy(), np.sin(2 * np.pi * X_test.numpy()), 'k--', label='True Function')
plt.fill_between(X_test.numpy().ravel(),
                 (y_pred - 2*y_std).numpy(),
                 (y_pred + 2*y_std).numpy(),
                 color='blue', alpha=0.2, label='95% CI')
plt.legend()
plt.show()
\end{lstlisting}

This approach allows extensive customization, including GPU acceleration, approximate inference, and bespoke kernels. It presumes familiarity with PyTorch’s computational model and is best suited for readers with an engineering or machine learning background. For comprehensive guidance, refer to the official documentation at \url{https://gpytorch.ai/}.

%% file: v2/decision.tex
\part{Acting and Decision Making}
\label{part:decision}

The concept of decision making can be viewed as a mapping from knowledge to action. In the context of scientific discovery, decision making translates our current beliefs about a hypothesis into actions---specifically, the selection of subsequent experiments. The overarching aim is to acquire more knowledge about the unknown and to achieve higher objective values.

In sequential optimisation, this process is formalised as a \emph{policy} $\pi$, which maps the accumulated data $\mathcal{D}_t$ to the design $x_{t+1} \in \mathcal{X}$ for the next experiment. For both sequential model-based optimisation (SMBO) and Bayesian Optimisation (BO), this policy $\pi$ determines the next action using the posterior model $\mathcal{M}(\hat{\mathbf{f}}|\hat{\mathbf{x}}, \mathcal{D}_t)$. In the sections that follow, we explore the concepts, theoretical foundations, and coding implementations that underlie the construction of such a policy within the BO framework.

\section{Acquisition Function \ELEMENTARY}
\label{sec:decision:acq_func}

Given a surrogate model for the unknown objective $f: \mathcal{X} \to \mathbb{R}$, we now turn our attention to the crucial aspect of decision making: namely, given the current model $\mathcal{M}(\hat{\mathbf{f}}|\hat{\mathbf{x}}, \mathcal{D}_t)$, how should we select the next design $x_{t+1}$? To address this question, BO reframes decision making as the optimisation of a tractable, white-box function known as the \emph{acquisition function}.

\subsection{Motivation and Definition}

Experiments aimed at efficient discovery must be designed according to sound principles. The central idea of BO aligns closely with the Bayesian philosophy of scientific discovery discussed in Section~\ref{sec:intro:bayesian_discovery}: experiments should be designed not only by leveraging our current knowledge, but also by acknowledging and incorporating the uncertainty arising from our ignorance \cite{garnettBayesianOptimization2023}. In this context, the acquisition function acts as a principled criterion for evaluating the potential utility of each candidate point, quantifying the expected benefit of evaluating the objective at that point given the current surrogate model and observed data. It provides a systematic approach to selecting the next experiment, inherently balancing the trade-off between \emph{exploration} (gathering information in poorly understood regions) and \emph{exploitation} (focusing on regions likely to yield high objective values).

\begin{definition}[Acquisition Function]
The acquisition function, denoted $\alpha(x|\mathcal{D})$, is a known function that can be efficiently computed from the surrogate model and the current data $\mathcal{D}$. It assigns a score to each candidate point $x \in \mathcal{X}$, reflecting the expected benefit of evaluating $f(x)$ given the current knowledge. Given the acquisition function, the decision-making process becomes the maximisation of the acquisition function:
\begin{equation}
x_{t+1} = \arg\max_{x \in \mathcal{X}} \alpha(x|\mathcal{D}_t)
\end{equation}
\end{definition}

The acquisition function is constructed upon the surrogate model and is designed to be much less computationally demanding than evaluating the true objective function. In practice, this means that the computational cost associated with optimising the acquisition function is negligible compared to the expense of conducting real experiments. As a result, the process of decision making---i.e., maximising the acquisition function---is far more tractable than directly optimising the costly, black-box objective. In essence, by maximising the acquisition function, we transform the complex problem of experimental design into a manageable optimisation task. Furthermore, this approach is firmly grounded in the principles of Bayesian decision theory, which we discuss in greater detail in Section~\ref{sec:decision:bayesian_decision_theory}.

\subsection{Trading off Exploration and Exploitation}

A distinguishing feature of Bayesian Optimisation is that the surrogate model $\mathcal{M}$ provides not only predictions but also uncertainty estimates, as a result of Bayesian inference. This uncertainty can be strategically exploited to actively explore the unknown objective function. In this setting, decision making must strike a careful balance between two competing objectives: \emph{exploration}, which involves sampling points where the model is uncertain in order to reduce uncertainty and potentially discover superior solutions; and \emph{exploitation}, which involves sampling points where the model predicts high objective values, thereby seeking to improve performance based on current knowledge.

The acquisition function is specifically designed to mediate this exploration--exploitation trade-off. Its mathematical form typically incorporates both the predicted mean and the uncertainty (variance) of the surrogate model at each candidate point. By doing so, the acquisition function encourages sampling in regions that are either highly promising (exploitation) or highly uncertain (exploration). This dual focus enables BO to efficiently optimise the unknown objective by judiciously allocating experimental resources.

\subsection{Common Choices of Acquisition Functions}
\label{sec:decision:common_acq_choices}

\begin{figure}[tbh]
\centering
\includegraphics[trim={0em 12em 0em 13em}, clip=true, width=0.75\linewidth]{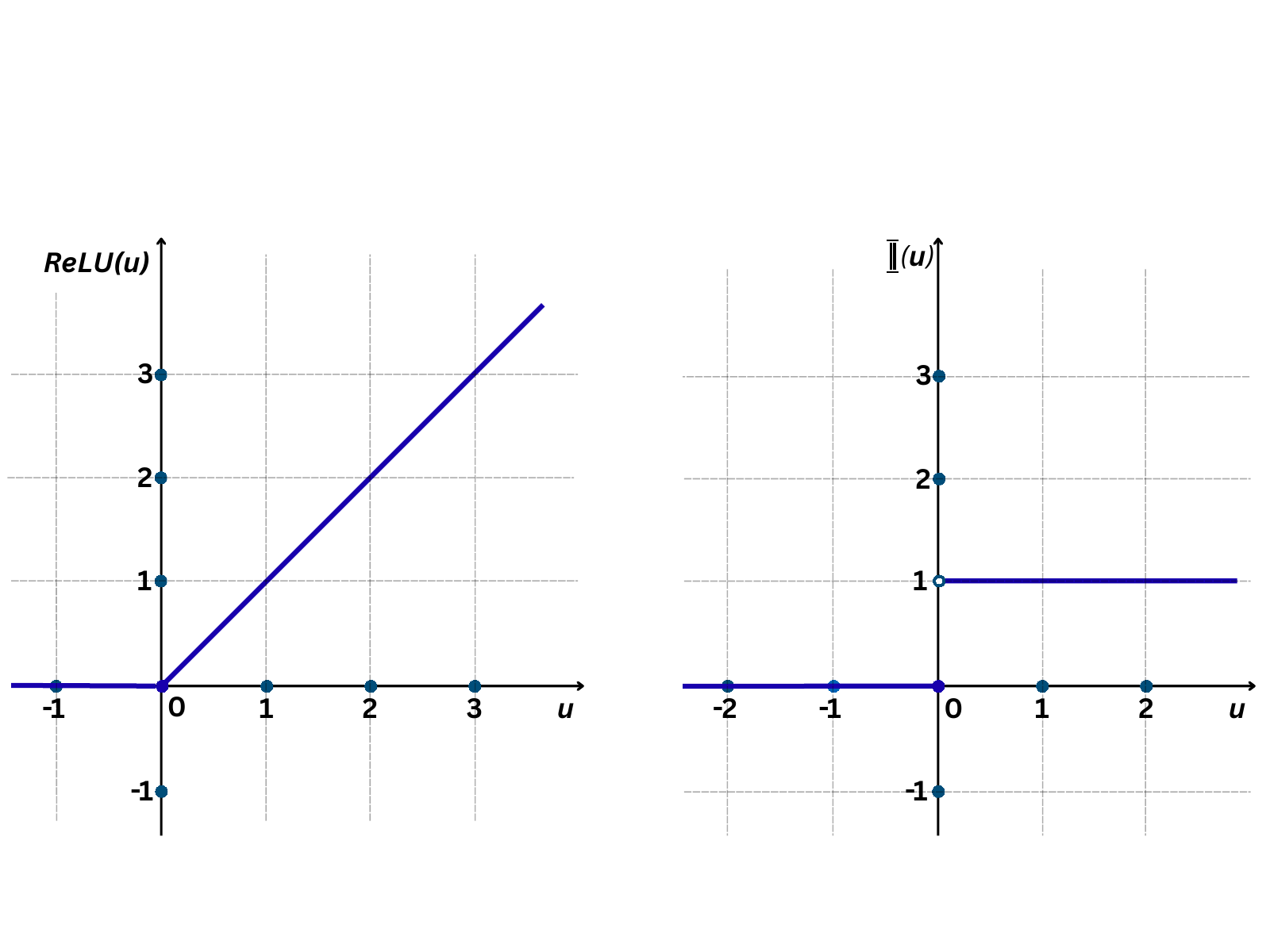}
\caption{\textbf{ReLU $(u)$ \& $\mathbb{I}(u)$} Maximising each of these functions implies encouraging their internal expressions to be positive. In the case of the Heaviside function, this corresponds to driving the argument above zero to trigger a non-zero response. For the ReLU function, the goal is to make the argument as large and positive as possible to maximise the acquisition value.}
\label{fig:functions_acq}
\end{figure}

Common choices of acquisition functions in Bayesian optimisation encode different strategies for balancing exploration of uncertain regions and exploitation of promising areas suggested by the surrogate model. Popular choices include Probability of Improvement (PI), which prioritises regions likely to outperform the current best; Expected Improvement (EI), which accounts for both probability and magnitude of improvement; and Upper Confidence Bound (UCB), which explicitly trades off mean prediction and uncertainty. More recent criteria, such as information-theoretic acquisition functions \cite{wangMaxvalueEntropySearch2017}, are not included in this tutorial. Figure~\ref{fig:surrogate:surrogate_acquisitions} illustrates typical shapes of these acquisition functions over a one-dimensional input space.

\paragraph{Expected Improvement (EI)}
The expected improvement has been one of the most commonly used acquisition functions since the early works of \textcite{mockusBayesMethodsSeeking1975, jonesEfficientGlobalOptimization1998}:
\begin{align*}
    \alpha_{\text{EI}}(x\mid\mathcal{D}) 
    = \mathbb{E}_{f \sim \mathcal{M}(\cdot \mid\cdot,\mathcal{D})}\left[\operatorname{ReLU}\bigl(f(x) - f(x^{+})\bigr)\right],
\end{align*}
where $x^+$ is the best performing design in the data so far. The ReLU function is defined as $\operatorname{ReLU}(u) = \max\{0,u\}$ and is illustrated in Figure~\ref{fig:functions_acq}. Here, the subscript $f \sim \mathcal{M}(\cdot \mid\cdot,\mathcal{D})$ indicates that the evaluation of $f$ complies with the posterior surrogate $\mathcal{M}(\hat{\mathbf{f}}|\hat{\mathbf{x}} ,\mathcal{D})$ given data $\mathcal{D}$.

\paragraph{Probability of Improvement (PI)}
PI is another classic acquisition function, first introduced by \textcite{kushnerNewMethodLocating1964}:
\begin{align*}
    \alpha_{\text{PI}}(x\mid\mathcal{D}) 
    = \mathbb{E}_{f \sim \mathcal{M}(\cdot \mid\cdot,\mathcal{D})}\left[\mathbb{I}\left(f(x) - f(x^+)\right)\right],
\end{align*}
where $\mathbb{I}(u)$ is the left-continuous Heaviside step function,
\[
\mathbb{I}(u) = 
\begin{cases}
0 & \text{if } u \leq 0, \\
1 & \text{if } u > 0.
\end{cases}
\]
PI measures the probability that sampling at $x$ will yield an improvement over the current best observation $f(x^+)$. While intuitive and straightforward, PI can be overly exploitative unless an improvement threshold is introduced.

\paragraph{Upper Confidence Bound (UCB)}
UCB is a canonical acquisition function inspired by multi-armed bandit problems \cite{auerUsingConfidenceBounds2002, lattimoreBanditAlgorithms2020}:
\begin{align*}
    \alpha_{\text{UCB}}(x\mid\mathcal{D}) 
    = \mu(x) + \beta\,\sigma(x),
\end{align*}
where $\mu(x)$ and $\sigma(x)$ are the posterior mean and standard deviation of $f(x)$, which are estimated from the surrogate model at $x$. The parameter $\beta > 0$ controls the trade-off between exploration and exploitation: larger values of $\beta$ encourage more exploration, while smaller values favour exploitation. UCB is appealing for its simplicity and strong theoretical guarantees in certain settings.

\paragraph{Thompson Sampling (TS)}
Thompson Sampling \cite{russoTutorialThompsonSampling2020} is an alternative acquisition strategy rooted in bandit algorithms. At each iteration, one samples a function $f$ from the posterior surrogate model, and selects the maximiser:
\begin{align*}
    \alpha_{\text{TS}}(x\mid\mathcal{D}) = f(x), \qquad f \sim \mathcal{M}(\cdot \mid\cdot,\mathcal{D}).
\end{align*}
Thompson Sampling inherently balances exploration and exploitation by sampling according to the model's current uncertainty. Its conceptual simplicity and strong theoretical guarantees make it attractive, especially in parallel or distributed settings.

\begin{figure}
    \centering
    \includegraphics[width=1.0\linewidth]{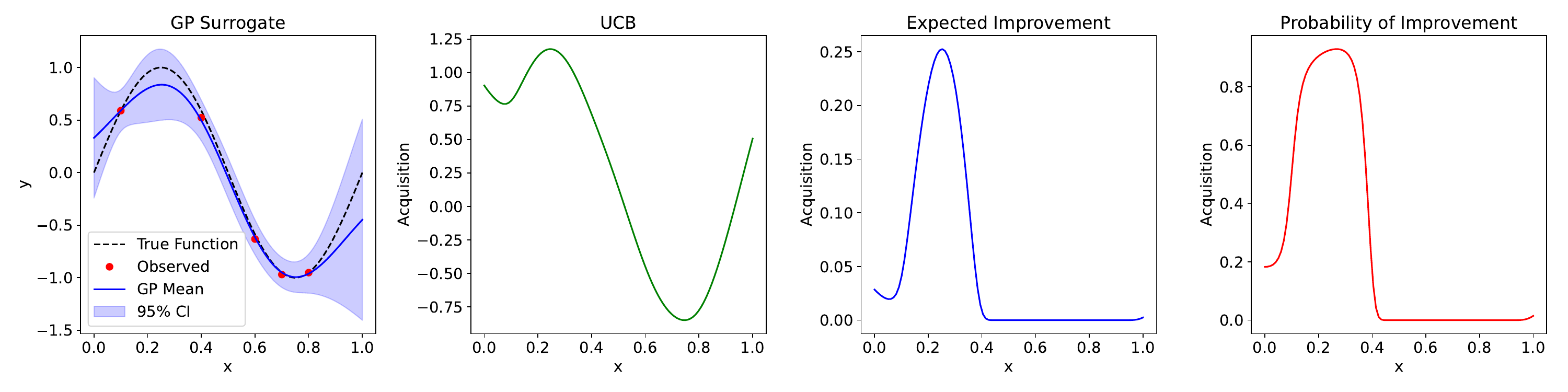}
    \caption{Visualization of the Gaussian Process (GP) surrogate model and acquisition functions used in Bayesian Optimization. The true objective function, $f(x) = \sin(2\pi x)$, is shown as a dashed black line. Observed data points are marked as red circles, where the Gaussian noise satisfies $\mathcal{N}(0, 0.1^2)$. The GP surrogate model is depicted as the blue curve (mean prediction), with the shaded region representing the 95\% confidence interval ($\mu \pm 2\sigma$). The acquisition functions---Upper Confidence Bound (UCB), Expected Improvement (EI), and Probability of Improvement (PI)---are plotted in green, blue, and red, respectively, to illustrate their behaviour in guiding the selection of the next evaluation point. We do not plot Thompson Sampling as it is randomly drawn from the posterior surrogate.}
    \label{fig:surrogate:surrogate_acquisitions}
\end{figure}

It is important to note that the above acquisition functions are \emph{myopic}---they consider only the next decision based on the current estimation of the objective, without anticipating the outcomes of future experiments. In reality, the current decision can influence subsequent decisions; thus, it may be desirable to consider the long-term effects of actions. However, Bayesian Optimisation typically employs these myopic acquisition functions, as they have been found to be sufficiently effective and computationally efficient in practice. The rationale is that the surrogate model is itself an approximation of the true objective, so the cost of overly precise planning---predicting long-term effects given an approximate model---often outweighs the potential benefit. Nonetheless, non-myopic acquisition functions can be valuable in certain scenarios, particularly when the experimental budget is extremely limited (i.e., when only a very small number of trials are possible). For a more general and principled perspective on acquisition functions and decision making, we provide a theoretical discussion in Section~\ref{sec:decision:bayesian_decision_theory}.

\section{Bayesian Decision Theory for Optimisation \THEORY}
\label{sec:decision:bayesian_decision_theory}

This section provides the theoretical motivation and justification for the acquisition function.

\subsection{Utility and Bayesian Decision Making}

Given the entire optimisation history $\mathcal{D}_t := \{(x_i, y_i)\}_{i=1}^t$, we define the utility as a function of the $s$-step look-ahead, $u(\mathcal{D}_{t+s})$. For example, a typical utility is the best objective value ever observed after $s$ additional experiments:
\[
u(\mathcal{D}_{t+s}) := \max_{t+1 \leq i \leq t+s} y_i - \max_{1 \leq i \leq t} y_i.
\]
Other possible utility functions may also consider the reduction in model uncertainty (information gain), which is the basis for Bayesian Experimental Design (BED) \cite{rainforthModernBayesianExperimental2023}. Since our aim here is to elucidate the mathematical foundation of acquisition functions, we do not cover the advanced design of utility functions in this tutorial.

Bayesian decision making, given the model, requires us to choose $x_{t+1}$ so as to maximise the expected posterior utility:
\begin{equation}
x_{t+1} = \arg \max_{x \in \mathcal{X}}\mathbb{E}[u(\mathcal{D}_{t+s})|\mathcal{D}_t, x]
\label{eq:decision_making_utiltiy}
\end{equation}
This formulation makes explicit that future experiments are involved and must be predicted using the current model, taking into account both future model adaptation and the subsequent policy. In other words, the optimal decision at each step depends on the expected outcomes of all future decisions and the evolution of the model as new data are collected. This recursive dependency is a hallmark of sequential decision making and highlights the complexity inherent in optimal experimental design.

\subsection{From Utility to Acquisition Function}
From Eq.~\ref{eq:decision_making_utiltiy}, we see that decision making is, at its core, the optimisation of a function, which we denote as
\[
\alpha(x|\mathcal{D}_t) := \mathbb{E}[u(\mathcal{D}_{t+s})|\mathcal{D}_t, x].
\]
This function serves as the acquisition function, encapsulating the expected utility of each possible next action. Thus, the acquisition function is not arbitrary, but is directly motivated by the principle of maximising expected utility in Bayesian decision theory. Moreover, each commonly used acquisition function can be understood as an implicit treatment of a particular choice of utility function, i.e., a concrete way of quantifying the value of future data or decisions.

\paragraph{Myopic Acquisition Function}
If the step look-ahead is $s = 1$, the acquisition function simplifies significantly, as it considers only the immediate effect of the next experiment. This leads to the so-called ``myopic'' acquisition functions, which are widely used in practice due to their computational tractability. Myopic acquisition functions do not require explicit modelling of future decisions or model updates beyond the next experiment, making them both practical and efficient.

\paragraph{Non-myopic Acquisition Function}
If the look-ahead $s > 1$, the acquisition function becomes substantially more challenging to compute, as it involves predicting future observations, updating the model, and accounting for subsequent decision-making behaviour. While non-myopic acquisition functions offer the potential advantage of more globally optimal planning \cite{yueWhyNonmyopicBayesian2020}, their implementation is often computationally intractable, even for a two-step look-ahead \cite{wuPracticalTwoStepLookahead2019}. Various approximate methods have been proposed (e.g., \cite{streltsovNonmyopicUtilityFunction1999,gonzalezGLASSESRelievingMyopia2016}), but these approaches are seldom used in practice due to their complexity and high computational cost. Nonetheless, in scenarios with extremely limited experimental budgets, non-myopic strategies may offer significant benefits by allowing more informed and farsighted planning.

\section{Coding the Acquisition Function \CODE}

\subsection{Specifying Acquisition Functions in BO Software}

The acquisition function (AF) plays a central role in Bayesian Optimisation (BO) algorithms, as it determines the strategy by which the next evaluation points are selected. In this section, we present the sample code to specify an acquisition function in representative BO software.

\subsubsection{Acquisition Functions in HEBO}
\label{sec:decision:coding:acq_impl:HEBO}

In practical software implementations, the choice of acquisition function is typically encapsulated within the optimiser interface, and may be set either by default or explicitly by the user. For example, in the HEBO framework \cite{cowen-riversHEBOPushingLimits2022}, the acquisition function is selected automatically according to the optimiser class, but can also be customised by specifying the acquisition class and its configuration.

\begin{lstlisting}[language=Python]
from hebo.optimizers.bo import BO
from hebo.optimizers.hebo import HEBO

# Construct your DesignSpace (replace with actual space construction)
space = ...  # e.g. DesignSpace().parse(...)

# BO: uses the default acquisition (LCB, lower confidence bound, for minimisation)
opt_bo = BO(space)

# HEBO: high-level optimizer; by default uses MACE, a multi-objective synthesis of EI, PI, and LCB
opt_hebo = HEBO(space)
\end{lstlisting}

The HEBO package also supports customisation of the acquisition function. HEBO supports a variety of acquisition functions, each corresponding to a different strategy for balancing exploration and exploitation. The main acquisition classes available in \texttt{hebo.acquisitions.acq} are as follows:

\begin{itemize}
    \item \texttt{LCB} (Lower Confidence Bound): This is the minimisation-oriented version of the popular UCB acquisition (see Section~\ref{sec:decision:common_acq_choices}). In HEBO, \texttt{LCB} is defined as $\mu(x) - \kappa \sigma(x)$, where $\mu(x)$ and $\sigma(x)$ are the predictive mean and standard deviation at $x$, and $\kappa$ is a trade-off parameter. This encourages exploration of uncertain regions while seeking low objective values.
    \item \texttt{Mean}: Selects points with the lowest predicted mean, i.e., pure exploitation without consideration of uncertainty.
    \item \texttt{Sigma}: Selects points with the highest predictive uncertainty, i.e., pure exploration.
    \item \texttt{EI} (Expected Improvement): Measures the expected improvement (see Section~\ref{sec:decision:common_acq_choices}) over the current best observation, balancing exploitation and exploration in a principled manner.
    \item \texttt{logEI}: The logarithm of the expected improvement, which can be more numerically stable in some cases.
    \item \texttt{WEI} and \texttt{Log\_WEI}: Weighted expected improvement and its logarithmic variant, which can be used for multi-objective or constrained settings.
    \item \texttt{MES} (Max-value Entropy Search): Focuses on reducing the uncertainty about the location of the global minimum \cite{wangMaxvalueEntropySearch2017}.
    \item \texttt{MOMeanSigmaLCB}: A multi-objective acquisition that considers mean, standard deviation, and LCB jointly.
    \item \texttt{MACE}: The default in HEBO, this is a multi-objective acquisition function that synthesises several canonical criteria (EI, PI, LCB) into a unified acquisition, thus balancing multiple exploration--exploitation strategies.
\end{itemize}

It is important to note that in HEBO, all acquisition functions are implemented for minimisation tasks. For example, \texttt{LCB} is the lower confidence bound, as opposed to the upper confidence bound more commonly used for maximisation.

The following code demonstrates how to instantiate HEBO or BO with explicit acquisition classes, including canonical single-objective functions such as EI, and the multi-objective MACE:

\begin{lstlisting}[language=Python]
from hebo.optimizers.bo import BO
from hebo.optimizers.hebo import HEBO
from hebo.acquisitions.acq import LCB, Mean, Sigma, EI, MACE, GeneralAcq

space = ...  # your DesignSpace

# 1) Use LCB (lower confidence bound) with a custom kappa value for exploration-exploitation balance
opt_bo_lcb = BO(space, acq_cls=LCB, acq_conf={'kappa': 3.0})

# 2) Use Mean (pure exploitation) or Sigma (pure exploration)
opt_bo_mean  = BO(space, acq_cls=Mean)
opt_bo_sigma = BO(space, acq_cls=Sigma)

# 3) Use Expected Improvement (EI), a canonical acquisition function
opt_bo_ei = BO(space, acq_cls=EI)

# 4) Use MACE, the multi-objective synthesis acquisition (default for HEBO)
opt_hebo_mace = HEBO(space, acq_cls=MACE)
\end{lstlisting}

\subsubsection{Acquisition Functions in Bgolearn}
\label{sec:decision:coding:acq_impl:bgolearn}
\begin{center}

\CODE\,\url{https://github.com/zwyu-ai/BO-Tutorial-for-Sci/blob/main/coding_illustrations/gp_bo_bgolearn.ipynb}
\end{center}

Bgolearn provides a unified framework for materials discovery, within which acquisition functions can be readily invoked. A simple implementation is shown below:

\begin{lstlisting}[language=Python]
import pandas as pd
data = pd.read_excel('./data/train.xlsx')
vs = pd.read_excel('./data/visual_samples.xlsx')

import Bgolearn.BGOsampling as BGOS
Bgolearn = BGOS.Bgolearn() 

Mymodel = Bgolearn.fit(
    data_matrix=data.iloc[:, :-1],
    Measured_response=data.iloc[:, -1],
    virtual_samples=vs,
)

Mymodel.EI()  # Expected Improvement
# Mymodel.Knowledge_G()  # Knowledge Gradient
# Mymodel.PoI()  # Probability of Improvement
\end{lstlisting}

Different acquisition functions can be accessed by modifying the corresponding method calls. For further details, please refer to the official documentation: \url{https://bgolearn.netlify.app/acquisition_functions}.

\subsection{Implementing Acquisition Functions from Your Surrogate Models}
\label{sec:decision:coding:acq_impl}

In this section, we demonstrate how to implement common acquisition functions---Expected Improvement (EI), Probability of Improvement (PI), and Upper Confidence Bound (UCB)---using the Gaussian Process surrogates constructed in Section~\ref{sec:surrogate:coding:gp_impl}. These acquisition functions are popular to Bayesian optimisation, as discussed mathematically in Section~\ref{sec:decision:common_acq_choices}. The code examples below build directly on the GP implementations using Scikit-Learn and GPyTorch; thus, we focus on the acquisition function logic, referencing the surrogate model code as needed.

\subsubsection{Acquisition Functions for Scikit-Learn Gaussian Process}
\label{sec:decision:coding:acq_impl:sklearn}

\begin{center}
\CODE\,\url{https://github.com/zwyu-ai/BO-Tutorial-for-Sci/blob/main/coding_illustrations/gp_bo_sklearn.ipynb}
\end{center}

Given a fitted \texttt{GaussianProcessRegressor}, the posterior mean $\mu(x)$ and standard deviation $\sigma(x)$ at candidate points $x$ can be obtained using the \texttt{predict} method with \texttt{return\_std=True}. The following code implements UCB, EI, and PI as Python functions, which can be evaluated at any candidate points:

\begin{lstlisting}[language=Python]
import numpy as np
from scipy.stats import norm

def acq_ucb(x, gp, beta=2.0):
    """Upper Confidence Bound acquisition."""
    mu, std = gp.predict(x, return_std=True)
    return mu + beta * std

def acq_ei(x, gp, y_best, xi=0.0):
    """Expected Improvement acquisition."""
    mu, std = gp.predict(x, return_std=True)
    std = std.reshape(-1)
    mu = mu.reshape(-1)
    with np.errstate(divide='warn'):
        imp = mu - y_best - xi
        Z = imp / std
        ei = imp * norm.cdf(Z) + std * norm.pdf(Z)
        ei[std == 0.0] = 0.0
    return ei

def acq_pi(x, gp, y_best, xi=0.0):
    """Probability of Improvement acquisition."""
    mu, std = gp.predict(x, return_std=True)
    std = std.reshape(-1)
    mu = mu.reshape(-1)
    with np.errstate(divide='warn'):
        Z = (mu - y_best - xi) / std
        pi = norm.cdf(Z)
        pi[std == 0.0] = 0.0
    return pi
\end{lstlisting}

The parameter \texttt{y\_best} should be set to the best (e.g., lowest) observed value so far, consistent with a minimisation objective. The optional parameter \texttt{xi} controls the improvement threshold for EI/PI, and \texttt{beta} controls the exploration-exploitation trade-off for UCB.

\subsubsection{Acquisition Functions for GPyTorch Gaussian Process}
\label{sec:decision:coding:acq_impl:gpytorch}

\begin{center}
\CODE\,\url{https://github.com/zwyu-ai/BO-Tutorial-for-Sci/blob/main/coding_illustrations/gp_bo_gpytorch.ipynb}
\end{center}

For GPyTorch models, the workflow is analogous: after switching the model and likelihood to evaluation mode, the predictive mean and variance are computed for candidate points. The acquisition functions can then be implemented as follows:

\begin{lstlisting}[language=Python]
import torch
from scipy.stats import norm

def acq_ucb_torch(x, model, beta=2.0):
    """UCB for GPyTorch (x should be torch tensor, shape [N, D])."""
    model.eval()
    with torch.no_grad():
        pred = model(x)
        mu = pred.mean
        std = pred.stddev
        return mu + beta * std

def acq_ei_torch(x, model, y_best, xi=0.0):
    """EI for GPyTorch (returns numpy array)."""
    model.eval()
    with torch.no_grad():
        pred = model(x)
        mu = pred.mean.cpu().numpy()
        std = pred.stddev.cpu().numpy()
        imp = mu - y_best - xi
        with np.errstate(divide='warn'):
            Z = imp / std
            ei = imp * norm.cdf(Z) + std * norm.pdf(Z)
            ei[std == 0.0] = 0.0
        return ei

def acq_pi_torch(x, model, y_best, xi=0.0):
    """PI for GPyTorch (returns numpy array)."""
    model.eval()
    with torch.no_grad():
        pred = model(x)
        mu = pred.mean.cpu().numpy()
        std = pred.stddev.cpu().numpy()
        with np.errstate(divide='warn'):
            Z = (mu - y_best - xi) / std
            pi = norm.cdf(Z)
            pi[std == 0.0] = 0.0
        return pi
\end{lstlisting}

These implementations are fully compatible with the GPyTorch workflow introduced previously. Note that candidate points \texttt{x} should be supplied as PyTorch tensors, and the acquisition values can be used for selection or visualisation as required.

%% file: v2/algorithm.tex
\part{Algorithmic Workflow and Implementation}
\label{part:alg}

This part is dedicated to a systematic exposition of the workflow, implementation, and practical extensions of Bayesian Optimisation (BO), building upon the foundational concepts established in earlier sections: specifically, the framing of scientific discovery as a black-box optimisation problem, the design and use of surrogate models, and sequential decision-making via acquisition functions.

The organisation of this part follows a logical progression from core algorithmic principles to real-world scientific applications. First, we present a compact yet comprehensive overview of the elementary BO workflow, detailing its iterative loop and key considerations for each sub-module. Next, we focus on the canonical GP-UCB algorithm---a widely studied instantiation of BO---and formalise its theoretical guarantees (i.e., sub-linear regret). Subsequently, we address the distinctive challenges of BO in scientific discovery by reviewing technical extensions tailored to real experimental settings. We then translate these concepts into actionable code, providing implementations using state-of-the-art BO libraries (e.g., HEBO) as well as custom workflows built from scratch with Gaussian process frameworks (Scikit-Learn and GPyTorch). Finally, we experimentally validate the efficacy of BO on four representative scientific-discovery tasks spanning catalysis, materials science, and organic synthesis.

\section{Bayesian Optimisation Algorithm \ELEMENTARY}

Here, we present a compact description of the basic BO workflow. The algorithm receives an optimisation problem, including the objective domain (i.e., the design space) and an objective evaluator (e.g., manual experimentation or an external interface), and returns the best solution found. We first introduce the overall workflow and then discuss key considerations for the involved sub-modules.

\subsection{Algorithmic Workflow}

The Bayesian optimisation workflow proceeds as follows. The algorithm begins by selecting a set of initial observations to construct an initial dataset. At each iteration, the surrogate model is updated with all available data, and the acquisition function is constructed based on the surrogate’s predictions and associated uncertainties. The acquisition function is then maximised to propose the next candidate point, which is subsequently evaluated using the true objective function (e.g., through experiment or simulation). The new observation is added to the dataset, and this process repeats until a stopping criterion is met. Finally, the algorithm recommends a solution based on the accumulated data and/or the surrogate model. This workflow is summarised in Algorithm~\ref{alg:basic_bo}, and the main optimisation loop is illustrated in Figure~\ref{fig:BO_loop}.

\begin{algorithm}
    \caption{Bayesian Optimisation (Basic)}
  \label{alg:basic_bo}
  \begin{algorithmic}[1]
    \REQUIRE Design space \(\mathcal{X}\), objective evaluator $\operatorname{evaluate}(f, x)$
    \STATE Choose a finite subset $\mathbf{x}_{init} = (x_1,\ldots,x_{n\_init}) \subset \mathcal{X}$ for \emph{initial observations}.
    \STATE Evaluate $\mathbf{y}_{init} \gets \operatorname{evaluate}(f, \mathbf{x}_{init})$
    \STATE Construct dataset $\mathcal{D} \gets \{(x_i, y_i)\}_{i=1}^{n\_init}$
    \REPEAT
      \STATE Fit \emph{surrogate model} \(\mathcal{M}(\hat{\mathbf{f}}|\hat{\mathbf{x}}, \mathcal{D})\) from data $\mathcal{D}$
      \STATE Construct \emph{acquisition function} \(\alpha(x\mid\mathcal{D})\) from $\mathcal{D}$ and the surrogate \(\mathcal{M}(\hat{\mathbf{f}}|\hat{\mathbf{x}}, \mathcal{D})\)
    \STATE Select the next point by \emph{maximising the acquisition function}: \(x' \gets \arg\max_{x\in\mathcal{X}} \alpha(x\mid\mathcal{D})\)
      \STATE Evaluate the objective function: \(y' \gets \mathrm{evaluate}(f, x')\)
      \STATE Augment data: \(\mathcal{D} \gets \mathcal{D} \cup \{(x',y')\}\)
    \UNTIL{\emph{Stopping criteria} are satisfied}
    \STATE \emph{Recommended final solution} \(x^*\) from $\mathcal{D}$ and \(\mathcal{M}(\hat{\mathbf{f}}|\hat{\mathbf{x}}, \mathcal{D})\).
    \RETURN \(x^*\)
  \end{algorithmic}
\end{algorithm}

\begin{figure}[tbh]
\centering
\includegraphics[trim={0em 0em 0em 0em}, clip=true, width=0.8\linewidth]{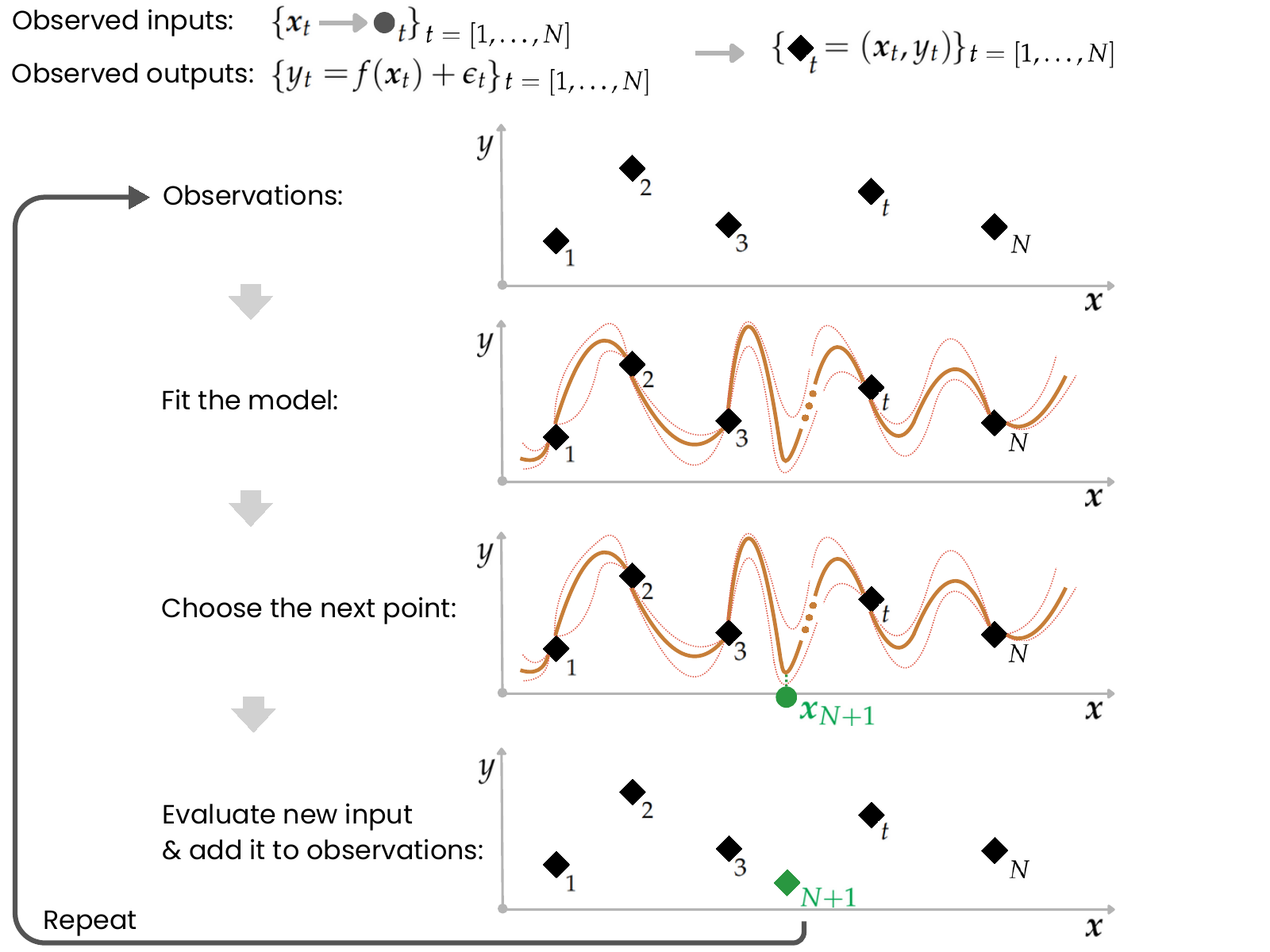}
\caption{\textbf{Bayesian Optimisation Loop:} At round $N$, a surrogate model (e.g., a Gaussian process) is fitted to the observed data points ($\{\boldsymbol{x}_t, y_t\}_{t=1}^{N}$). The next point $\boldsymbol{x}_{N+1}$ is selected by optimising an acquisition function derived from the model posterior. The new observation $y_{N+1}$ is then evaluated and added to the dataset, iteratively refining the model. The solid brown line depicts the mean prediction of the model, while the dashed lines illustrate the associated variance at each input point $\boldsymbol{x}$.}
\label{fig:BO_loop}
\end{figure}

It is important to note that the above describes the basic version of Bayesian optimisation. Numerous extensions exist that modify this workflow, such as batched or parallel acquisition (selecting multiple points per iteration), human-in-the-loop interaction (where humans guide or modify the search), or the use of more sophisticated surrogate models and acquisition strategies. These extensions can further enhance the efficiency and flexibility of Bayesian optimisation in various applications.

\subsection{Consideration of Sub-Modules}

The BO workflow described above comprises several key sub-modules \cite{hoffmanModularMechanismsBayesian2014}. The two central components---the surrogate model and the acquisition function---have been introduced in Part~\ref{part:surrogate} and Part~\ref{part:decision}. In addition, several auxiliary sub-modules are essential for a complete BO implementation, as outlined below.

\paragraph{The strategy for choosing initial observations}
Before entering the main optimisation loop, it is necessary to initialise the dataset with a set of initial observations. This is because the surrogate model requires data to build its initial estimate of the objective function. While it is theoretically possible to start with an empty dataset if the surrogate model has a well-defined prior, this is uncommon in practice. Typically, initial observations are collected to better inform the model and avoid poor early decisions. The simplest strategy is to randomly sample a small number of points from the design space (e.g., uniform sampling). More sophisticated approaches include space-filling designs \cite{jonesEfficientGlobalOptimization1998}, such as Latin hypercube sampling, which aim to cover the design space more evenly. Alternatively, expert or human suggestions can be used to select initial points, especially when prior knowledge about promising regions is available.

\paragraph{The maximisation tool for the acquisition function}
At each iteration, the acquisition function must be maximised to determine the next point to evaluate---this is often referred to as the \emph{inner optimisation problem} \cite{wilsonMaximizingAcquisitionFunctions2018}. The choice of maximisation method depends on the nature of the design space. For small, finite spaces, exhaustive search is feasible and guarantees the optimal value. For low-dimensional continuous spaces, discretising the space into a finite grid can be effective. When the acquisition function is differentiable (as is the case with certain surrogates), gradient-based numerical optimisation methods \cite{nocedalNumericalOptimization2006} (often with multiple restarts to improve global search) can efficiently locate maxima. For more complex or mixed-variable spaces (e.g., combinations of discrete and continuous variables) or for multi-objective acquisition, heuristic search methods such as evolutionary algorithms or random search are commonly employed \cite{cowen-riversHEBOPushingLimits2022}.

\paragraph{The stopping criteria}
The optimisation process must be terminated at an appropriate point, and various stopping criteria can be applied. The simplest and most widely-used approach is to specify a fixed budget, such as a maximum number of iterations or function evaluations. More sophisticated criteria can be designed based on the observed data, the surrogate model, or the acquisition function. For instance, a data-based criterion might monitor the empirical improvement over recent steps and halt if the improvement falls below a threshold, indicating convergence. A model-based criterion could measure the surrogate model’s uncertainty and stop when the model becomes sufficiently confident. Acquisition-based criteria might terminate the loop if the maximum value of the acquisition function drops below a certain threshold, signalling diminishing returns. In more advanced settings, the decision of whether to continue can itself be formalised using Bayesian decision theory \cite{daiBayesianOptimizationMeets2019}.

\paragraph{The final recommendation}
Upon termination of the optimisation loop, the algorithm must recommend a final solution. The most common approach is to select the best observed point in the data (i.e., the input with the highest or lowest observed objective value, depending on the task). Alternatively, the surrogate model can be used to predict the optimum (e.g., the input with the best predicted mean), which may be beneficial if the observations are noisy or sparse.

\section{The GP-UCB Algorithm and Theoretical Guarantees \TECHNICAL\,\THEORY}
\label{sec:alg:gp-ucb}

In this section, we present a canonical and widely studied instance of the Bayesian Optimisation (BO) algorithm: \emph{GP-UCB}. This algorithm employs a Gaussian process (GP) as the surrogate model (see Section~\ref{sec:surrogate:gp}) and the Upper Confidence Bound (UCB) as the acquisition function (see Section~\ref{sec:decision:common_acq_choices}). GP-UCB is not only a standard baseline in practice, but also the subject of extensive theoretical analysis, providing strong regret guarantees for BO~\cite{srinivasInformationTheoreticRegretBounds2012}. We first describe the algorithm and its implementation, then summarise its theoretical guarantees.

\subsection{GP-UCB Algorithm}

The GP-UCB algorithm assumes that the black-box objective function $f(x)$ is drawn from a Gaussian process prior with mean function $m(x)$ and covariance function $k(x, x')$, and that observations are corrupted by independent, identically distributed (i.i.d.) Gaussian noise. That is, for each query $x_t$, we observe $y_t = f(x_t) + \epsilon_t$ with $\epsilon_t \sim \mathcal{N}(0, \sigma^2)$.

Given a history of decisions $\mathbf{x} = (x_1, \ldots, x_t)$ and their corresponding (possibly noisy) observations $\mathbf{y} = (y_1, \ldots, y_t)^\top$, the posterior predictive distribution for a new candidate $\hat{x}$ is
\begin{equation}
\mathcal{M}(\hat{f}|\hat{x}; \mathbf{x}, \mathbf{y})
= \mathcal{N} \left(
m(\hat{x}) + \mathbf{k}_{\hat{x}\mathbf{x}} [\mathbf{K}_{\mathbf{x}\mathbf{x}} + \sigma^2 \mathbf{I}]^{-1} (\mathbf{y} - \mathbf{m}),\;
k(\hat{x}, \hat{x}) - \mathbf{k}_{\hat{x}\mathbf{x}} [\mathbf{K}_{\mathbf{x}\mathbf{x}} + \sigma^2 \mathbf{I}]^{-1} \mathbf{k}_{\mathbf{x}\hat{x}}
\right),
\end{equation}
where:
\begin{itemize}
    \item $\mathbf{k}_{\hat{x}\mathbf{x}} = [k(\hat{x}, x_1), \ldots, k(\hat{x}, x_t)]$ is the row vector of covariances between $\hat{x}$ and observed points, and $\mathbf{k}_{\mathbf{x}\hat{x}}$ is its transpose.
    \item $\mathbf{K}_{\mathbf{x}\mathbf{x}}$ is the $t \times t$ covariance matrix with $[\mathbf{K}_{\mathbf{x}\mathbf{x}}]_{ij} = k(x_i, x_j)$,
    \item $\mathbf{m} = [m(x_1), \ldots, m(x_t)]^\top$.
\end{itemize}
The posterior mean and variance are thus:
\begin{align}
    \mu(\hat{x}|\mathbf{x}, \mathbf{y}) &=  m(\hat{x}) + \mathbf{k}_{\hat{x}\mathbf{x}} [\mathbf{K}_{\mathbf{x}\mathbf{x}} + \sigma^2 \mathbf{I}]^{-1} (\mathbf{y} - \mathbf{m}) \label{eq:gp-ucb:post-mean}\\
    \sigma^2(\hat{x}|\mathbf{x}, \mathbf{y}) &= k(\hat{x}, \hat{x}) - \mathbf{k}_{\hat{x}\mathbf{x}} [\mathbf{K}_{\mathbf{x}\mathbf{x}} + \sigma^2 \mathbf{I}]^{-1} \mathbf{k}_{\mathbf{x}\hat{x}}
    \label{eq:gp-ucb:post-var}
\end{align}

The core idea of GP-UCB is to select, at each iteration, the next point by maximising an upper confidence bound constructed from the GP posterior:
\begin{equation}
    \alpha_{\beta_t}(\hat{x}|\mathbf{x}, \mathbf{y}) := \mu(\hat{x}|\mathbf{x}, \mathbf{y}) + \sqrt{\beta_t\, \sigma^2(\hat{x}|\mathbf{x}, \mathbf{y})},
    \label{eq:gp-ucb:acq}
\end{equation}
where $\beta_t \ge 0$ is an exploration coefficient, which may be fixed or adapted over time. This acquisition function encourages exploration of uncertain regions (via the posterior standard deviation) and exploitation of regions with high predicted mean.

A basic version of the GP-UCB algorithm is summarised in Algorithm~\ref{alg:gp-ucb}. The algorithm is typically initialised with an empty dataset (or a small set of initial observations) and proceeds for a fixed budget $T$ of iterations, with a dynamically chosen $\beta_t$.

\begin{algorithm}[h]
\caption{GP-UCB Algorithm}
\label{alg:gp-ucb}
\begin{algorithmic}[1]
\REQUIRE Input space \( \mathcal{X} \subseteq \mathbb{R}^d \); prior mean function $m$ (often \( m \equiv 0 \)), covariance function $k$, noise variance \( \sigma^2 \); confidence sequence \( \{\beta_t\}_{t \geq 1} \); budget $T$
\STATE Initialize data \( \mathcal{D}_0 = \emptyset \)
\FOR{ \( t = 1, 2, \ldots, T\) }
    \STATE Compute posterior mean $\mu(\hat{x}|\mathbf{x}, \mathbf{y})$ and variance $\sigma^2(\hat{x}|\mathbf{x}, \mathbf{y})$ using Equations~\eqref{eq:gp-ucb:post-mean} and~\eqref{eq:gp-ucb:post-var}
    \STATE Select $x_t = \arg \max_{\hat{x} \in \mathcal{X}} \left(\mu(\hat{x}|\mathbf{x}, \mathbf{y}) + \sqrt{\beta_t\, \sigma^2(\hat{x}|\mathbf{x}, \mathbf{y})}\right)$
    \STATE Observe \( y_t = f(x_t) + \epsilon_t \), with \( \epsilon_t \sim \mathcal{N}(0, \sigma^2) \)
    \STATE Update data: $\mathbf{x} \gets (x_1,\ldots,x_t)$, $\mathbf{y} \gets (y_1, \ldots, y_t)^\top$
\ENDFOR
\RETURN The best observed $x^* = \arg\max_{x_i \in \mathbf{x}} y_i$
\end{algorithmic}
\end{algorithm}

\subsection{Theoretical Guarantee: Regret Bounds}

To assess the performance of BO algorithms, the \emph{cumulative regret} $R_T$ over $T$ steps is a common measure (see Section~\ref{sec:sci_as_opt:theory}). For GP-UCB, strong theoretical guarantees are available: under mild conditions, the cumulative regret grows sublinearly with $T$, meaning that the average regret per step vanishes as $T$ increases. This implies that GP-UCB efficiently identifies near-optimal solutions as more function evaluations are performed.

The regret analysis for GP-UCB is based on information-theoretic arguments. The following two theorems summarise the main results for (i) finite and (ii) compact continuous design spaces. The full derivation is established by \textcite{srinivasInformationTheoreticRegretBounds2012} and is not included in this tutorial.

\begin{theorem}[Regret Bound for Finite Design Spaces]
\label{thm:gp-ucb:regret-finite}
Let \( \mathcal{X} \) be a finite set with cardinality \( |\mathcal{X}| < \infty \). Suppose \( f \sim \mathcal{GP}(m, k) \) and fix confidence level \( \delta \in (0,1) \). Define
\[
\beta_t = 2 \log \left( \frac{|\mathcal{X}| t^2 \pi^2}{6 \delta} \right).
\]
Then, with probability at least \( 1 - \delta \), the cumulative regret of GP-UCB satisfies
\[
R_T \leq \sqrt{C_1 T \beta_T \gamma_T} \quad \forall T \geq 1,
\]
where \( C_1 = \frac{8}{\log(1 + \sigma^{-2})} \) and $\gamma_T$ is the maximum information gain after $T$ steps.
\end{theorem}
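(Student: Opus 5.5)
The plan follows the information-theoretic argument of Srinivas et al., in three stages: a uniform confidence bound, a per-step regret bound, and a link from the sum of posterior variances to the maximum information gain $\gamma_T$.

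\textbf{Step 1: uniform confidence bound.} Write $\mu_{t-1}(x)$ and $\sigma_{t-1}(x)$ for the posterior mean and standard deviation after $t-1$ observations, as in \eqref{eq:gp-ucb:post-mean}--\eqref{eq:gp-ucb:post-var}. Conditional on $\mathcal{D}_{t-1}$, we have $f(x)\sim\mathcal{N}(\mu_{t-1}(x),\sigma^2_{t-1}(x))$, because $f$ is drawn from the GP prior. The Gaussian tail bound $\Pr(|Z|>c)\le e^{-c^2/2}$ then gives
\[
|f(x)-\mu_{t-1}(x)|\le\beta_t^{1/2}\sigma_{t-1}(x)
\]
with failure probability at most $e^{-\beta_t/2}=6\delta/(|\mathcal{X}|\pi^2t^2)$. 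A union bound over $x\in\mathcal{X}$ and $t\ge1$, using $\sum_t t^{-2}=\pi^2/6$, shows that the bound holds for all $x$ and all $t$ simultaneously with probability at least $1-\delta$. Everything below is conditioned on this event.

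\textbf{Step 2: instantaneous regret.} By the choice of $x_t$, the UCB value at $x_t$ is at least the UCB value at $x^*$, which is at least $f(x^*)$ by Step 1. Hence
\[
r_t=f(x^*)-f(x_t)\le \mu_{t-1}(x_t)+\beta_t^{1/2}\sigma_{t-1}(x_t)-f(x_t)\le 2\beta_t^{1/2}\sigma_{t-1}(x_t).
\]
Cauchy--Schwarz and the monotonicity of $\beta_t$ in $t$ then give
\[
R_T^2\le T\sum_{t=1}^T r_t^2\le 4T\beta_T\sum_{t=1}^T\sigma^2_{t-1}(x_t).
\]

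\textbf{Step 3: variances versus information gain.} This is the step I expect to be the main obstacle. First, the information gain of the chosen points decomposes by the chain rule for Gaussian entropies:
\[
I(\mathbf{y}_T;\mathbf{f}_T)=\tfrac12\sum_{t=1}^T\log\bigl(1+\sigma^{-2}\sigma^2_{t-1}(x_t)\bigr).
\]
This follows from $H(y_t\mid\mathcal{D}_{t-1})=\tfrac12\log\bigl(2\pi e(\sigma^2+\sigma_{t-1}^2(x_t))\bigr)$ together with $H(\mathbf{y}_T\mid\mathbf{f}_T)=\tfrac T2\log(2\pi e\sigma^2)$. Second, I need $\sigma_{t-1}^2(x_t)\le k(x_t,x_t)\le 1$, which is the usual normalisation $k(x,x)\le1$ and should be assumed as part of the theorem's setting. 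For $s=\sigma^{-2}\sigma^2_{t-1}(x_t)\in[0,\sigma^{-2}]$, concavity of $s\mapsto\log(1+s)$ gives $s\le \sigma^{-2}\log(1+s)/\log(1+\sigma^{-2})$. Multiplying through by $\sigma^2$,
\[
\sigma^2_{t-1}(x_t)\le \frac{\log(1+\sigma^{-2}\sigma^2_{t-1}(x_t))}{\log(1+\sigma^{-2})}.
\]
Summing over $t$ yields $\sum_t\sigma^2_{t-1}(x_t)\le \frac{2}{\log(1+\sigma^{-2})}I(\mathbf{y}_T;\mathbf{f}_T)\le\frac{2\gamma_T}{\log(1+\sigma^{-2})}$, since $\gamma_T$ is the maximum of this information gain over all choices of $T$ points.

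\textbf{Conclusion.} Substituting the Step 3 bound into the Step 2 inequality gives $R_T^2\le 8T\beta_T\gamma_T/\log(1+\sigma^{-2})=C_1T\beta_T\gamma_T$ for all $T$ simultaneously, on an event of probability at least $1-\delta$. This is exactly the claimed bound.
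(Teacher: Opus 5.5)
Your proof is correct and follows the argument of Srinivas et al.\ (uniform confidence bound, per-step regret bound $2\beta_t^{1/2}\sigma_{t-1}(x_t)$, and variance-to-information-gain lemma); the paper gives no derivation of its own and simply defers to that source. You are also right that the normalisation $k(x,x)\le 1$ is needed for the constant $C_1=8/\log(1+\sigma^{-2})$ to hold. Srinivas et al.\ assume it, but the paper's statement omits it.
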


\begin{theorem}[Regret Bound for Compact Design Spaces]
\label{thm:gp-ucb:regret-compact}
Let \( \mathcal{X} \subseteq [0, r]^d \) be compact and convex, and suppose \( f \sim \mathcal{GP}(m, k) \) with $k$ possessing continuous fourth-order derivatives. For confidence \( \delta \in (0,1) \), set
\[
\beta_t = 2 \log \left( \frac{t^2 2 \pi^2}{3 \delta} \right) + 2 d \log \left( t^2 d b r \sqrt{\log(4 d a / \delta)} \right),
\]
where $a, b > 0$ are constants from the smoothness condition
\[
\mathbb{P}\left\{ \sup_{x \in \mathcal{X}} \left| \frac{\partial f}{\partial x_j} \right| > L \right\} \leq a e^{-(L/b)^2}, \quad j = 1, \ldots, d.
\]
Then, with probability at least \( 1 - \delta \),
\[
R_T \leq \sqrt{C_1 T \beta_T \gamma_T} + 2 \quad \forall T \geq 1,
\]
where \( C_1 \) is as in Theorem~\ref{thm:gp-ucb:regret-finite} and $\gamma_T$ is the maximum information gain.
\end{theorem}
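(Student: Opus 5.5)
The plan is to follow the discretisation argument of \textcite{srinivasInformationTheoreticRegretBounds2012}. We reduce the compact case to a sequence of finite sets and reuse the machinery behind Theorem~\ref{thm:gp-ucb:regret-finite}.

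\textbf{Step 1: confidence at the queried points.} Since $\mu_{t-1}(x_t)$ and $\sigma_{t-1}(x_t)$ are determined by $\mathcal{D}_{t-1}$, we have $f(x_t)\mid\mathcal{D}_{t-1}\sim\mathcal{N}(\mu_{t-1}(x_t),\sigma_{t-1}^2(x_t))$. The Gaussian tail bound $\mathbb{P}(|Z|>c)\le e^{-c^2/2}$ then gives
\[
|f(x_t)-\mu_{t-1}(x_t)|\le\beta_t^{1/2}\sigma_{t-1}(x_t),
\]
with failure probability $e^{-\beta_t/2}$. We require $\sum_t e^{-\beta_t/2}\le\delta/2$. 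The first term of $\beta_t$, namely $2\log(2\pi^2t^2/(3\delta))$, is exactly what is needed to use $\sum_t 1/t^2=\pi^2/6$ with budget $\delta/4$.

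\textbf{Step 2: control the unknown maximiser.} The maximiser $x^*$ is not in any finite set, so we discretise. By the derivative tail assumption and a union bound over the $d$ coordinates, with probability at least $1-da\,e^{-L^2/b^2}$ we have
\[
|f(x)-f(x')|\le L\|x-x'\|_1 \quad \text{for all } x,x'\in\mathcal{X}.
\]
Choosing $L=b\sqrt{\log(4da/\delta)}$ makes this failure probability $\delta/4$. Next, pick a grid $\mathcal{X}_t\subset[0,r]^d$ with $\tau_t$ points per coordinate, so $|\mathcal{X}_t|=\tau_t^d$. Let $[x]_t$ denote the grid point closest to $x$. Then $|f(x)-f([x]_t)|\le Lrd/\tau_t$.

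Set $\tau_t=t^2 d b r\sqrt{\log(4da/\delta)}$, so that $|f(x^*)-f([x^*]_t)|\le 1/t^2$. We apply the finite-set confidence bound on $\mathcal{X}_t$ with budget $\delta/4$. This requires $\beta_t\ge 2\log(|\mathcal{X}_t|\pi_t/\delta')$. Here the factor $|\mathcal{X}_t|=\tau_t^d$ produces precisely the second term $2d\log(\tau_t)$ in the stated $\beta_t$, and the $t^2$ normalisation is shared with the first term. Combining the three events by a union bound gives total failure probability at most $\delta$.

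\textbf{Step 3: bound the instantaneous regret.} On the good event, the choice of $x_t$ as the UCB maximiser over all of $\mathcal{X}$ gives
\[
f(x^*)\le f([x^*]_t)+\tfrac1{t^2}\le\mu_{t-1}([x^*]_t)+\beta_t^{1/2}\sigma_{t-1}([x^*]_t)+\tfrac1{t^2}\le\mu_{t-1}(x_t)+\beta_t^{1/2}\sigma_{t-1}(x_t)+\tfrac1{t^2}.
\]
Hence $r_t\le 2\beta_t^{1/2}\sigma_{t-1}(x_t)+1/t^2$.

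\textbf{Step 4: sum over $t$.} By Cauchy--Schwarz and monotonicity of $\beta_t$,
\[
R_T\le\sqrt{4T\beta_T\textstyle\sum_t\sigma_{t-1}^2(x_t)}+\pi^2/6.
\]
The last term is at most $2$. We then invoke the lemma from the finite case, $\sum_t\sigma_{t-1}^2(x_t)\le\frac{2}{\log(1+\sigma^{-2})}\gamma_T$. This lemma follows from $s\le\frac{\sigma^{-2}}{\log(1+\sigma^{-2})}\log(1+s)$ for $s\in[0,\sigma^{-2}]$ (here $s=\sigma^{-2}\sigma_{t-1}^2(x_t)$, assuming $k(x,x)\le1$). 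It is combined with the identity that the information gain of the queried points equals $\tfrac12\sum_t\log(1+\sigma^{-2}\sigma_{t-1}^2(x_t))$, which is at most $\gamma_T$. Together these give $C_1=8/\log(1+\sigma^{-2})$.

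\textbf{Main obstacle.} I expect the main difficulty to be Step 2. The discretisation must shrink fast enough to give summable $1/t^2$ errors, while the union bound over ever-growing grids must be absorbed into $\beta_t$ with constants matching the stated formula. The role of the fourth-order derivatives of $k$ is only to guarantee sample-path differentiability, so that the derivative tail hypothesis makes sense. I would take that guarantee as given, citing the original work rather than reproving it.
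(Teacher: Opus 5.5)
Your proposal is correct and is essentially the discretisation argument of \textcite{srinivasInformationTheoreticRegretBounds2012}, which is exactly what the paper defers to, since it states the theorem without proof; your probability accounting reproduces the stated $\beta_t$ and $C_1 = 8/\log(1+\sigma^{-2})$ exactly. One detail to tighten: the discretisation must satisfy $\mathcal{X}_t\subseteq\mathcal{X}$, not merely $\mathcal{X}_t\subset[0,r]^d$, because the Lipschitz bound (which relies on convexity of $\mathcal{X}$) and the comparison of UCB values at $[x^*]_t$ and $x_t$ only hold for points of $\mathcal{X}$; if you take one point of $\mathcal{X}$ from each grid cell that meets $\mathcal{X}$, you keep $|\mathcal{X}_t|\le\tau_t^d$ and $\|x-[x]_t\|_1\le rd/\tau_t$, and nothing else in your argument changes.
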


Theorems~\ref{thm:gp-ucb:regret-finite} and~\ref{thm:gp-ucb:regret-compact} show that, with high probability, the cumulative regret $R_T$ of GP-UCB grows at most as $\mathcal{O}(\sqrt{T \beta_T \gamma_T})$. The information-gain term $\gamma_T$ depends on the kernel $k$ and the complexity of the underlying function $f$. For commonly used kernels (e.g., squared exponential), $\gamma_T$ grows sublinearly with $T$, so the average regret per step $R_T / T \to 0$ as $T \to \infty$. This establishes that GP-UCB is \emph{no-regret}: the algorithm eventually identifies near-optimal solutions, and the search becomes increasingly efficient over time.

The choice of the exploration parameter $\beta_t$ plays a crucial role: it must be large enough to ensure sufficient exploration (and thus guarantee the regret bounds), but not so large as to excessively prioritise exploration over exploitation. The theoretically derived schedules for $\beta_t$ (as above) are sufficient for regret guarantees, but in practice, $\beta_t$ is often treated as a tunable hyperparameter and may be set heuristically or via cross-validation.

In summary, GP-UCB provides both a practical and theoretically justified approach to Bayesian Optimisation, balancing exploration and exploitation in a principled manner and enjoying strong guarantees on its optimisation performance.

\section{Review of Technical Extensions for Scientific Discovery \TECHNICAL}

Optimisation problems encountered in scientific discovery are often characterised by diverse challenges, rendering the implementation of Bayesian Optimisation (BO) indirect and non-trivial. In this section, we review technical extensions that are particularly valuable for scientific discovery, focusing on batched decision making, handling heteroscedasticity and non-stationarity, contextual decision making, and integrating human expertise within the optimisation loop.

\subsection{Batched Decision Making for High-Throughput Experiments}

In numerous scientific and engineering applications, evaluating multiple candidate designs in parallel is feasible and, in many cases, desirable. This scenario---referred to as \emph{batched} or \emph{parallel} Bayesian optimisation---naturally arises in high-throughput experimental platforms. By selecting a batch of $q$ points to evaluate in each round, rather than a single point, one can utilise available resources more efficiently and accelerate the overall optimisation process.

The central challenge in batched Bayesian optimisation is to select a set of $q$ candidates, $\{x_{t,1}, \ldots, x_{t,q}\}$, that collectively offer maximal value for the optimisation objective. Ideally, the batch should strike a balance between (i) exploiting the current surrogate model to sample promising candidates, and (ii) exploring uncertain regions to improve the global model. Moreover, the batch should avoid redundant or highly correlated points, ensuring that each experiment yields complementary information.

\paragraph{Batched Acquisition Functions}
A natural approach is to develop a batched acquisition function $\alpha(\hat{\mathbf{x}}|\mathcal{D})$ for an arbitrary batch $\hat{\mathbf{x}} = (\hat{x}_1,\ldots,\hat{x}_q) \in \mathcal{X}^q$ of $q$ decision points. Extending standard acquisition functions to the batched setting is, however, non-trivial, as the value of evaluating a set of points jointly is not, in general, the sum of their individual acquisition values. For instance, the \emph{q-Expected Improvement} ($q$-EI)~\cite{ginsbourgerMultipointsCriterionDeterministic2008, ginsbourgerKrigingWellSuitedParallelize2010} generalises the standard expected improvement to batches by considering the expected improvement from the best value among the $q$ new observations, taking into account their joint distribution under the GP posterior. Nevertheless, $q$-EI becomes intractable for $q > 2$ and typically requires approximate inference \cite{chevalierFastComputationMultiPoints2013}. Other notable batched acquisition functions include the \emph{parallel knowledge gradient}~\cite{wuParallelKnowledgeGradient2016} and \emph{batched entropy search}~\cite{shahParallelPredictiveEntropy2015}, which directly account for the information gain or expected utility from evaluating a set of points together. These approaches are generally more computationally demanding than their sequential counterparts, but can yield more informative batches.

\paragraph{Batching via Acquisition Maximisation}
Alternatively, batching can be achieved by employing a single-point acquisition function, where batching is implemented through maximisation of the acquisition function. One practical approach is to generate a batch by optimising the acquisition function multiple times, each time conditioning on the pending (fantasised) outcomes of previous batch members~\cite{snoek2012practical, gonzalezBatchBayesianOptimization2016}. Recent methods, such as HEBO~\cite{cowen-riversHEBOPushingLimits2022}, use population-based optimisers (e.g., evolutionary algorithms) to maximise the acquisition function over the input space, naturally producing a diverse set of high-acquisition candidates. This approach is particularly well-suited to high-dimensional or mixed-value design spaces.

\subsection{Heteroscedasticity and Non-Stationarity}

Traditional BO often assumes simple i.i.d. Gaussian noise and a stationary covariance for uncertainty estimation, particularly when using a Gaussian process surrogate function. These assumptions are known as \emph{homoscedasticity} and \emph{stationarity}.

\paragraph{Homoscedasticity and Heteroscedasticity}
\emph{Homoscedasticity} denotes that the noise variance in the observations is constant across all input locations. Formally, if $y_i = f(x_i) + \epsilon_i$, then $\operatorname{Var}[\epsilon_i] = \sigma^2$ for all $i$, where $\sigma^2$ is a constant. In contrast, \emph{heteroscedasticity} refers to the case where the noise variance depends on the input, i.e., $\operatorname{Var}[\epsilon_i] = \sigma^2(x_i)$. This implies that some regions of the input space are observed with more (or less) noise than others.

\paragraph{Stationary and Non-Stationary Covariance Functions}
A kernel or covariance function $k(x, x')$ is termed \emph{stationary} if it depends solely on the difference between its inputs, i.e., $x - x'$. This suggests that the statistical properties of the process (such as smoothness or variability) are uniform throughout the input space. Conversely, if the kernel depends on the absolute location or exhibits properties that vary across the input space, it is described as \emph{non-stationary}.

In actual scientific experiments, the assumptions of homoscedasticity and stationarity are often violated. Heteroscedasticity is common; for example, measurement noise may increase at the boundaries of an instrument’s range, or depend on the quality of the sample being measured. Non-stationarity frequently arises when the underlying system exhibits different behaviours in various regions of the input space---such as phase transitions in materials, or abrupt changes in biological response. Neglecting these phenomena can result in inaccurate uncertainty estimates and suboptimal decisions in Bayesian optimisation.

HEBO \cite{cowen-riversHEBOPushingLimits2022} is a state-of-the-art BO algorithm that systematically addresses heteroscedasticity and non-stationarity. HEBO incorporates concepts from warped Gaussian processes \cite{Snelson_warped_GP}, where output transformations help capture complex noise structures. Specifically, HEBO applies well-established output transformations such as Box-Cox \cite{Box_Cox} and Yeo-Johnson \cite{yeo2000new}, together with the Kumaraswamy input transformation \cite{kumaraswamy1980generalized}, achieving an effective balance between ease of implementation and strong empirical performance.

\subsection{Contextual Decision Making}

In many real-world scientific experiments, not all factors influencing the outcome can be directly controlled by the experimenter. Some variables---such as environmental conditions, operator effects, or batch-to-batch variability---may be observed but not manipulated. This scenario is captured by \emph{contextual black-box optimisation}.

\begin{definition}[Contextual Optimisation Problem]
Suppose the objective function depends on two sets of variables: controllable \emph{design} variables $x \in \mathcal{X}$, and uncontrollable but observable \emph{context} variables $c \in \mathcal{C}$. The aim is to find, for each context $c$, a design $x$ that maximises (or minimises) the contextual objective function $f(x, c)$:
\[
x^*(c) = \arg\max_{x \in \mathcal{X}} f(x, c).
\]
In practice, $c$ is revealed at the time of decision-making, and the experimenter chooses $x$ accordingly.
\end{definition}

Contextual optimisation naturally arises in scientific discovery. For example, in materials science, ambient temperature or humidity (context) may fluctuate and affect measurements, but only the synthesis parameters (design) can be set. In biology, patient-specific factors or cell line differences may be observed but not controlled. Properly accounting for such contextual information enables more robust and adaptive experimental strategies.

The general solution to contextual Bayesian optimisation is straightforward: Given context $c$, one first establishes the contextual surrogate model $\mathcal{M}(\hat{\mathbf{f}}|\hat{\mathbf{x}}, c, \mathcal{D})$ and then constructs the contextual acquisition function $\alpha(\hat{x}|\mathcal{D}, c)$ based on this model. \textcite{krauseContextualGaussianProcess2011} provides a theoretical analysis of such contextual extensions to GP-UCB, referred to as CGP-UCB, where C denotes ``contextual''. In this approach, a Gaussian process is fitted directly on the joint space of design-context pairs and then used as the surrogate model by fixing the context. Recently, \textcite{williams2006gaussian} sought to better model the complex synergy between context and decision by aggregating the outputs of a multi-output Gaussian process (MOGP) over $\mathcal{X}$, where the weights of outputs are produced from the context via a neural network.

\subsection{Integrating Human Expertise in the Optimisation Loop}

In many scientific domains, expert knowledge is crucial for guiding experimental design and interpretation. While it is, in principle, possible to encode such expert knowledge into the BO framework---through surrogate model architecture, prior distributions, or initialisation strategies---practical implementation can be extremely challenging and usually requires a strong background in Machine Learning and Statistics. Expert intuition is often tacit, context-dependent, or difficult to formalise mathematically. Consequently, fully automated BO may fail to leverage valuable human insights, or may make decisions that the domain expert finds implausible or untrustworthy. 

In real-world discovery, ``full automation'' may not be a necessity. Instead, it is often reasonable to allow human experts to intervene directly in the optimisation loop \cite{kristiadiHowUsefulIntermittent2024}. This approach maintains scientific trust and harnesses domain expertise flexibly, without requiring all knowledge to be pre-engineered into the algorithm.

There are several ways to integrate human intervention into the BO process. One straightforward approach is to allow the expert to override or reject the candidate design suggested by the algorithm at each iteration. Alternatively, the acquisition function can propose a batch of promising candidates, from which the expert selects the most appropriate one according to their own preferences \cite{savageExpertguidedBayesianOptimisation2023}. Human intervention can be applied at every iteration, only when the model's uncertainty is high, or when the acquisition function is very flat and multiple candidates are equally plausible. This flexibility allows the optimisation process to adapt to the needs of the application and the preferences of the scientific team.

\begin{algorithm}
  \caption{Bayesian Optimisation with Human-in-the-Loop}
  \label{alg:human_bo}
  \begin{algorithmic}[1]
    \REQUIRE Design space \(\mathcal{X}\), objective evaluator $\operatorname{evaluate}(f, \mathbf{x})$:
    \STATE Choose a finite subset $\mathbf{x}_{init} = (x_1,\ldots,x_{n\_init}) \subset \mathcal{X}$ for \emph{initial observations}, \textit{possibly by humans}
    \STATE Evaluate $\mathbf{y}_{init} \gets \operatorname{evaluate}(f, \mathbf{x}_{init})$
    \STATE Construct dataset $\mathcal{D} \gets \{(x_i, y_i)\}_{i=1}^{n\_init}$
    \REPEAT
      \STATE Fit \emph{surrogate model} \(\mathcal{M}(\hat{\mathbf{f}}|\hat{\mathbf{x}}, \mathcal{D})\) from data $\mathcal{D}$
      \STATE Construct \emph{acquisition function} \(\alpha(x\mid\mathcal{D})\) from $\mathcal{D}$ and the surrogate \(\mathcal{M}\)
      \IF{\textit{human intervention is needed}}
          \STATE Propose a set of candidate points $\mathcal{X}_{\text{cand}}$ (e.g., top $k$ by acquisition value)
          \STATE Expert reviews $\mathcal{X}_{\text{cand}}$, and either selects a point $x'$ to evaluate, or overrides/rejects all candidates and suggests an alternative
      \ELSE
        \STATE{$x' \gets \arg \max_{x} \alpha(x\mid\mathcal{D})$}
      \ENDIF
      \STATE Evaluate the objective function: \(y' \gets \mathrm{evaluate}(f, x')\)
      \STATE Augment data: \(\mathcal{D} \gets \mathcal{D} \cup \{(x',y')\}\)
    \UNTIL{\emph{Stopping criteria} are satisfied}
    \STATE \emph{Recommended final solution} \(x^*\) from $\mathcal{D}$ and \(\mathcal{M}\).
    \RETURN \(x^*\)
  \end{algorithmic}
\end{algorithm}

In Algorithm~\ref{alg:human_bo}, we modify the standard BO pseudocode to illustrate how human-in-the-loop decision-making can be incorporated as a simple variant of the basic BO workflow. This human-in-the-loop approach enables the optimisation process to benefit from both the data-driven guidance of BO and the nuanced judgement of domain experts. Such a workflow is extremely straightforward to implement at the code level, especially for natural scientists without a strong background in computer science.

Regarding research advances in human-in-the-loop BO, certain approaches \cite{adachiLoopingHumanCollaborative2024, xuPrincipledBayesianOptimisation2024} employ preference learning or expert belief modelling. Recent studies have also investigated whether human experts can be replaced by large language models \cite{cisseLanguageBasedBayesianOptimization2025}, which is promising for fully-automated knowledge integration.

\section{Coding BO Workflows \CODE}

In this section, we present BO implementations at the code level. First, we show how to implement BO using an existing BO library such as HEBO, where both standard and human-in-the-loop variants are illustrated. Then, we build custom workflows with Gaussian Process surrogates and acquisition functions.

\subsection{Coding the Workflow with HEBO}

In this section, we demonstrate how to implement the Bayesian optimisation (BO) workflow using the HEBO library. HEBO (Heteroscedastic Evolutionary Bayesian Optimisation) is a state-of-the-art BO framework that handles mixed-variable design spaces, supports parallel suggestions, and addresses heteroscedasticity and non-stationarity through input and output transformations \cite{cowen-riversHEBOPushingLimits2022}. We recall Section~\ref{sec:sci_as_opt:coding} and assume the optimisation problem is already defined with a design space and an evaluation interface as follows:
\begin{itemize}
    \item \texttt{space}: an instance of \texttt{DesignSpace} from HEBO, which encapsulates the parameter definitions (continuous, integer, categorical, etc.).
    \item \texttt{evaluate(xs)}: a function that takes a pandas DataFrame of design points (one per row) and returns a NumPy array of corresponding objective values (to be maximised).
\end{itemize}
Because HEBO is designed for minimisation, we maximise by observing the negative of the original objective.

\subsubsection{General Workflow Using HEBO}

The basic BO loop proceeds according to the structure introduced in Algorithm~\ref{alg:basic_bo}. The key components are implemented as follows:
\begin{itemize}
    \item \textbf{Initialisation:} HEBO automatically performs an initial random exploration phase. The number of random samples is controlled by the \texttt{rand\_sample} argument; if not specified, it defaults to \(1 + \text{number of parameters}\). This corresponds to the ``initial observations" step.
    \item \textbf{Surrogate model and acquisition function:} These are internal to HEBO. Each call to \texttt{suggest} fits the surrogate (a Gaussian process by default) to all available data, constructs an acquisition function (e.g., MACE), and maximises it to propose the next candidate(s).
    \item \textbf{Termination:} We use a simple fixed budget \(T\) (number of iterations) as the stopping criterion.
    \item \textbf{Final recommendation:} After the loop, we select the best observed design (i.e., the point with the highest original objective value).
\end{itemize}
The following listing implements this standard workflow.

\begin{lstlisting}[language=Python]
import numpy as np
import pandas as pd
from hebo.design_space.design_space import DesignSpace
from hebo.optimizers.hebo import HEBO

# Assume space and evaluate are defined as described
# space = DesignSpace().parse(...)
# def evaluate(xs: pd.DataFrame) -> np.ndarray: ...

T = 50                                          # total iterations (budget)
opt = HEBO(space, rand_sample=10, scramble_seed=42)   # 10 initial random points

for t in range(T):
    candidates = opt.suggest(n_suggestions=1)   # DataFrame with one row
    y_orig = evaluate(candidates)                # original objective (maximize)
    opt.observe(candidates, -y_orig)             # HEBO minimizes -> observe negative

    best_orig = -np.min(opt.y)                    # best so far (original scale)
    print(f"Iter {t+1}: y = {y_orig[0]:.4f}, best = {best_orig:.4f}")

# Final recommendation
best_idx = np.argmin(opt.y)                       # index of smallest (-y)
best_x = opt.X.iloc[[best_idx]]
best_y = -opt.y[best_idx][0]
print("\nBest design:\n", best_x)
print(f"Best objective: {best_y:.4f}")
\end{lstlisting}

\subsubsection{Human‑in‑the‑Loop Workflow Using HEBO}

In many scientific applications, it is desirable to incorporate human expertise directly into the optimisation loop. A simple yet effective approach is to let the algorithm propose a batch of promising candidates, from which the human expert selects one for actual evaluation. This hybrid workflow retains the data-driven guidance of BO while allowing domain knowledge to override or refine algorithmic suggestions. The same BO components as above are used, but the candidate-selection step is replaced by an interactive human decision. The loop continues until the budget \(T\) is exhausted, and the final recommendation remains the best observed point.

The implementation extends the standard loop by generating a batch of \texttt{batch\_size} candidates, displaying them with their indices, and prompting the user to enter the index of the chosen point. Input validation ensures a valid selection. The selected point is then evaluated and fed back to the optimiser.

\begin{lstlisting}[language=Python]
T = 30
batch_size = 5
opt = HEBO(space, rand_sample=10, scramble_seed=42)

for t in range(T):
    candidates = opt.suggest(n_suggestions=batch_size)
    print("\nCandidates (index):\n", candidates.to_string(index=True))

    # Get valid input from human
    while True:
        try:
            choice = int(input("Select index: "))
            if 0 <= choice < len(candidates):
                break
            else:
                print(f"Index must be between 0 and {len(candidates)-1}.")
        except ValueError:
            print("Invalid input. Please enter an integer.")

    selected = candidates.iloc[[choice]]
    y_orig = evaluate(selected)
    opt.observe(selected, -y_orig)

    best_orig = -np.min(opt.y)
    print(f"Evaluated y = {y_orig[0]:.4f}, best = {best_orig:.4f}")

best_idx = np.argmin(opt.y)
best_x = opt.X.iloc[[best_idx]]
best_y = -opt.y[best_idx][0]
print("\nBest design:\n", best_x)
print(f"Best objective: {best_y:.4f}")
\end{lstlisting}

\subsection{BO Workflow with Bgolearn}
\label{sec:alg:coding:bo_bgolearn}

\begin{center}
\CODE\,\url{https://github.com/zwyu-ai/BO-Tutorial-for-Sci/blob/main/coding_illustrations/gp_bo_bgolearn.ipynb}
\end{center}

Following the implementation of Bayesian optimisation using the HEBO library, we also present \texttt{Bgolearn} \cite{cao2026bgolearn} as a lightweight, science-oriented Bayesian optimisation package designed to streamline the deployment of BO in experimental research. The code example below demonstrates a complete BO workflow for maximising a synthetic two-dimensional test function, which serves as a proxy for real-world black-box scientific objectives. The implementation strictly follows the elementary BO workflow formalised in Algorithm~\ref{alg:basic_bo}:

\begin{lstlisting}[language=Python]
import numpy as np
import pandas as pd
import Bgolearn.BGOsampling as BGOS

# ---------- Test objective (maximisation) ----------
# Synthetic 2D function mimicking a black-box scientific objective
def f_true(X):
    # X: input array with shape (N, 2) for multi-point evaluation
    x1 = X[:, 0]
    x2 = X[:, 1]
    return np.sin(5 * np.pi * x1) * np.cos(5 * np.pi * x2) + \
           0.5 * np.cos(10 * np.pi * x1) * np.sin(10 * np.pi * x2)

# ---------- Initialisation of observational data ----------
# Generate initial random samples (consistent with BO initial observation step)
n_init = 5
X = np.random.rand(n_init, 2)
y = f_true(X)

# Convert raw data to Bgolearn-compatible DataFrame format
data = pd.DataFrame(np.hstack([X, y.reshape(-1, 1)]))

# ---------- Construct virtual sampling space ----------
# Create a dense grid for acquisition function maximisation (low-dimensional space)
grid_size = 100
x1 = np.linspace(0, 1, grid_size)
x2 = np.linspace(0, 1, grid_size)
X1, X2 = np.meshgrid(x1, x2)
# Reshape meshgrid to 2D candidate point array
virtual_samples = np.vstack([X1.ravel(), X2.ravel()]).T

# ---------- Initialise Bgolearn optimiser ----------
Bgolearn = BGOS.Bgolearn()

# ---------- Core Bayesian optimisation loop ----------
for iteration in range(20):
    # Fit surrogate model using existing observational data
    # Input: design variables, measured responses, and candidate sampling space
    model = Bgolearn.fit(
        data_matrix=data.iloc[:, :-1],
        Measured_response=data.iloc[:, -1],
        virtual_samples=virtual_samples
    )

    # Select next experimental point via Expected Improvement (EI) acquisition
    # Bgolearn automatically optimises the acquisition function and returns the optimal point
    _, x_next = model.EI()
    x_next = np.array(x_next).reshape(1, -1)

    # Evaluate the true black-box objective function
    y_next = f_true(x_next).item()

    # Augment the dataset with the new observation
    new_row = np.hstack([x_next.flatten(), y_next])
    data.loc[len(data)] = new_row

    # Track and print the current optimal result
    best_observed_y = data.iloc[:, -1].max()
    print(f"Iteration {iteration+1}: Selected point = [{x_next[0,0]:.4f}, {x_next[0,1]:.4f}], "
          f"Objective value = {y_next:.4f}, Best observed = {best_observed_y:.4f}")

# ---------- Final recommendation of optimal solution ----------
best_index = data.iloc[:, -1].idxmax()
optimal_design = data.iloc[best_index, :-1].values
optimal_objective = data.iloc[best_index, -1]

print(f"\nOptimal solution found: Design = [{optimal_design[0]:.4f}, {optimal_design[1]:.4f}], Objective = {optimal_objective:.4f}")
\end{lstlisting}

This implementation demonstrates how \texttt{Bgolearn} lowers the barrier to entry for Bayesian optimisation in scientific discovery, enabling researchers without extensive machine learning expertise to deploy robust optimisation workflows efficiently.

\subsection{Coding the Workflow from a Customised Surrogate and Acquisition Function}

In this section, we implement a complete Bayesian optimisation (BO) workflow by integrating the Gaussian process (GP) surrogate models from Section~\ref{sec:surrogate:coding:gp_impl} and the acquisition functions from Section~\ref{sec:decision:coding:acq_impl} into a full optimisation loop.

We use the Upper Confidence Bound (UCB) acquisition function as a working example, with the exploration parameter set to \(\beta = 2.0\), and initialise the GP with a Matérn kernel. The same loop structure, however, directly applies to other acquisition functions such as Expected Improvement (EI) and Probability of Improvement (PI). For illustration, we consider a two-dimensional design space \(\mathcal{X} = [0,1]^2\). At each BO iteration, we maximise the acquisition function via a grid search over a dense mesh of candidate points. This strategy is practical for low-dimensional problems and provides a transparent demonstration of the BO workflow.

As a synthetic test problem, we optimise the function
\[
  f(x_1, x_2) = \sin(5\pi x_1) \cos(5\pi x_2) + 0.5 \cos(10\pi x_1) \sin(10\pi x_2),
\]
which is a wavy function with multiple local optima in two dimensions. In a real application, this synthetic objective and the 2D grid would be replaced by the actual black-box objective (e.g., an experimental measurement or simulation) and the true design space (which may be high-dimensional, mixed, or constrained). The overall structure of the loop remains unchanged and follows Algorithm~\ref{alg:basic_bo}:

\begin{itemize}
    \item \textbf{Initialisation:} We randomly sample a few points from the design space to form the initial dataset. Here we use a simple random design; more sophisticated space‑filling designs could be substituted.
    \item \textbf{Surrogate update:} At each iteration, a GP is fitted to all available data.
    \item \textbf{Acquisition maximisation:} We create a fine mesh of candidate points in \([0,1]^2\), compute the UCB value for each, and select the point with the highest acquisition.
    \item \textbf{Evaluation:} The chosen point is evaluated using the true objective function (here a synthetic test function). The new observation is added to the dataset.
    \item \textbf{Termination:} A fixed budget of iterations is used as the stopping criterion.
    \item \textbf{Final recommendation:} After the loop, we return the best observed point (the one with the highest objective value, since we are maximising).
\end{itemize}

\paragraph{Remark on Practical Use.} The implementations in this section are intentionally minimal and serve solely to illustrate the core concepts of Bayesian optimisation with custom surrogate models and acquisition functions. In a real application, the synthetic test function and the two-dimensional grid would be replaced by the actual black-box objective (e.g., an experimental measurement or simulation) and the true design space (which may be high-dimensional, mixed, or constrained). The structure of the loop, however, remains unchanged. Production-grade BO often requires additional features such as parallel (batched) evaluations, support for mixed continuous/discrete spaces, sophisticated acquisition optimisation, and robust handling of noise and non-stationarity. Incorporating these elements from scratch would require a significant amount of code and external libraries. For real-world applications, we recommend leveraging established libraries such as HEBO \cite{cowen-riversHEBOPushingLimits2022}, BoTorch \cite{balandat2020botorch}, or BayesOpt/GPy \cite{nogueiraBayesianOptimizationPython2014}, which encapsulate these complexities and offer well-tested, scalable implementations.

\subsubsection{BO Workflow with Scikit-Learn}

\begin{center}
\CODE\,\url{https://github.com/zwyu-ai/BO-Tutorial-for-Sci/blob/main/coding_illustrations/gp_bo_sklearn.ipynb}
\end{center}

The following code implements a full BO loop using a Scikit-Learn~\cite{scikit-learn} \texttt{GaussianProcessRegressor} and the UCB acquisition function defined earlier in Section~\ref{sec:decision:coding:acq_impl:sklearn}. The kernel and hyperparameters are automatically optimised by Scikit‑Learn’s \texttt{fit} method. The key steps follow Algorithm~\ref{alg:basic_bo}:

\begin{lstlisting}[language=Python]
import numpy as np
from sklearn.gaussian_process import GaussianProcessRegressor
from sklearn.gaussian_process.kernels import Matern, ConstantKernel

# ---------- Test objective (maximisation) ----------
def f_true(X):
    # X: shape (n_points, 2)
    x1 = X[:, 0]
    x2 = X[:, 1]
    return np.sin(5 * np.pi * x1) * np.cos(5 * np.pi * x2) + \
           0.5 * np.cos(10 * np.pi * x1) * np.sin(10 * np.pi * x2)

# ---------- Initialisation ----------
n_init = 5
X_init = np.random.uniform(0, 1, (n_init, 2))
y_init = f_true(X_init).ravel()

X = X_init.copy()
y = y_init.copy()

# --------- The Grid for Acquisition -----------
grid_size = 100   # grid density per dimension for acquisition maximisation
x1_grid = np.linspace(0, 1, grid_size)
x2_grid = np.linspace(0, 1, grid_size)
X_mesh = np.array(np.meshgrid(x1_grid, x2_grid)).reshape(2, -1).T  # shape (grid_size**2, 2)

# ---------- BO loop ----------
for t in range(20):  # A fixed number of iterations after initialisation
    # Fit GP
    kernel = ConstantKernel(1.0) * Matern(length_scale=0.2, nu=2.5)
    gp = GaussianProcessRegressor(kernel=kernel, alpha=1e-6, n_restarts_optimizer=10)
    gp.fit(X, y)

    # Maximise UCB on the 2D grid
    ucb = acq_ucb(X_mesh, gp, beta=2.0)  # Previously defined acquisition function
    x_next = X_mesh[np.argmax(ucb)].reshape(1, -1)

    # Evaluate true function
    y_next = f_true(x_next).ravel()

    # Update data
    X = np.vstack([X, x_next])
    y = np.hstack([y, y_next])

    # Optional: print progress
    best_y = np.max(y)
    print(f"Iter {t+1}: selected x = {x_next[0,0]:.4f},{x_next[0,1]:.4f}, y = {y_next[0]:.4f}, best so far = {best_y:.4f}")

# ---------- Final recommendation ----------
best_idx = np.argmax(y)
x_best = X[best_idx]
y_best = y[best_idx]
print(f"\nBest found: x = [{x_best[0]:.4f}, {x_best[1]:.4f}], y = {y_best:.4f}")
\end{lstlisting}

This workflow directly implements the concepts from the preceding sections. The surrogate model is updated every iteration, the acquisition function is built from its predictions, and the next point is selected by maximising that acquisition. The grid search, while simple, guarantees the global optimum of the acquisition over the discretised space; for higher dimensions more sophisticated optimisers would be required.

\subsubsection{BO Workflow with GPyTorch}
\label{sec:alg:coding:bo_gpytorch}

\begin{center}
\CODE\,\url{https://github.com/zwyu-ai/BO-Tutorial-for-Sci/blob/main/coding_illustrations/gp_bo_gpytorch.ipynb}
\end{center}

For GPyTorch~\cite{gardnerGPyTorchBlackboxMatrixMatrix2018}, the same logical structure applies, but the implementation differs in how the model is defined and trained. The code below demonstrates a complete BO loop using GPyTorch with the UCB acquisition function in two dimensions. For implementation, we use the previously defined \texttt{GPModel} and \texttt{acq\_ucb\_torch} in Sections~\ref{sec:surrogate:coding:gp_impl:gpytorch} and~\ref{sec:decision:coding:acq_impl:gpytorch}, respectively. The GP model and likelihood are instantiated before the optimisation loop, and an inner training loop is included to fit the hyperparameters by maximising the marginal likelihood. Acquisition maximisation again uses a grid, but the grid points must be passed as PyTorch tensors.

\begin{lstlisting}[language=Python]
import torch
import gpytorch
import numpy as np

# ---------- Test objective (maximisation) ----------
def f_true(X):
    # X: shape (..., 2)
    x1 = X[..., 0]
    x2 = X[..., 1]
    return torch.sin(5 * np.pi * x1) * torch.cos(5 * np.pi * x2) + \
           0.5 * torch.cos(10 * np.pi * x1) * torch.sin(10 * np.pi * x2)

# ---------- Initialisation ----------
n_init = 5
X_init = torch.rand(n_init, 2)
y_init = f_true(X_init).squeeze()

X = X_init.clone()
y = y_init.clone()

# ---------- GP model (Use previously defined) ----------
likelihood = gpytorch.likelihoods.GaussianLikelihood()
model = GPModel(X, y, likelihood)
optimizer = torch.optim.Adam(model.parameters(), lr=0.1)
mll = gpytorch.mlls.ExactMarginalLogLikelihood(likelihood, model)

# --------- The Grid for Acquisition -----------
grid_size = 100 
x1_grid = torch.linspace(0, 1, grid_size)
x2_grid = torch.linspace(0, 1, grid_size)
X_mesh = torch.stack(torch.meshgrid(x1_grid, x2_grid, indexing='ij'), dim=-1).reshape(-1, 2)

# ---------- BO loop ----------
for t in range(20):  # A fixed number of iterations after initialisation
    # Inner-training loop for GP hyperparameters
    model.train()
    likelihood.train()
    model.set_train_data(X, y, strict=False)  # Update training data for GP
    for i in range(100):  
        optimizer.zero_grad()
        output = model(X)
        loss = -mll(output, y)
        loss.backward()
        optimizer.step()

    ucb = acq_ucb_torch(X_mesh, model, beta=2.0)  # Use previously defined acquisition function
    x_next = X_mesh[torch.argmax(ucb)].view(1, 2)

    # Evaluate true function
    y_next = f_true(x_next).squeeze()

    # Update data (convert to tensors)
    X = torch.cat([X, x_next])
    y = torch.cat([y, y_next.unsqueeze(0)])

    best_y = torch.max(y).item()
    print(f"Iter {t+1}: selected x = [{x_next[0,0].item():.4f}, {x_next[0,1].item():.4f}], y = {y_next.item():.4f}, best so far = {best_y:.4f}")

# ---------- Final recommendation ----------
best_idx = torch.argmax(y)
x_best = X[best_idx]
y_best = y[best_idx].item()
print(f"\nBest found: x = [{x_best[0].item():.4f}, {x_best[1].item():.4f}], y = {y_best:.4f}")
\end{lstlisting}

This implementation follows the same pattern as the Scikit‑Learn version but leverages PyTorch’s autograd for training and GPU support if available. The model is retrained from scratch at each iteration; in practice one could warm‑start the hyperparameters, but for clarity we keep the loop simple. The acquisition maximisation again uses a dense grid, which is straightforward in 2D.

\section{Integrating BO with Scientific Tools \SCI\,\CODE}

In many scientific domains, optimisation must be performed over non-conventional spaces, such as molecules, which are typically represented by SMILES strings rather than standard numerical vectors. This presents unique challenges for Bayesian Optimisation (BO), as standard surrogate models and acquisition functions are designed for vector spaces. To address this, domain-specific feature extraction is often employed to map complex scientific objects (e.g., molecules, proteins, graphs) into a vector space suitable for surrogate modelling.

\subsection{Example: Molecular Descriptors for QED Optimisation}

Recalling the example in Section~\ref{sec:sci_as_opt:examples:QED}, we consider the problem of molecular optimisation. The search space is restricted to a large but finite set of candidate molecules, each represented by a SMILES string. The objective function is a black-box performance score of each molecule.

As an illustrative example, we consider QED \cite{bickertonQuantifyingChemicalBeauty2012} as the molecular scoring function. We follow the Practical Molecule Optimisation benchmark \cite{gaoSampleEfficiencyMatters2022}, defining $\mathcal{X}$ as a subset of nearly 250,000 molecules represented by SMILES strings extracted from ZINC.

To enable BO in this discrete, structured space, we employ \textit{molecular descriptors} provided by RDKit~\cite{RDKit} to construct a feature map. Specifically, each SMILES string is converted into a vector of chemically meaningful descriptors, including molecular weight, logP, number of hydrogen bond donors/acceptors, topological polar surface area, etc. This feature vector serves as the input to the surrogate model, allowing the GP to operate in a continuous vector space while retaining the underlying chemical information.

Specifically, the BO workflow is adapted as follows:
\begin{itemize}
    \item \textbf{Surrogate Model:} A Gaussian process (GP) is used as the surrogate, but instead of operating directly on SMILES strings, it models the relationship between molecular descriptor vectors and the QED score.
    \item \textbf{Acquisition Function:} The Upper Confidence Bound (UCB) acquisition function is constructed using the GP's posterior mean and variance over the descriptor space.
    \item \textbf{Acquisition Optimisation:} Since the search space is large and discrete, we sample a subset of candidate molecules at each iteration, compute their descriptors, and select the top candidates according to the acquisition function.
    \item \textbf{Batching:} The algorithm supports batch selection by choosing multiple candidates per iteration, which is practical for high-throughput experimental settings.
\end{itemize}

\subsection{Coding Implementation with GPyTorch and RDKit Descriptors}
\label{sec:alg:coding:bo_gpytoch_rdkit}
\begin{center}
\CODE\,\url{https://github.com/zwyu-ai/BO-Tutorial-for-Sci/blob/main/molecule.ipynb}
\end{center}

Customising existing BO libraries for arbitrary scientific spaces can be challenging, as it often requires deep familiarity with the library internals and substantial coding effort. Instead, we demonstrate a flexible and transparent approach by implementing the workflow from scratch using GPyTorch for the Gaussian Process surrogate, and RDKit for feature extraction. Here, each molecule (represented as a SMILES string) is mapped to a vector of chemical descriptors, enabling the GP to operate in a continuous latent space.

\paragraph{Gaussian Process Model}
We have already shown how to use GPyTorch to construct a Gaussian Process in a continuous space in Section~\ref{sec:surrogate:coding:gp_impl:gpytorch}. In this molecular setting, the only change is that the GP is defined on the descriptor feature space, rather than directly on the design variables. The GP model is defined as follows, using a Matérn kernel and a constant mean function:

\begin{lstlisting}[language=Python]
import gpytorch

class GPModel(gpytorch.models.ExactGP):
    def __init__(self, train_x, train_y, likelihood):
        super().__init__(train_x, train_y, likelihood)
        self.mean_module = gpytorch.means.ConstantMean()
        self.covar_module = gpytorch.kernels.ScaleKernel(
            gpytorch.kernels.MaternKernel()
        )

    def forward(self, x):
        mean_x = self.mean_module(x)
        covar_x = self.covar_module(x)
        return gpytorch.distributions.MultivariateNormal(mean_x, covar_x)
\end{lstlisting}

\paragraph{Descriptor Feature Extraction}
We use RDKit to extract a set of chemically meaningful descriptors from each SMILES string. If a SMILES string is invalid, the function returns \texttt{None}, which is handled downstream in the surrogate model.

\begin{lstlisting}[language=Python]
from rdkit import Chem
from rdkit.Chem import Descriptors, Lipinski

descriptor_map = {
    "molecular_weight": Descriptors.MolWt,
    "logp": Descriptors.MolLogP,
    "tpsa": Descriptors.TPSA,
    "num_h_bond_donors": Lipinski.NumHDonors,
    "num_h_bond_acceptors": Lipinski.NumHAcceptors,
    "heavy_atom_count": Descriptors.HeavyAtomCount,
    "num_aromatic_rings": Descriptors.NumAromaticRings,
    "ring_count": Descriptors.RingCount,
    "num_rotatable_bonds": Descriptors.NumRotatableBonds,
    "num_heteroatoms": Descriptors.NumHeteroatoms
}
N_DESCRIPTORS = len(descriptor_map)

def calculate_descriptor_vector(smiles: str) -> list[float]:
    mol = Chem.MolFromSmiles(smiles)
    if mol is None:
        return None
    return [func(mol) for func in descriptor_map.values()]
\end{lstlisting}

\paragraph{Surrogate Model with Descriptor Feature Layer}
The surrogate model wraps the GP and handles the mapping from SMILES to descriptors. Invalid SMILES are assigned a fixed low score and zero uncertainty, ensuring robust handling of noisy or malformed inputs.

\begin{lstlisting}[language=Python]
import torch

class SurrogateModel:
    def __init__(self, invalid_score: float = -1, device="auto"):
        if device == "auto":
            self.device = torch.device("cuda" if torch.cuda.is_available() else "cpu")
        else:
            self.device = torch.device(device)
        self.gp = None
        self.feature = calculate_descriptor_vector
        self.invalid_score = invalid_score
        self._train_x = []
        self._train_y = []

    def add_data(self, smiles: str, y: float):
        vec = self.feature(smiles)
        if vec is not None:
            self._train_x.append(vec)
            self._train_y.append(y)

    def fit(self, grad_step: int = 100):
        device = self.device
        train_x = torch.tensor(self._train_x, device=device, dtype=torch.float32)
        train_y = torch.tensor(self._train_y, device=device, dtype=torch.float32)
        if self.gp is None:
            likelihood = gpytorch.likelihoods.GaussianLikelihood()
            self.gp = GPModel(train_x, train_y, likelihood).to(device)
        else:
            self.gp.set_train_data(train_x, train_y, strict=False)
            likelihood = self.gp.likelihood
        self.gp.train()
        likelihood.train()
        optimizer = torch.optim.Adam(self.gp.parameters(), lr=0.1)
        mll = gpytorch.mlls.ExactMarginalLogLikelihood(likelihood, self.gp)
        for _ in range(grad_step):
            optimizer.zero_grad()
            output = self.gp(train_x)
            loss = -mll(output, train_y)
            loss.backward()
            optimizer.step()

    def __call__(self, smiles_list: list[str]):
        if self.gp is None:
            raise ValueError("Model not trained yet")
        features = torch.zeros((len(smiles_list), N_DESCRIPTORS),
                               device=self.device, dtype=torch.float32)
        invalid = torch.zeros(len(smiles_list), device=self.device, dtype=torch.bool)
        for i, smi in enumerate(smiles_list):
            feature = self.feature(smi)
            if feature is not None:
                features[i] = torch.tensor(feature)
            else:
                invalid[i] = True
        self.gp.eval()
        self.gp.likelihood.eval()
        with torch.no_grad(), gpytorch.settings.fast_pred_var():
            pred = self.gp.likelihood(self.gp(features))
            mu = pred.mean.clone()
            sigma = pred.stddev.clone()
        mu[invalid] = self.invalid_score
        sigma[invalid] = 0.0
        return mu, sigma
\end{lstlisting}

\paragraph{Acquisition Function and Candidate Selection}
Since the design space (all molecules in ZINC) is extremely large and discrete, exhaustive search is infeasible. Instead, at each iteration, we randomly sample a large subset of candidate molecules, compute their descriptors, and select the top-$k$ according to the acquisition function (here, UCB). This approach naturally supports batched acquisition. More advanced alternatives, such as evolutionary search or learned acquisition optimisers, can be considered for further efficiency, but are not included in this tutorial.

\begin{lstlisting}[language=Python]
def acquisition_ucb(smiles_list: list[str], surrogate: SurrogateModel, beta: float = 2.0):
    mu, sigma = surrogate(smiles_list)
    return mu + beta * sigma
\end{lstlisting}

\paragraph{Bayesian Optimisation Loop}
The core BO loop is as follows. At each iteration, the surrogate is fitted, a batch of \texttt{n\_acq} candidates is sampled, the acquisition function is evaluated, and the top-$k$ molecules are selected and evaluated. The best molecule found so far is tracked for reporting.

\begin{lstlisting}[language=Python]
# Assume: search_space, objective, and all parameters are already defined

# Initialisation
surrogate = SurrogateModel(invalid_score=INVALID_SCORE)
xs = random.sample(search_space, n_init)
ys = [objective(smi, INVALID_SCORE) for smi in xs]
for x, y in zip(xs, ys):
    surrogate.add_data(x, y)

# BO loop
for n in range(n_init, total_budget, batch_size):
    surrogate.fit(grad_step=grad_step)
    candidates = random.sample(search_space, n_acq)
    scores = acquisition_ucb(candidates, surrogate, beta=2.0)
    k = min(batch_size, total_budget - n)
    top_k = torch.argsort(scores, descending=True)[:k]
    top_candidates = [candidates[i] for i in top_k.tolist()]
    for x in top_candidates:
        y = objective(x, INVALID_SCORE)
        xs.append(x)
        ys.append(y)
        surrogate.add_data(x, y)

# Final recommendation
best_idx = int(torch.argmax(torch.tensor(ys)))
print(f"Best molecule found: QED = {ys[best_idx]:.4f}, SMILES = {xs[best_idx]}")
\end{lstlisting}

%% file: v2/conclusion.tex
\part{Demonstrative Experiments and Conclusion}

This part builds directly on the theoretical foundations, algorithmic components and implementation workflows of Bayesian optimisation (BO) detailed in the preceding sections, using a rigorous, consistent experimental framework to empirically validate the performance of BO across real-world scientific discovery tasks. Its core purpose is to provide readers with evidence-based, practical insights into how BO performs in diverse research settings, alongside a clear understanding of its practical utility, limitations and broader relevance to scientific inquiry.

In this part, we demonstrate BO as an effective discovery framework through robust and reproducible experiments. Through systematic analysis of results across catalysis, materials science, organic synthesis and molecular design tasks, readers will gain a concrete, empirically grounded understanding of the consistent performance advantage of BO over blind random search in resource-constrained scientific optimisation. They will also learn the core algorithmic design features that drive performance differences between standard BO implementations and state-of-the-art frameworks such as HEBO, and how task characteristics---including constrained design spaces, mixed-variable inputs, high dimensionality and discrete structured search spaces---shape optimisation behaviour.

Finally, via the critical discussion and concluding sections, readers will develop a nuanced awareness of the practical limitations and implementation barriers to BO deployment in real laboratory settings, alongside a deep appreciation of the fundamental alignment between the BO framework and the core principles of the scientific method. The part concludes with a structured vision of the future development and adoption of BO, to contextualise how readers can apply these methods to accelerate their own research, both now and as the field evolves.

\section{Experimental Settings \SCI\,\EXPERIMENT}
\label{sec:exp:settings}

In this part, we systematically evaluate the performance of Bayesian optimisation (BO) across a diverse set of scientific discovery tasks, following a consistent experimental protocol to ensure comparability. We consider two broad categories of optimisation problems: those with a natural mathematical formulation of the design space, comprising numerical or categorical variables, and those defined over non-conventional, structured spaces such as molecules.

\paragraph{Mathematical Design-Space Problems}
We consider four cases that have a design space with a clear mathematical definition, including photocatalytic hydrogen evolution reaction (HER) catalyst design, high-entropy alloy (HEA) nanozyme formulation, oxygen evolution reaction (OER) electrocatalyst design, and Buchwald-Hartwig (BH) organic synthesis yield optimisation. The design spaces of these problems are all mixtures of simple mathematical variables, such as bounded real-value variables or categorical variables.

All experiments above follow a consistent protocol to ensure comparability across cases. In each case, the objective function is evaluated using a \emph{mock oracle}---a predictive model (e.g., random forest) trained on empirical datasets, which allows us to efficiently evaluate optimisation performance without incurring the cost of real-world experiments. For each problem, we evaluate three distinct optimisation strategies: HEBO \cite{cowen-riversHEBOPushingLimits2022}, a state-of-the-art BO framework designed to handle complex design spaces; standard BO with the Lower Confidence Bound (LCB) acquisition function \cite{srinivasInformationTheoreticRegretBounds2012}, using a Gaussian process (GP) as the surrogate model; and random search, a baseline method that samples candidate points uniformly at random from the design space. Each strategy is run for 200 iterations, with 20 initial random samples used to initialise the surrogate model. To ensure statistical robustness, all experiments are repeated with 16 different random seeds, and results are reported as the mean cumulative best value across seeds, with shaded regions indicating the standard error.

For HER, HEA, and BH reactions, the goal is to maximise the objective (the reaction yield or the catalytic efficiency). We assess performance using regret curves for the best solution found at each iteration, defined as the difference between the global optimum and the objective value:
$$
r_t^* := y^\star - \min_{0 \le i \le t} y_i,
$$
where $y_i$ is the objective at the $i$-th iteration and $y^\star$ is the global maximum objective extrapolated from the dataset. For OER, we directly plot the overpotential in millivolts, which we aim to minimise. In all cases, faster declines in these curves indicate better optimisation performance.

\paragraph{The Chemical Molecule Optimisation Problem}
To demonstrate the applicability of BO to general, non-mathematical spaces, We focus on maximising the quantitative estimate of drug-likeness (QED) score of a drug-like molecule within a set of 250,000 discrete choices. Here, we use the ready-made interface \texttt{rdkit.Chem.QED} from RDKit \cite{RDKit} as the oracle of the objective function. The BO implementation is based on RDKit molecular descriptors and GPyTorch, as described in earlier sections of this tutorial. The following parameters are used in the experiment: \texttt{total\_budget = 100}, \texttt{n\_init = 20}, \texttt{batch\_size = 10}, \texttt{n\_acq = 20000}, \texttt{grad\_step = 50}, and \texttt{INVALID\_SCORE = -1}. These control the total number of evaluations, initial random samples, batch size per iteration, number of acquisition candidates, GP training steps, and the penalty for invalid molecules, respectively. We compare the performance of BO (with a GP surrogate and the Upper Confidence Bound (UCB) acquisition function) to random search, with each method run for the fixed budget of evaluations and repeated across 16 random seeds for statistical robustness.

\section{Experimental Results \SCI\,\EXPERIMENT}
\label{sec:exp:results}

The experimental results across all tasks reveal a consistent performance advantage of advanced Bayesian optimisation (BO) frameworks over random search, while standard BO implementations show task-dependent performance: they deliver robust improvements in most scenarios, but underperform in more challenging mixed-variable settings. In this section, we first present the overall trends observed across all case studies, then provide a detailed analysis of the results for each individual task.

\begin{figure}[tbh]
    \centering
    \includegraphics[width=1\linewidth]{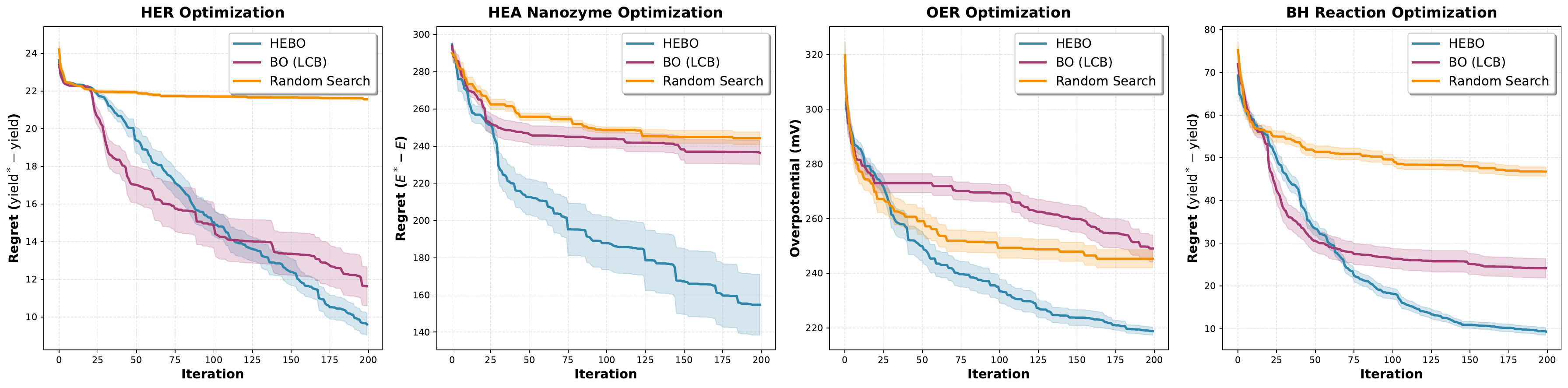}
    \caption{The optimisation curves (the best input found after each iteration) for the problems with a mathematical design space. We compare HEBO with random search and standard BO with LCB acquisition. For reproducibility, each curve averages runs across 16 random seeds, and the shaded area displays the standard error across seeds.}
    \label{fig:opt_curves_all_cases}
\end{figure}

Across the four mathematical design space problems, as shown in Figure \ref{fig:opt_curves_all_cases}, the state-of-the-art HEBO framework consistently outperforms random search across all scenarios. Standard BO with the LCB acquisition function (hereafter BO-LCB) delivers robust improvements over random search in continuous, constrained and high-dimensional continuous tasks, but fails to outperform random search in the mixed-variable OER electrocatalyst design task. For all tasks where BO-based methods outperform random search, the performance gap is particularly pronounced in later iterations, where adaptive BO strategies continue to identify improved solutions while random search stagnates. This confirms the critical advantage of data-driven, adaptive exploration over blind sampling in high-dimensional scientific design spaces, where exhaustive evaluation is computationally or experimentally infeasible, while also highlighting that standard BO implementations require careful tailoring to handle complex, heterogeneous design spaces.

Notably, HEBO consistently achieves lower regret (or lower overpotential) than BO-LCB across all cases after an equivalent number of iterations, and also demonstrates faster convergence in early iterations to identify near-optimal solutions more rapidly. This performance improvement can be attributed to HEBO’s core design features that address key limitations of standard BO-LCB: its support for non-stationary objective functions, robust handling of heteroscedastic noise, and multi-objective acquisition function that balances exploration and exploitation via dynamic weighting of EI, PI and LCB criteria. This early-stage efficiency is particularly valuable for scientific experiments, where each iteration incurs significant experimental cost, and early convergence can deliver substantial savings in time and resources.

\subsection{Photocatalytic HER Catalyst Design}
\label{sec:exp:her}

The photocatalytic HER catalyst design task is defined over a 10-dimensional continuous parameter space. All three optimisation strategies initiate with comparable performance, as they share 20 identical random initial samples.

BO-based strategies rapidly diverge from random search as iterations proceed. HEBO exhibits the fastest initial convergence, reducing regret from approximately 24 to below 15 within the first 50 iterations, and continues to improve steadily to a final regret of 9.2 after 200 iterations. BO-LCB follows a similar trajectory but with marginally slower convergence, reaching a final regret of around 12. In contrast, random search delivers only marginal improvement after the initial sampling phase, stagnating at a final regret of 21.8 for the majority of the optimisation run.

This performance gap translates to substantial practical benefits for experimental workflows: HEBO achieves a final performance more than twice that of random search, and reaches the final performance of random search within the first 30 iterations, corresponding to an 85\% reduction in required experimental effort. The improved convergence of HEBO relative to BO-LCB in this continuous space stems from its ability to model the non-stationary relationship between catalyst parameters and HER performance, alongside its balanced multi-objective acquisition strategy that avoids over-exploration in later optimisation stages.

\subsection{High-Entropy Alloy Nanozyme Formulation}
\label{sec:exp:hea}

The HEA nanozyme formulation task presents a challenging constrained optimisation problem, defined over a bounded simplex design space. Here, the sum of the five elemental ratios must equal 1, with each individual element constrained to the range [0.05, 0.35], creating a narrow feasible region that limits the utility of blind sampling. As introduced in Section~\ref{sec:sci_as_opt:simplex_transform}, we employ a bi-projective transform such that the bounded simplex is mapped to a unit box, where common BO software best applies.

HEBO demonstrates the strongest performance in this constrained setting, navigating the design space efficiently to reduce regret from an initial value of around 290 to a final value of 153 after 200 iterations, with a smooth and consistent convergence profile. BO-LCB also outperforms random search but struggles more with the constrained design space, achieving a final regret of around 238. Random search, by contrast, is highly inefficient in this setting: a large fraction of its random samples fall outside the feasible region, providing no useful information for optimisation, and it stagnates at a final regret of 244 after only minimal improvement beyond the initial samples.

The pronounced performance gap between HEBO and BO-LCB here can be attributed to two key design features. First, HEBO’s native support for non-stationary objectives allows it to adapt to the rapidly changing performance landscape across the simplex constraint boundary. Second, its multi-objective acquisition function balances exploration of the feasible region with exploitation of known high-performance compositions, avoiding the over-exploration of unpromising regions that limits BO-LCB’s convergence. This task highlights the value of advanced BO frameworks that can efficiently handle constrained design spaces, eliminating the waste of experimental resources on infeasible test conditions.

\subsection{OER Electrocatalyst Design}
\label{sec:exp:oer}

The OER electrocatalyst design task is defined over a mixed-variable design space, combining categorical variables (including metal types and support materials) and continuous variables (such as annealing temperature and catalyst loading). This heterogeneous parameter space creates a highly non-stationary performance landscape with discrete jumps between categorical configurations, presenting notable challenges for standard BO implementations optimised for continuous, stationary objective functions.

HEBO achieves the best final performance, reducing the overpotential from an initial value of around 320 mV to a final value of 219 mV after 200 iterations. In stark contrast, standard BO-LCB fails to outperform random search in this task: its convergence curve stagnates above the random search baseline throughout the optimisation run, reaching a final overpotential of approximately 252 mV, worse than the random search final value of 245 mV. Notably, HEBO shows particularly fast early convergence, reaching an overpotential below 250 mV within the first 50 iterations, a milestone that random search never achieves within the 200-iteration budget, and that BO-LCB fails to reach at all.

The underperformance of BO-LCB in this mixed-variable setting can be directly attributed to the limitations of its standard Gaussian process surrogate and fixed LCB acquisition function. The stationary GP kernel used in standard BO-LCB cannot adequately model the discrete, non-stationary shifts in performance across categorical variable choices, leading to poorly calibrated uncertainty estimates and misguided sampling. In turn, the fixed LCB acquisition function over-explores unpromising regions of the design space based on these inaccurate uncertainty estimates, resulting in no net improvement over blind random sampling.

HEBO mitigates these issues via its native support for non-stationary objective functions, which allows it to robustly model performance shifts across categorical and continuous variables, alongside its heteroscedastic noise modelling and dynamic multi-objective acquisition function that balances exploration and exploitation appropriately for this complex landscape. This task demonstrates that while standard BO implementations can fail in heterogeneous mixed-variable settings, advanced BO frameworks retain their efficiency and robustness, making them suitable for the diverse design spaces ubiquitous in real-world scientific research.

\subsection{Buchwald-Hartwig Reaction Yield Optimisation}
\label{sec:exp:bh}

The Buchwald-Hartwig reaction yield optimisation task is a high-dimensional problem, originally defined over 530 DFT-derived features, which we reduce to the top 20 most predictive features for optimisation. Even with this dimensionality reduction, the problem remains challenging due to the size of the search space and the complex, non-stationary relationship between molecular features and reaction yield.

HEBO again delivers the strongest performance, reducing regret from an initial value of around 70 to a final value of 9.9 after 200 iterations, with a consistent and steady convergence rate throughout the optimisation process. BO-LCB follows a similar trajectory but with slower convergence, achieving a final regret of around 25. Random search, by contrast, shows very limited improvement after the initial sampling phase, stagnating at a final regret of 47. This demonstrates that, when combined with simple dimensionality reduction via conventional feature selection, Bayesian optimisation can still be effectively applied to high-dimensional problems.

\subsection{Molecular QED Optimisation}
\label{sec:exp:qed}

\begin{figure}[tbh]
    \centering
    \includegraphics[width=0.5\linewidth]{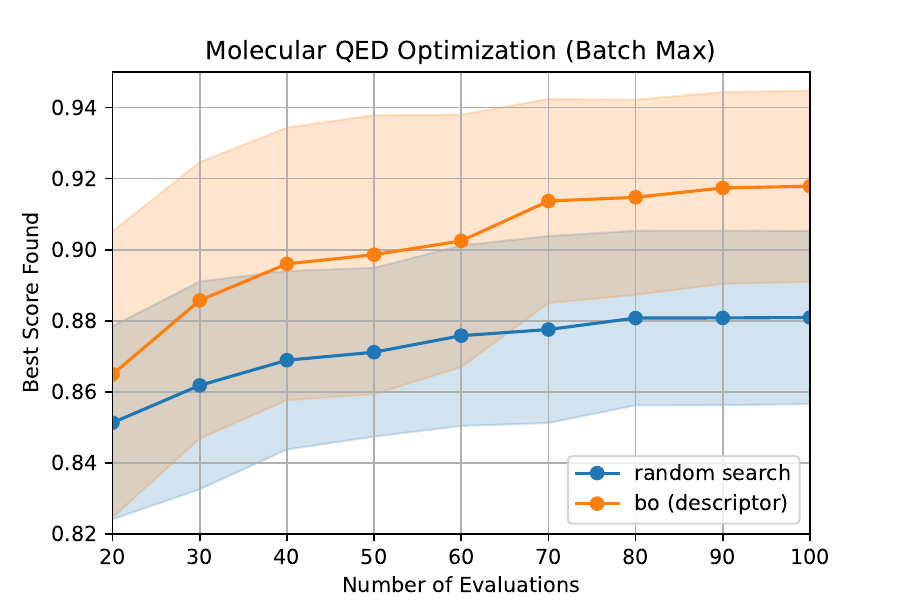}
    \caption{Optimisation curves for molecular QED maximisation using descriptor-based GP-UCB and random search. Each curve shows the mean best QED score found across 16 random seeds; shaded regions indicate standard deviation. BO consistently outperforms random search, demonstrating the effectiveness of integrating BO with domain-specific scientific tools.}
    \label{fig:bo_molecule_qed}
\end{figure}

The molecular QED optimisation task is defined over a discrete, structured space of SMILES strings representing drug-like molecules. To enable Gaussian process-based BO, we use RDKit molecular descriptors to map discrete SMILES strings into a continuous vector space, allowing the surrogate model to leverage chemical similarity between molecules for generalisation.

We compare descriptor-based GP-UCB to random search for this task, with results presented in Figure \ref{fig:bo_molecule_qed}. Both strategies initiate with a comparable mean initial QED score of approximately 0.86 from 20 shared random initial samples. However, the BO-based strategy rapidly outperforms random search, showing a steady increase in the best QED score found to reach a final mean value of 0.918 after 100 iterations. Random search, by contrast, delivers much slower improvement, reaching a final mean QED score of only 0.881. The performance gap is consistent across all iterations, with BO maintaining a clear advantage throughout the optimisation process.

This task highlights the value of combining BO with domain-specific feature engineering for discrete scientific design spaces. By mapping discrete molecular structures into a chemically meaningful continuous space, we enable the GP surrogate model to generalise from known high-QED molecules to promising untested structures, rather than requiring blind sampling in the vast discrete space of possible molecular configurations. This demonstrates the flexibility of the BO framework to adapt to diverse scientific problem types, when paired with appropriate domain-informed preprocessing that aligns the design space with the assumptions of the GP surrogate model.

\section{Discussion \SCI}
\label{sec:exp:discussion}

The experimental results across all five case studies provide strong evidence for the value of Bayesian optimisation in scientific discovery, but also highlight important nuances in the performance of different BO variants across different types of problems. In this section, we discuss the applicability and value of BO for different scientific scenarios and challenges, explain why advanced frameworks like HEBO consistently outperform standard BO, and address the occasional underperformance of standard GP-UCB BO relative to random search in the early stages of high-dimensional problems.

First, the overall consistency of BO's advantage across all tasks demonstrates the broad applicability of the BO framework to scientific discovery. This broad applicability is not coincidental: as we discussed earlier in this tutorial, the iterative BO workflow of surrogate model fitting, acquisition function maximisation, experimental evaluation, and model updating, is a direct computational realisation of the classic Bayesian hypothetico-deductive scientific method. The surrogate model encodes the current state of scientific belief about the system under study, the acquisition function formalises the design of the next experiment to test that belief, and the new experimental data updates the belief. This deep alignment between the BO framework and the fundamental process of scientific inquiry means that BO is not just a useful optimisation tool, but a principled methodology for automating and accelerating the discovery process itself.

However, while the theoretical framework of BO is broadly applicable, the practical implementation of BO for real-world scientific problems does face some important limitations at the software level. For example, the HEA nanozyme formulation task required a non-trivial invertible reparameterisation transform to map the bounded simplex design space into a standard hyper-rectangular space that could be handled by existing BO software. Similarly, the Buchwald-Hartwig reaction task required dimensionality reduction to make the high-dimensional problem tractable, and the molecular QED task required domain-specific feature engineering using RDKit molecular descriptors to map the discrete SMILES strings into a continuous vector space suitable for GP surrogate modelling. These steps are not always straightforward for experimental scientists without a strong background in machine learning or optimisation, highlighting the need for more user-friendly, domain-specialised BO software tools that can handle these transformations automatically, without requiring manual implementation by the user.

The occasional underperformance of standard GP-UCB BO relative to random search highlights a potential pitfall of BO in real-world scenarios, where the standard assumptions of stationarity and homoscedasticity do not hold. In contrast, the consistent superior performance of HEBO across all tasks, relative to both random search and standard BO (LCB), can be attributed to several key features of the HEBO framework that make it particularly well-suited to the complexities of real-world scientific optimisation problems. 

\begin{itemize}
    \item First, HEBO natively supports mixed-variable design spaces, using appropriate kernels for different types of variables (e.g., Matern kernels for continuous variables and specialised Matern kernel with learnable embeddings for categorical variables), which allows it to efficiently model the complex relationships between different types of experimental parameters.
    \item Second, HEBO uses advanced techniques to handle heteroscedastic noise and non-stationary covariance functions, including input and output warping transformations, which make it more robust to the non-ideal conditions often encountered in real laboratory experiments, where measurement noise may vary across the design space and the objective function may not be smooth or stationary.
    \item Third, HEBO uses the MACE acquisition function, a multi-objective synthesis of several canonical acquisition functions including EI, PI, and LCB, which balances exploration and exploitation more robustly than any single acquisition function alone, particularly in complex or noisy scenarios.
\end{itemize}
These features combine to make HEBO a highly robust and efficient framework for scientific optimisation, able to handle the diverse challenges presented by different scientific domains.

\section{Conclusion \ELEMENTARY}
\label{sec:exp:conclusion}

In this tutorial, we have systematically introduced Bayesian optimisation as a principled, efficient, and broadly applicable framework for modern scientific discovery. Our content has spanned from the philosophical foundations of BO, framing scientific inquiry as a black-box optimisation problem that aligns with the Bayesian hypothetico-deductive model of knowledge acquisition, to the core building blocks of BO---surrogate models (notably Gaussian processes), acquisition functions balancing exploration and exploitation, and iterative algorithmic workflows---and finally to hands-on coding implementations with open-source tools and real-world case studies spanning catalysis, materials science, organic synthesis, and molecular design. We have validated the efficacy of BO across these diverse tasks, demonstrating that it consistently outperforms random search and traditional trial-and-error methods, reducing the number of experiments required by 60\%--85\% across all cases, and that advanced BO frameworks like HEBO exhibit superior robustness to the complexities of real-world scientific research.

The scientific significance of the BO framework extends far beyond its utility as an optimisation tool. At its core, BO represents a paradigm shift in how we approach scientific discovery: from intuition-driven, manual trial-and-error to probabilistic, systematic, and increasingly automated exploration of vast scientific search spaces. The iterative BO workflow is not just an algorithm, but a computational crystallisation of the scientific method itself: the surrogate model acts as a probabilistic ``digital twin'' of the experimental system, encoding our evolving state of belief about the underlying natural laws; the acquisition function formalises the design of the next experiment, balancing the exploitation of existing knowledge with the exploration of uncharted domains; and the new experimental data updates our beliefs, refining the model and guiding the next iteration. This alignment means that BO does not replace the role of human scientists, but rather augments their intuition and expertise, allowing them to focus on high-level scientific thinking rather than the tedious mechanics of experimental design and trial-and-error. The automation of the discovery process enabled by BO has the potential to dramatically accelerate the pace of scientific progress, particularly in fields like sustainable energy, biomedicine, and climate science, where the need for rapid discovery is urgent.

While the theoretical framework of BO is broadly applicable to scientific discovery, our tutorial has also highlighted that there are important practical limitations at the software and implementation level that currently hinder its widespread adoption by experimental scientists. Many real-world scientific problems require non-trivial steps such as design space transformation, dimensionality reduction, or domain-specific feature engineering before they can be effectively optimised with existing BO tools, and these steps often require a strong background in machine learning or optimisation that many experimental scientists do not possess. This is where the present tutorial makes a unique contribution: unlike existing BO textbooks and tutorials, which are often targeted primarily at machine learning researchers and focus heavily on mathematical theory, our tutorial is specifically designed for a broad cross-disciplinary audience of scientific researchers. We provide tiered content---elementary conceptual explanations for general readers, practical coding examples for experimental scientists, and rigorous mathematical foundations for method developers---and we focus on the practical challenges of applying BO to real-world scientific problems, including how to formulate a scientific question as an optimisation problem, how to define a design space, how to choose a surrogate model and acquisition function, and how to implement the full workflow in code. By bridging the gap between AI methodological advances and natural science applications, we aim to make principled, efficient experimental design accessible to every scientific researcher.

For different audiences, our tutorial offers distinct value. For experimental scientists in chemistry, materials science, and related fields, we provide an accessible, ready-to-use tool that can be integrated into existing laboratory workflows with minimal programming experience, eliminating the inefficiency of intuitive trial-and-error and ensuring every expensive experiment delivers maximum information and value. For machine learning method developers, we highlight the unique, under-explored challenges of scientific discovery---including the integration of domain-specific prior knowledge, handling of constrained and non-Euclidean design spaces, human-in-the-loop interaction, and robustness to real-world experimental imperfections---that represent high-impact directions for future methodological innovation. For general readers, we offer a window into a new paradigm of scientific research, where probabilistic reasoning and systematic automation are transforming how we explore the natural world.

\subsection{Limitations}

While this tutorial prioritizes HEBO as the primary software implementation (justified by its state-of-the-art performance in mixed-variable optimization), we acknowledge this focus as a deliberate limitation in software coverage. A diverse ecosystem of robust and widely adopted Bayesian optimization (BO) packages exists (e.g. BayesOpt \cite{nogueiraBayesianOptimizationPython2014}, Spearmint \cite{snoekSpearmint2015}, Phoenics, \cite{hasePhoenicsBayesianOptimizer2018}, BoTorch \cite{balandat2020botorch} and Bgolearn \cite{cao2026bgolearn}), each with unique strengths tailored to specific use cases. Furthermore, although BO embodies a natural computational alignment with the hypothetico-deductive scientific method and delivers unparalleled sample efficiency for expensive black-box functions---the dominant scenario in experimental science---it is not universally the optimal algorithmic choice. In settings where the objective function can be evaluated cheaply and rapidly, or where non-black-box information (e.g., gradient signals, known physical constraints, or partial mechanistic understanding) is available, alternative paradigms (e.g., gradient-based methods, evolutionary algorithms, or physics-informed optimization) may offer superior computational efficiency or final performance.

On the surrogate modeling front, our focus on standard Gaussian processes (GPs) entails practical computational constraints: the cubic \(O(n^3)\) complexity of exact GP inference restricts its application to small-to-medium datasets (typically \(n \lesssim 10^3\)), making it poorly suited for large-scale, data-rich scientific scenarios generated by high-throughput experimentation. While we explicitly note this limitation, we do not provide detailed coverage of advanced mitigation strategies, such as sparse GP approximations, variational inference, or scalable non-GP surrogate models (e.g., Bayesian neural networks, random forests, or deep kernel learning), yet these advances are critical for extending BO to high-data regimes.

Finally, many technical extensions of high relevance to scientific discovery, including multi-objective optimization, non-myopic look-ahead acquisition functions, and the deep integration of domain-specific prior knowledge, are only introduced at a conceptual level, with limited discussion of their mathematical foundations or hands-on implementation. These topics represent rich avenues for further exploration beyond the scope of this tutorial.

\subsection{Future Vision}

Looking to the future, we envision a roadmap for the development and adoption of BO in scientific research that spans from near-term practical improvements to long-term paradigm shifts. In the near term, the most urgent priority is lowering the barrier to BO adoption for experimental scientists, through the development of domain-specialised BO toolkits with built-in domain knowledge (e.g., molecular descriptors, material feature engineering, reaction kinetics priors) that eliminate the need for manual feature engineering, and through native integration of BO workflows with common laboratory automation tools, including flow chemistry systems, robotic synthesis platforms, and high-throughput characterisation equipment. Thereby, we may enable a fully closed loop of experimental design, execution, data analysis, and iterative optimisation without human intervention. We also aim to develop practical, user-friendly tools for multi-objective BO, which is critical for real-world scientific research where multiple competing objectives (e.g., catalyst activity, stability, and cost) must be balanced.

In the mid term, we see significant opportunities for methodological innovation to address the limitations of current BO methods in complex scientific settings. This includes 1) the deep integration of first-principles domain knowledge into surrogate models (e.g., physics-informed neural networks, mechanistic priors in Gaussian processes) to further improve sample efficiency in data-scarce scenarios and ensure consistency with fundamental scientific laws; 2) the development of robust human-in-the-loop BO frameworks that allow domain experts to inject constraints, prioritise regions of interest, and validate algorithmic suggestions, while also explaining why a particular experiment is recommended; and 3) the design of BO methods that natively handle non-ideal experimental scenarios, including noisy measurements, failed runs, batch effects, and contextual disturbances.

In the long term, we believe that BO will be a foundational component of a new paradigm of open, collaborative, and autonomous scientific discovery. Combined with large language models for literature mining and knowledge reasoning, and robotic systems for experimental execution, BO will act as the ``experimental design brain'' of end-to-end scientific AI agents. These agents will be able to autonomously formulate scientific hypotheses, design and execute experiments to test them, analyse results, and refine hypotheses, accelerating the pace of discovery in ways not possible with human-only research. We also envision the development of meta-learning BO frameworks that can transfer optimisation knowledge from one scientific domain to another, solving the cold-start problem for novel research questions. Furthermore, we also appeal for a shared, open database of BO experimental results across laboratories, where pre-trained surrogate models can be fine-tuned on new datasets, fostering a positive feedback loop of data, model improvement, and faster discovery.

Finally, we are committed to maintaining this tutorial as a \emph{living document} that evolves alongside advances in BO methodology and scientific infrastructure. Future updates will include additional case studies in diverse fields, such as battery science, synthetic biology, quantum computing, and climate science. In addition, advanced topics such as multi-fidelity BO, multi-task BO, and LLM-augmented BO will be covered to support the practical deployment of BO workflows in real-world laboratory settings. Our ultimate goal is to make principled, efficient experimental design accessible to every scientific researcher, unlocking the full potential of data-driven scientific discovery.